\definecolor{theoremcolor}{rgb}{0.94, 0.97, 1.0}  
\newmdtheoremenv{definition}{Definition} 
\newmdtheoremenv{proposition}{Proposition}
\newmdtheoremenv{corollary}{Corollary} 
\newmdtheoremenv{theorem}{Theorem} 
\newmdtheoremenv{lemma}{Lemma} 
  \definecolor{mydarkblue}{rgb}{0,0.08,0.45}
\DeclareMathOperator*{\argmin}{\mathsf{argmin}}
\DeclareMathOperator*{\argmax}{\mathsf{argmax}}
\DeclareMathOperator*{\conv}{\mathsf{conv}}
\DeclareMathOperator*{\cone}{\mathsf{cone}}
\DeclareMathOperator*{\dom}{\mathsf{dom}}
\DeclareMathOperator*{\interior}{\mathsf{int}}
\DeclareMathOperator*{\relint}{\mathsf{relint}}
\def\prox{{\mathsf{prox}}}
\DeclareMathOperator*{\sigmoid}{\mathsf{sigmoid}}
\DeclareMathOperator*{\softmax}{\mathsf{softmax}}
\DeclareMathOperator*{\sparsemax}{\mathsf{sparsemax}}
\DeclareMathOperator*{\marginals}{\mathsf{marginals}}
\DeclareMathOperator*{\sparsemap}{\mathsf{SparseMAP}}
\DeclareMathOperator*{\map}{\mathsf{MAP}}
\def\prox{\mathsf{prox}}
\def\RR{{\mathbb{R}}}
\def\EE{{\mathbb{E}}}
\def\II{{\mathbb{I}}}
\def\RRY{\RR^{|\cY|}}
\def\triangleY{\triangle^{|\cY|}}
\def\sizeY{{|\cY|}}
\def\cC{{\mathcal{C}}}
\def\cX{{\mathcal{X}}}
\def\cY{{\mathcal{Y}}}
\def\ones{\bm{1}}
\def\zeros{\bm{0}}
\def\e{\bm{e}}
\def\m{{\bm{m}}}
\def\q{\bm{s}}
\def\p{{\bm{p}}}
\def\w{{\bm{w}}}
\def\x{{\bm{x}}}
\def\y{{\bm{y}}}
\def\qtheta{\q_{\s}}
\def\cc{\bm{c}}
\def\balpha{\bm{\alpha}}
\def\bpi{\bm{\pi}}
\def\bpsi{\bm{\psi}}
\def\bmu{\bm{\mu}}
\def\s{\bm{\theta}}
\def\pp{p}
\def\qq{s}
\def\ss{\theta}
\def\yHatOmega{\widehat{\y}_{\Omega}}
\def\yHatPsi{\widehat{\y}_{\Psi}}
\newcommand{\DP}[2]{{\langle #1, #2\rangle}}
\newcommand\logls{logistic\xspace}
\newcommand\wrt{w.r.t.\ }
\def\thresh{\tau}
\def\KL{\mathsf{KL}}
\def\HH{\mathsf{H}}
\def\HHs{\HH^{\textsc{s}}}
\def\HHt{\HH^{\textsc{t}}}
\def\HHr{\HH^{\textsc{r}}}
\def\HHn{\HH^{\textsc{n}}}
\def\HHsq{\HH^{\textsc{sq}}}
\def \mytitle {Learning with Fenchel-Young Losses}
\ShortHeadings{\mytitle}{Blondel, Martins, and Niculae}
\begin{document}

\title{\mytitle}

\author{\name Mathieu Blondel \email mathieu@mblondel.org \\
       \addr NTT Communication Science Laboratories \\
       Kyoto, Japan
       \AND
       \name Andr\'e F. T. Martins \email andre.martins@unbabel.com \\
       \addr Unbabel \& Instituto de Telecomunica\c{c}\~oes \\
    Lisbon, Portugal
       \AND
       \name Vlad Niculae \email vlad@vene.ro \\
       \addr Instituto de Telecomunica\c{c}\~oes \\
    Lisbon, Portugal}

\editor{Sathiya Keerthi}

\maketitle

\begin{abstract}
Over the past decades, numerous loss functions have been been proposed for a
variety of supervised learning tasks, including regression, classification,
ranking, and more generally structured prediction. Understanding the core
principles and theoretical properties underpinning these losses is key to choose the 
right loss for the right problem, as well as to  create new losses which combine their
strengths.  In this paper, we introduce Fenchel-Young losses, a generic way to
\textit{construct} a convex loss function for a regularized prediction function. 
We provide an in-depth study of
their properties in a very broad setting, covering all the aforementioned
supervised learning tasks, and revealing new connections between sparsity, 
generalized entropies, and separation margins. 
We show that Fenchel-Young losses unify many
well-known loss functions and allow to create useful new ones easily.  Finally,
we derive efficient predictive and training algorithms, making Fenchel-Young
losses appealing both in theory and practice.
\end{abstract}

\begin{keywords}
loss functions, output regularization, convex duality, structured prediction
\end{keywords}

\section{Introduction}
\label{sec:intro}

Loss functions are a cornerstone of statistics and machine learning: They
measure the difference, or ``loss,'' between a ground truth and a prediction.
As such, much work has been devoted to designing loss functions for a variety
of supervised learning tasks, including regression \citep{huber_1964},
classification \citep{multiclass_svm}, ranking \citep{joachims_2002} and
structured prediction
\citep{Lafferty2001,structured_perceptron,structured_hinge}, to name only a few
well-known directions.

For the case of probabilistic classification, proper composite loss functions~
\citep{reid_composite_binary,vernet_2016} offer a principled framework unifying
various existing loss functions.  A proper composite loss is the composition of 
a proper loss between two probability distributions, with an invertible mapping 
from real vectors to probability distributions.  The theoretical properties of
proper loss functions, also known as proper scoring rules
(\citet{grunwald_2004}; \citet{gneiting_2007}; and references therein), such as
their Fisher consistency (classification calibration) and correspondence with
Bregman divergences, are now well-understood. However, not all existing losses
are proper composite loss functions; a notable example is the hinge loss used in 
support vector machines. In fact, we shall see that any loss function enjoying a
separation margin, a prevalent concept in statistical learning theory which has
been used to prove the famous perceptron mistake bound \citep{perceptron} and
many other generalization bounds \citep{Vapnik1998,Schoelkopf2002}, cannot be
written in composite proper loss form.

At the same time, loss functions are often intimately related to an underlying
statistical model and prediction function.  For instance, the logistic loss
corresponds to the multinomial distribution and the softmax operator, while the
conditional random field (CRF) loss \citep{Lafferty2001} for structured
prediction is tied with marginal inference \citep{wainwright_2008}. 
Both are instances of generalized linear models \citep{glm,mccullagh_1989},
associated with exponential family distributions.  More recently,
\citet{sparsemax} proposed a new classification loss based on the projection
onto the simplex.  Unlike the logistic loss, this ``sparsemax'' loss induces
probability distributions with \textbf{sparse} support, which is desirable in
some applications for interpretability or computational efficiency reasons.
However, the sparsemax loss was derived in a relatively ad-hoc manner and it is
still relatively poorly understood. Is it one of a kind or can we generalize it
in a principled manner? Thorough understanding of the core
principles underpinning existing losses and their associated predictive model,
potentially enabling the creation of useful new losses, is one of the main
quests of this paper.

\paragraph{This paper.}

The starting point of this paper are the notions of \textbf{output}
regularization and regularized prediction functions, which we use to
provide a variational perspective on many existing prediction 
functions, including the aforementioned softmax, sparsemax
and marginal inference.  Based on simple convex duality
arguments, we then introduce Fenchel-Young losses, a new way to automatically
construct a loss function associated with any regularized prediction function.
As we shall see, our proposal recovers many existing loss functions, which is in
a sense surprising since many of these losses were originally proposed by
independent efforts. Our framework goes beyond the simple probabilistic
classification setting: We show how to create loss functions over various
structured domains, including convex polytopes and convex cones.  Our framework encourages
the loss designer to think \textbf{geometrically} about the outputs desired for
the task at hand.  Once a (regularized) prediction function has been designed,
our framework generates a corresponding loss function automatically. 
We will demonstrate the ease of creating loss functions, including useful new
ones, using abundant examples throughout this paper.

\paragraph{Previous papers.}

This paper builds upon two previously published shorter conference papers. The
first \citep{sparsemap} introduced Fenchel-Young losses in the structured
prediction setting but only provided a limited analysis of their properties.
The second \citep{fy_losses} provided a more in-depth analysis but focused on
unstructured probabilistic classification. This paper provides a comprehensive
study of Fenchel-Young losses across various domains. Besides a much more
thorough treatment of previously covered topics, this paper contributes entirely
new sections, including \S\ref{sec:positive_measures} on losses for positive
measures, \S\ref{sec:training_algorithms} on primal and dual training
algorithms, and \S\ref{sec:loss_fenchel_yougization} on loss
``Fenchel-Youngization''. We provide in \S\ref{sec:structured_prediction} a new
unifying view between structured predictions losses,
and discuss at length various convex polytopes, promoting a
geometric approach to structured prediction loss design; we also 
provide novel results in this section regarding structured separation margins
(Proposition~\ref{prop:structured_margin}), proving the unit margin of the
SparseMAP loss. 
We demonstrate how to use our framework to create useful new losses,
including ranking losses, not covered in the previous two papers. 

\paragraph{Notation.}

We denote the $(d-1)$-dimensional probability simplex by $\triangle^d \coloneqq
\{\p \in \RR_+^d \colon \|\p\|_1 = 1\}$.
We denote the convex hull of a set $\cY$ by $\conv(\cY)$ and the conic hull by
$\cone(\cY)$.
We denote the domain of a function $\Omega \colon \RR^d \rightarrow
\RR\cup\{\infty\}$ by $\dom(\Omega) \coloneqq \{\bmu \in \RR^d \colon
\Omega(\bmu) < \infty\}$. 
We denote the Fenchel conjugate of $\Omega$ by 
$\Omega^*(\s) \coloneqq \sup_{\bmu \in \dom(\Omega)} \DP{\s}{\bmu} -
\Omega(\bmu)$. 
We denote the indicator function of a set $\cC$ by
\begin{equation}
I_\cC(\bmu) \coloneqq
\begin{cases}
    0 & \mbox { if } \bmu \in \cC \\
    \infty & \mbox{ otherwise }
\end{cases}
\label{eq:indicator_function}
\end{equation}
and its support function by $\sigma_\cC(\s) \coloneqq \sup_{\bmu \in \cC}
\DP{\s}{\bmu}$.
We define the proximity operator (a.k.a. proximal operator) of $\Omega$ by
\begin{equation}
    \prox_{\Omega}(\bm{\eta}) \coloneqq
\argmin_{\bmu \in \dom(\Omega)} 
\frac{1}{2} \|\bmu - \bm{\eta} \|^2 + \Omega(\bmu).
\label{eq:prox}
\end{equation}
We denote the interior and relative interior of $\cC$ by $\interior(\cC)$ and
$\relint(\cC)$, respectively.
We denote $[\x]_+ \coloneqq \max(\x, \bm{0})$, evaluated element-wise.

\paragraph{Table of contents.}

\begin{itemize}[topsep=0pt,itemsep=3pt,parsep=3pt,leftmargin=15pt]

\item[\S\ref{sec:intro}] Introduction;

\item[\S\ref{sec:reg_pred}] Regularized prediction functions;

\item[\S\ref{sec:fy_losses}] Fenchel-Young losses;

\item[\S\ref{sec:proba_clf}] Probabilistic prediction with Fenchel-Young losses;

\item[\S\ref{sec:margin}] Separation margin of Fenchel-Young losses;

\item[\S\ref{sec:positive_measures}] Positive measure prediction with
    Fenchel-Young losses;

\item[\S\ref{sec:structured_prediction}] Structured prediction with
    Fenchel-Young losses;

\item[\S\ref{sec:training_algorithms}] Algorithms for learning with
    Fenchel-Young losses;

\item[\S\ref{sec:experiments}] Experiments;

\item[\S\ref{sec:related_work}] Related work.

\end{itemize}

\section{Regularized prediction functions}
\label{sec:reg_pred}

In this section, we introduce the concept of regularized prediction function
(\S\ref{sec:reg_pred_def}), which is central to this paper.  We then give
simple and well-known examples of such functions (\S\ref{sec:reg_pred_examples})
and discuss their properties in a general setting (\S\ref{sec:gradient_mapping},\S\ref{sec:reg_pred_prop}).

\subsection{Definition}
\label{sec:reg_pred_def}

We consider a general predictive setting with input variables $\x \in \cX$, and
a parametrized model $\bm{f}_{W}:\cX \rightarrow \RR^d$ (which could be a linear
model or a neural network), producing a score
vector $\s \coloneqq \bm{f}_{W}(\x) \in \RR^d$. In a simple multi-class
classification setting, the score vector is typically used to pick the
highest-scoring class among $d$ possible ones
\begin{equation}
\widehat{y}(\s) \in \argmax_{j \in [d]} \theta_j.
\label{eq:prediction_multiclass}
\end{equation}
This can be generalized to an arbitrary output space $\cY \subseteq \RR^d$
by using instead
\begin{equation}
\widehat{\y}(\s) \in
\argmax_{\y \in \cY} {\DP{\s}{\y}},
\label{eq:map_oracle}
\end{equation}
where intuitively $\DP{\s}{\y}$ captures the affinity between $\x$ (since $\s$
is produced by $f_W(\x)$) and $\y$.
Therefore, \eqref{eq:map_oracle} seeks the output $\y$ with greatest
affinity with $\x$.
The support function $\sigma_\cY(\s) = \max_{\y \in \cY} {\DP{\s}{\y}} =
\DP{\s}{\widehat{\y}(\s)}$ can be
interpreted as the largest projection of any element of $\cY$ onto the line
generated by $\s$.

Clearly, \eqref{eq:map_oracle} recovers \eqref{eq:prediction_multiclass} with
$\cY = \{\e_1, \dots, \e_d\}$, where $\e_j$ is a standard basis vector,
$\e_j \coloneqq [0, \dots 0, \underbrace{1}_{j}, 0, \dots 0]$.
In this case, the cardinality $\sizeY$ and the dimensionality
$d$ coincide, but this need not be the case in general.
Eq.\ \eqref{eq:map_oracle} is often called a linear maximization oracle or
maximum a-posteriori (MAP) oracle \citep{wainwright_2008}. The latter name comes
from the fact that \eqref{eq:map_oracle} coincides with the mode of the Gibbs
distribution defined by
\begin{equation}
p(\y; \s) \propto \exp \DP{\s}{\y}.
\end{equation}

\paragraph{Prediction over convex hulls.}

We now extend the prediction function \eqref{eq:map_oracle} by replacing $\cY$
with its convex hull $\conv(\cY) \coloneqq \{\EE_\p[Y] \colon \p \in
\triangleY\}$ and introducing a regularization function $\Omega$ into the
optimization problem:
\begin{equation}
\yHatOmega(\s) 
\in \argmax_{\bmu \in \conv(\cY)} {\DP{\s}{\bmu}} - \Omega(\bmu).
\label{eq:prediction_convex_hull}
\end{equation}
We emphasize that the regularization is \wrt predictions (outputs) and not \wrt
model parameters (denoted by $W$ in this paper), as is usually the case in the
literature. We illustrate the regularized prediction function pipeline in Figure
\ref{fig:pipeline}.

Unsurprisingly, the choice $\Omega=0$ recovers the unregularized prediction
function \eqref{eq:map_oracle}.
This follows from the fundamental
theorem of linear programming
\citep[Theorem 6]{dantzig}, which states
that the maximum of a linear form over a
convex polytope is always achieved at one
of its vertices:
\begin{equation}
\widehat{\y}_0(\s) \in 
\argmax_{\bmu \in \conv(\cY)} \DP{\s}{\bmu}
= \argmax_{\y \in \cY} \DP{\s}{\y}.
\end{equation}

\paragraph{Why regularize outputs?}

The regularized prediction function \eqref{eq:prediction} casts computing a
prediction as a
variational problem. It involves an
optimization problem that balances between two terms: an ``affinity'' term
$\DP{\s}{\bmu}$, and a ``confidence'' term $\Omega(\bmu)$ which should be low if
$\bmu$ is ``uncertain.''  Two important classes of convex $\Omega$ are (squared)
norms and, when $\dom(\Omega)$ is the probability simplex, generalized negative
entropies.  However, our framework does \textbf{not} require $\Omega$ to be
convex in general.

Introducing $\Omega(\bmu)$ in \eqref{eq:prediction} tends to move
the prediction away from the vertices of $\conv(\cY)$.  Unless the
regularization term $\Omega(\bmu)$ is negligible compared to the affinity term
$\DP{\s}{\bmu}$, a prediction becomes a convex combination of several vertices. As
we shall see in \S\ref{sec:structured_prediction}, which is dedicated to
structured prediction over convex hulls, we can interpret
this prediction as the mean under some underlying distribution. This
contrasts with \eqref{eq:map_oracle}, which always outputs the most
likely vertex, i.e., the mode.

\begin{figure}[t]
\centering
\includegraphics[width=0.8\linewidth]{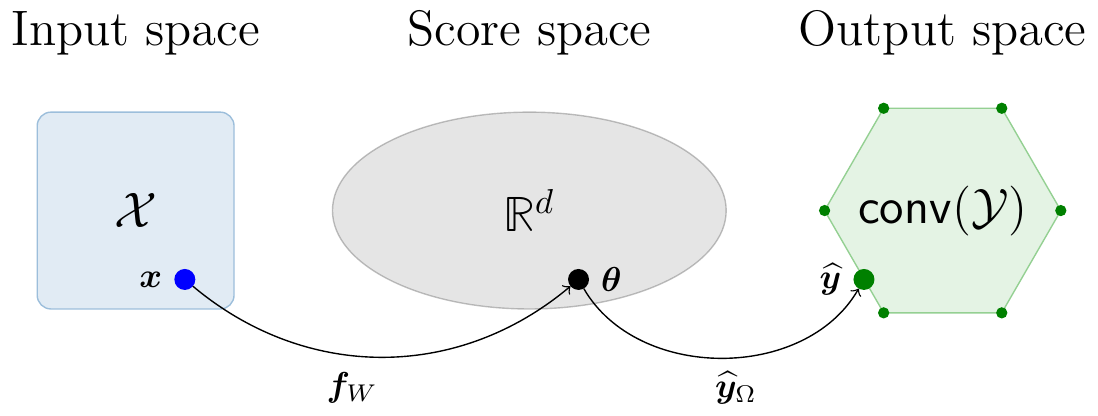}\\
\caption{{\bf Illustration of the proposed regularized prediction framework over
    a convex hull $\conv(\cY)$}. A parametrized model
$\bm{f}_W \colon \cX \to \RR^d$ (linear model, neural network, etc.)
produces a score vector $\s \in \RR^d$.
The regularized prediction function $\yHatOmega$ produces a prediction
$\widehat{\y} \in \conv(\cY)$. Regularized
prediction functions are not limited to convex hulls and can be defined over
arbitrary domains (Def.\ \ref{def:regularized_pred}).}
\label{fig:pipeline}
\end{figure}

\paragraph{Prediction over arbitrary domains.}

Regularized prediction functions are in fact not limited to convex hulls. We now
state their precise definition in complete generality.
\vspace{0.5em}
\begin{definition}{\label{def:regularized_pred}Prediction function regularized by $\Omega$}

Let $\Omega \colon \RR^d \to \RR \cup \{\infty\}$ be a regularization function,
with $\dom(\Omega) \subseteq \RR^d$.  The prediction function regularized by
$\Omega$ is defined by
\begin{equation}
\widehat{\y}_{\Omega}(\s) 
\in \argmax_{\bmu \in
\dom(\Omega)} {\DP{\s}{\bmu}} - \Omega(\bmu).
\label{eq:prediction}
\end{equation}
\end{definition}
Allowing extended-real $\Omega$ permits
general
domain constraints in \eqref{eq:prediction} via indicator functions.
For instance, choosing $\Omega = I_{\conv(\cY)}$,
where $I_\cC$ is the indicator function defined in
\eqref{eq:indicator_function},
recovers the MAP oracle \eqref{eq:map_oracle}.
Importantly, the choice of domain $\dom(\Omega)$ is not limited to convex hulls.
For instance, we will also consider conic hulls, $\cone(\cY)$, later in this
paper.

\paragraph{Choosing $\mathbf{\Omega}$.}

Regularized prediction functions $\yHatOmega$ involve two main design choices:
the domain $\dom(\Omega)$ over which $\Omega$ is defined and $\Omega$ itself.
The choice of $\dom(\Omega)$ is mainly dictated by the type of output
we want from $\yHatOmega$, such as
$\dom(\Omega)=\conv(\cY)$ for convex combinations of elements of $\cY$, and
$\dom(\Omega)=\cone(\cY)$ for conic combinations. The choice of the
regularization $\Omega$ itself further governs certain properties of
$\yHatOmega$, including, as we shall see in the sequel, its sparsity or its use
of prior knowledge regarding the importance or misclassification cost of certain
outputs.
The choices of $\dom(\Omega)$ and $\Omega$ may also be constrained by
computational considerations. Indeed, while computing $\yHatOmega(\s)$ involves
a potentially challenging constrained maximization problem in general, we will
see that certain choices of $\Omega$ lead to closed-form expressions.
The power of our framework is that the user can focus solely on designing and
computing $\yHatOmega$: We will see in \S\ref{sec:fy_losses} how to
automatically construct a loss function associated with $\yHatOmega$.

\subsection{Examples}
\label{sec:reg_pred_examples}

To illustrate regularized prediction functions, we give several concrete
examples enjoying a \textbf{closed-form expression}.

When $\Omega = I_{\triangle^d}$,
$\yHatOmega(\s)$ is a one-hot representation of the argmax
prediction
\begin{equation}
\yHatOmega(\s) \in
\argmax_{\p \in \triangle^d} ~ \langle \s, \p \rangle =
\argmax_{\y \in \{\e_1,\dots,\e_d\}} \langle \s, \y \rangle.
\end{equation}
We can see that output as a probability distribution that assigns all probability
mass on the same class.
When $\Omega = -\HHs + I_{\triangle^d}$, where
$\HHs(\p) \coloneqq -\sum_i p_i \log p_i$ is Shannon's entropy,
$\yHatOmega(\s)$ is the well-known softmax
\begin{equation}
\yHatOmega(\s)
= \softmax(\s) \coloneqq
\frac{\exp(\s)}{\sum_{j=1}^d \exp(\ss_j)}.
\label{eq:softmax}
\end{equation}
See \citet[Ex.\ 3.25]{boyd_book} for a derivation. The resulting distribution
always has \textbf{dense} support.
When $\Omega=\frac{1}{2} \|\cdot\|^2 + I_{\triangle^d}$, 
$\yHatOmega$ is the
Euclidean projection onto the probability simplex
\begin{equation}
\yHatOmega(\s) = \sparsemax(\s) \coloneqq
\argmin_{\p \in \triangle^d} \|\p - \s\|^2,
\label{eq:sparsemax}
\end{equation}
a.k.a.\ the sparsemax transformation \citep{sparsemax}.
It is well-known that
\begin{equation}
\argmin_{\p \in \triangle^d} \|\p - \s\|^2
= [\s - \tau \ones]_+,
\label{eq:sparsemax_thresholded}
\end{equation}
for some threshold $\tau \in \RR$.
Hence, the predicted distribution can have \textbf{sparse} support (it may
assign exactly zero probability to low-scoring classes). The threshold $\tau$
can be computed \textbf{exactly} in $O(d)$ time
\citep{Brucker1984,duchi,Condat2016}.

The regularized prediction function paradigm is, however, not limited to the
probability simplex: When $\Omega(\p) = -\sum_i \HHs([p_i,1-p_i]) +
I_{[0,1]^d}(\p)$, we get
\begin{equation}
\yHatOmega(\s) = \sigmoid(\s)
\coloneqq \frac{\ones}{\ones + \exp(-\s)},
\label{eq:sigmoid}
\end{equation}
i.e., the sigmoid function evaluated coordinate-wise. We can think of its
output as a positive measure (unnormalized probability distribution).

We will see in \S\ref{sec:proba_clf} that the first three examples (argmax,
softmax and sparsemax) are particular instances of a broader family of
prediction functions, using the notion of \textbf{generalized entropy}. The last
example is a special case of regularized prediction function over positive
measures, developed in \S\ref{sec:positive_measures}.  Regularized prediction
functions also encompass more complex convex polytopes for structured
prediction, as we shall see in \S\ref{sec:structured_prediction}.

\begin{figure}[t]
\centering
\includegraphics[scale=3.0]{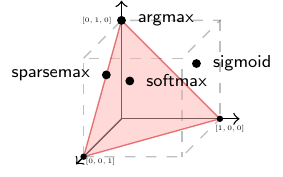}
\caption{{\bf Examples of regularized prediction functions}. The unregularized
$\argmax$ prediction always hits a vertex of the probability simplex,
leading to a probability distribution that puts all probability mass on the same
class. Unlike, $\softmax$ which
always occurs in the relative interior of the simplex and thus leads to a dense
distribution, $\sparsemax$ (Euclidean
projection onto the probability simplex) may hit the boundary, leading
to a \textbf{sparse} probability distribution. We also display the $\sigmoid$
operator which lies in the unit cube and is thus
not guaranteed to output a valid probability distribution.}
\label{fig:simplex}
\end{figure}

\subsection{Gradient mapping and dual objective}
\label{sec:gradient_mapping}

From Danskin's theorem
\citep{danskin_theorem} (see also \citet[Proposition B.25]{bertsekas_book})
$\yHatOmega(\s)$ is a subgradient of $\Omega^*$ at $\s$, i.e.,
$\yHatOmega(\s) \in \partial \Omega^*(\s)$.
If, furthermore, $\Omega$ is strictly convex,
then $\yHatOmega(\s)$ is the gradient of $\Omega^*$ at $\s$, i.e.,
$\yHatOmega(\s) = \nabla \Omega^*(\s)$.
This interpretation of $\yHatOmega(\s)$ as a (sub)gradient mapping
will play a crucial role in the next section for deriving a
loss function associated with $\yHatOmega(\s)$.

Viewing $\yHatOmega(\s)$ as the (sub)gradient of $\Omega^*(\s)$ is also
useful to derive the dual of \eqref{eq:prediction}.
Let $\Omega \coloneqq \Psi + \Phi$. It is well-known
\citep{borwein_2010,beck_2012} that
    \begin{equation}
        \Omega^*(\s) =
        (\Psi + \Phi)^*(\s) =
        \inf_{\bm{u} \in \RR^d} \Phi^*(\bm{u}) + \Psi^*(\s - \bm{u})
        \eqqcolon (\Phi^* \square \Psi^*)(\s),
        \label{eq:inf_convolution}
    \end{equation}
where $f \square g$ denotes the infimal convolution of $f$ with $g$.
Furthermore, from Danskin's theorem, $\yHatOmega(\s) = 
\nabla \Psi^*(\s - \bm{u}^\star)$, where $\bm{u}^\star$ denotes an optimal
solution of the infimum in \eqref{eq:inf_convolution}.
We can think of that infimum as the dual
of the optimization problem in \eqref{eq:prediction}.
When $\Psi = \Psi^* = \frac{1}{2} \|\cdot\|^2$,
$\Omega^*(\s)$ is known as the Moreau envelope of $\Phi^*$
\citep{moreau_proximite_1965} and
using Moreau's decomposition, we obtain 
$\yHatOmega(\s) = \s - \prox_{\Phi^*}(\s) = \prox_\Phi(\s)$.
As another example, when $\Omega = \Psi + I_\cC$, we obtain
\begin{equation}
\Omega^*(\s) = \inf_{\bm{u} \in \RR^d} \sigma_\cC(\bm{u}) + \Psi^*(\s -
\bm{u}), 
\end{equation}
where we used $I_\cC^* = \sigma_\cC$, the support function of $\cC$.
In particular, when $\cC = \conv(\cY)$, we have $\sigma_\cC(\bm{u}) = \max_{\y
\in \cY} \DP{\bm{u}}{\y}$. This dual view is informative insofar as it suggests
that regularized prediction functions $\yHatOmega(\s)$ with $\Omega = \Psi +
I_\cC$ minimize a trade-off between maximizing the value achieved
by the unregularized prediction function $\sigma_\cC(\bm{u})$, and a proximity
term $\Psi^*(\s - \bm{u})$. 

\subsection{Properties}
\label{sec:reg_pred_prop}

We now discuss simple yet useful properties of regularized prediction functions.
The first two assume that $\Omega$ is a symmetric function, i.e.,
that it satisfies
\begin{equation}
\Omega(\bmu)=\Omega(\bm{P}\bmu)
\quad \forall \bmu \in \dom(\Omega), \forall \bm{P} \in \mathcal{P},
\end{equation}
where $\mathcal{P}$ is the set of $d \times d$ permutation matrices.

\vspace{0.5em}
\begin{proposition}{Properties of regularized prediction functions
    $\yHatOmega(\s)$}

\begin{enumerate}[topsep=0pt,itemsep=3pt,parsep=3pt,leftmargin=15pt]

\item {\bf Effect of a permutation.} If $\Omega$ is symmetric, then
$\forall \bm{P} \in \mathcal{P}$:
    $\yHatOmega(\bm{P} \s) = \bm{P} \yHatOmega(\s)$.

\item {\bf Order preservation.}
 Let $\bmu = \yHatOmega(\s)$. If $\Omega$ is symmetric, then the coordinates of
    $\bmu$ and $\s$ are sorted the same way, i.e., $\ss_i > \ss_j \Rightarrow
    \mu_i \ge \mu_j$ and $\mu_i > \mu_j \Rightarrow \ss_i > \ss_j$.

\item {\bf Approximation error.} 
    Assume $\mathcal{Y} \subseteq \dom(\Omega)$.
    If $\Omega$ is $\gamma$-strongly convex 
    and bounded with
    $L \le \Omega(\bmu) \le U$ for all $\bmu \in \dom(\Omega)$,
    then
    $\frac{1}{2} \|\widehat{\y}(\s) - \yHatOmega(\s)\|^2 \le
    \frac{U-L}{\gamma}$.

\item {\bf Temperature scaling.} For any constant $t > 0$,
$\widehat{\y}_{t \Omega}(\s) \in \partial \Omega^*(\nicefrac{\s}{t})$.
If $\Omega$ is strictly convex,
$\widehat{\y}_{t \Omega}(\s) = \yHatOmega(\nicefrac{\s}{t}) = \nabla
\Omega^*(\nicefrac{\s}{t})$.

\item {\bf Constant invariance.} For any constant $c \in \RR$, 
    $\widehat{\y}_{\Omega + c}(\s) = \yHatOmega(\s; \y)$.

\end{enumerate}

\label{prop:prediction_func}
\end{proposition}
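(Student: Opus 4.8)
The plan is to dispatch the five items with three tools: an exchange (coordinate-swap) argument for the symmetry claims (1--2), a single strong-concavity estimate for the approximation bound (3), and elementary rescaling of the objective for the invariance claims (4--5). Throughout I write $\bmu^\star \coloneqq \yHatOmega(\s)$ for a maximizer in \eqref{eq:prediction} and abbreviate the objective as $g_\s(\bmu) \coloneqq \DP{\s}{\bmu} - \Omega(\bmu)$.

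For item 1, I would use that every $\bm{P} \in \mathcal{P}$ is orthogonal, so $\DP{\bm{P}\s}{\bmu} = \DP{\s}{\bm{P}^\top \bmu}$, and that symmetry makes both $\Omega$ and $\dom(\Omega)$ invariant under $\bm{P}^\top$ (itself a permutation). Hence $g_{\bm{P}\s}(\bmu) = \DP{\s}{\bm{P}^\top\bmu} - \Omega(\bm{P}^\top\bmu) = g_\s(\bm{P}^\top\bmu)$, and the bijective substitution $\bnu = \bm{P}^\top\bmu$ shows the maximizer at $\bm{P}\s$ is $\bm{P}$ applied to the maximizer at $\s$, i.e.\ $\yHatOmega(\bm{P}\s) = \bm{P}\,\yHatOmega(\s)$. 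Item 2 specializes this to a transposition $\bm{P}_{ij}$: for $\bmu = \yHatOmega(\s)$, the swapped point $\bmu' = \bm{P}_{ij}\bmu$ is feasible with $\Omega(\bmu') = \Omega(\bmu)$, and a direct computation gives $g_\s(\bmu') - g_\s(\bmu) = (\ss_i - \ss_j)(\mu_j - \mu_i)$. Optimality of $\bmu$ forces this to be $\le 0$; if $\ss_i > \ss_j$ then $\mu_j - \mu_i \le 0$, i.e.\ $\mu_i \ge \mu_j$, which is the first implication. The near-converse runs identically, and I expect the strict inequality $\mu_i > \mu_j \Rightarrow \ss_i > \ss_j$ to be the one delicate point: the exchange inequality alone yields only $\ss_i \ge \ss_j$, and ruling out the tie $\ss_i = \ss_j$ (which would make $\bmu$ and $\bmu'$ two distinct maximizers) requires uniqueness of the $\argmax$, i.e.\ strict convexity of $\Omega$.

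For item 3, write $\y^\star \coloneqq \widehat{\y}(\s) \in \cY \subseteq \dom(\Omega)$. Since $\Omega$ is $\gamma$-strongly convex, $g_\s$ is $\gamma$-strongly concave, and first-order optimality of $\bmu^\star$ over $\dom(\Omega)$ gives $g_\s(\bmu^\star) - g_\s(\y^\star) \ge \tfrac{\gamma}{2}\|\bmu^\star - \y^\star\|^2$. Expanding the left-hand side as $\DP{\s}{\bmu^\star - \y^\star} + \Omega(\y^\star) - \Omega(\bmu^\star)$, the affinity term is $\le 0$ because $\y^\star$ maximizes the linear form over $\conv(\cY)$, which contains $\bmu^\star$, while the regularizer term is $\le U - L$ by boundedness; combining yields $\tfrac{\gamma}{2}\|\bmu^\star - \y^\star\|^2 \le U - L$, as claimed. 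The point to state carefully here is precisely the membership $\bmu^\star \in \conv(\cY)$ (i.e.\ that $\dom(\Omega) = \conv(\cY)$), since that is what makes $\DP{\s}{\bmu^\star - \y^\star} \le 0$ rather than uncontrolled.

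Items 4 and 5 are immediate from invariance of the $\argmax$ under affine reparametrization of the objective. For item 4, since $t > 0$, $\argmax_\bmu \DP{\s}{\bmu} - t\Omega(\bmu) = \argmax_\bmu \DP{\s/t}{\bmu} - \Omega(\bmu)$, so $\widehat{\y}_{t\Omega}(\s) = \yHatOmega(\s/t)$, and the statement $\widehat{\y}_{t\Omega}(\s) \in \partial\Omega^*(\s/t)$ (with equality $\nabla\Omega^*(\s/t)$ under strict convexity) is then just the gradient-mapping identity of \S\ref{sec:gradient_mapping}. For item 5, adding a constant $c$ leaves $\dom(\Omega)$ unchanged and shifts the objective by $-c$, which cannot move the maximizer, so $\widehat{\y}_{\Omega+c}(\s) = \yHatOmega(\s)$. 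In short, the only genuine obstacle in the whole proposition is the tie-breaking in the strict half of item 2; everything else collapses to the exchange identity, one strong-concavity inequality, or a rescaling.
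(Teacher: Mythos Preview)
Your proposal is correct and matches the paper on items 1, 3, 4, and 5 almost verbatim (same change of variable, same strong-concavity estimate, same rescaling observations). For item 2 there is a small but worthwhile difference. You obtain the inequality $(\ss_i-\ss_j)(\mu_i-\mu_j)\ge 0$ by a direct swap in the primal objective; the paper instead invokes monotonicity of the gradient map $\nabla\Omega^*$ applied to $\s$ and $\bm{P}_{ij}\s$, which yields the same inequality but is a convex-analysis fact rather than an elementary exchange. For the strict half, the paper does not introduce strict convexity as a separate hypothesis but simply reuses item 1 with the transposition: if $\ss_i=\ss_j$ then $\bm{P}_{ij}\s=\s$, and equivariance of $\yHatOmega$ forces $\bm{P}_{ij}\bmu=\bmu$, hence $\mu_i=\mu_j$. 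This is slicker than flagging uniqueness as an obstacle, though of course item 1 as a functional identity (and the paper's use of $\nabla\Omega^*$ throughout) already presupposes a single-valued $\yHatOmega$, so the underlying requirement you identified is the same.
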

The proof is given in Appendix
\ref{appendix:proof_prediction_func}.

For classification, the order-preservation property ensures that the
highest-scoring class according to $\s$ and $\yHatOmega(\s)$ agree 
with each other:
\begin{equation}
\argmax_{i \in [d]} \ss_i = \argmax_{i \in [d]} \left(\yHatOmega(\s)\right)_i.
\end{equation}
Temperature scaling is useful to control how close we are to unregularized
prediction functions. Clearly, $\widehat{\y}_{t\Omega}(\s) \to \widehat{\y}(\s)$
as $t \to 0$, where $\widehat{\y}(\s)$ is defined in \eqref{eq:map_oracle}.

\section{Fenchel-Young losses}
\label{sec:fy_losses}

In the previous section, we introduced regularized prediction functions
over arbitrary domains, as
a generalization of classical (unregularized) decision functions.
In this section, we introduce Fenchel-Young losses for learning
models whose output layer is a regularized prediction function.
We first give their definitions and state their properties
(\S\ref{sec:fy_def_prop}). We then discuss
their relationship with Bregman divergences (\S\ref{sec:relation_Bregman}) and
their Bayes risk (\S\ref{sec:Bregman_info}). Finally, we
show how to construct a cost-sensitive loss from any Fenchel-Young loss
(\S\ref{sec:cost_sensitive}).

\subsection{Definition and properties}
\label{sec:fy_def_prop}

Given a regularized prediction function $\yHatOmega$, we define its
associated loss as follows.
\vspace{0.5em}
\begin{definition}{\label{def:FY_loss}Fenchel-Young loss generated by $\Omega$}

Let $\Omega \colon \RR^d \to \RR \cup \{\infty\}$ be a regularization
function such that the maximum in
\eqref{eq:prediction} is achieved for all $\s \in \RR^d$.
Let $\y \in \cY
\subseteq \dom(\Omega)$ be a ground-truth label and $\s \in \dom(\Omega^*)
= \RR^d$ be a vector of prediction scores. 

The {\bf Fenchel-Young loss} $L_\Omega \colon \dom(\Omega^*)
\times \dom(\Omega) \to \RR_+$ generated by $\Omega$ is
\begin{equation}
L_{\Omega}(\s; \y) 
\coloneqq \Omega^*(\s) + \Omega(\y) - \DP{\s}{\y}.
\label{eq:fy_losses}
\end{equation}
\end{definition}
It is easy to see that Fenchel-Young losses can be rewritten as
\begin{equation}
    L_{\Omega}(\s; \y) = f_{\s}(\y) - f_{\s}(\yHatOmega(\s)),
\end{equation}
where $f_{\s}(\bmu) \coloneqq \Omega(\bmu) - \DP{\s}{\bmu}$,
highlighting the relation with regularized prediction functions.
Therefore, as long as we can compute a regularized prediction function
$\yHatOmega(\s)$, we can automatically obtain an associated Fenchel-Young loss
$L_\Omega(\s; \y)$.
Conversely, we also have that $\yHatOmega$ outputs the prediction minimizing the
loss:
\begin{equation}
    \yHatOmega(\s) \in \argmin_{\bmu \in \dom(\Omega)} L_\Omega(\s; \bmu).
\end{equation}
Examples of existing losses that fall into the Fenchel-Young loss family are
given in Table~\ref{tab:fy_losses_examples}. Some of these examples will be
discussed in more details in the sequel of this paper.
Note that we will allow $\Omega$ in some cases to depend on the ground-truth
$\y$. Since we do not know $\y$ at test time, this requires us to use another
prediction function as a replacement for the regularized prediction function.
As we explain in \S\ref{sec:cost_sensitive}, this discrepancy is well motivated
and allows us to express popular cost-sensitive losses, such as the structured
hinge loss.

\begin{table}[t]
    \caption{{\bf Examples of regularized prediction functions and their corresponding
        Fenchel-Young losses.}  For multi-class classification, we assume
    $\cY=\{\e_i\}_{i=1}^d$ and the ground-truth is $\y=\e_k$, where $\e_i$
    denotes a standard basis (``one-hot'') vector.  For structured
    classification, we assume that elements of $\cY$ are $d$-dimensional binary
    vectors with $d \ll \sizeY$, and we denote by $\conv(\cY) = \{\EE_\p[Y]
    \colon \p \in \triangleY\}$ the corresponding marginal polytope
\citep{wainwright_2008}. We denote by $\HHs(\p) \coloneqq -\sum_i \pp_i\log
\pp_i$ the Shannon entropy of a distribution $\p \in \triangleY$.}
\begin{center}
\begin{small}
\begin{tabular}{@{\hskip 0pt}l@{\hskip 0pt}c@{\hskip 0pt}c@{\hskip 5pt}c@{\hskip 10pt}c@{\hskip 0pt}}
\toprule
Loss & $\dom(\Omega)$ & $\Omega(\bmu)$ & $\widehat{\y}_{\Omega}(\s)$ & $L_{\Omega}(\s; \y)$ \\
\midrule
Squared 
& $\RR^d$ & $\frac{1}{2}\|\bmu\|^2$ & $\s$ & $\frac{1}{2}\|\y-\s\|^2$ \smallskip
\\[0.5em]
Perceptron 
& $\triangleY$ & $0$ & $\argmax(\s)$ 
& $\max_i \ss_i -\ss_k$ 
\smallskip
\\
Logistic 
& $\triangleY$ & $-\HHs(\bmu)$ & $\softmax(\s)$ & 
$\log\sum_i\exp \ss_i -\ss_k$ 
\smallskip
\\
Hinge 
& $\triangleY$ & $\DP{\bmu}{\e_k - \ones}$ & 
$\argmax(\ones\!-\!\e_k\!+\!\s)$
& $\max_i ~ [[i \neq k]] + \ss_i -\ss_k$ 
\smallskip
\\
Sparsemax 
& $\triangleY$ & $\frac{1}{2}\|\bmu\|^2$ & $\sparsemax(\s)$ & 
$\frac{1}{2}\|\y-\s\|^2- \frac{1}{2}\|\widehat{\y}_{\Omega}(\s) - \s\|^2$ 
\smallskip
\\[0.5em]
Logistic (one-vs-all) 
& $[0,1]^\sizeY$ 
& $-\sum_i \HHs([\mu_i,1-\mu_i])$ & $\sigmoid(\s)$ & 
$\sum_i \log(1 + \exp(-(2 y_i-1) \ss_i))$
\smallskip
\\[0.5em]
Structured perceptron ~ 
& $\conv(\cY)$ & $0$ & $\map(\s)$ & 
$\max_{\y'} \DP{\s}{\y'} -\DP{\s}{\y}$ 
\smallskip
\\
Structured hinge 
& $\conv(\cY)$ & $-\DP{\bmu}{\bm{c}_\y}$ & $\map(\s + \bm{c}_\y)$ & 
$\max_{\y'} \DP{\bm{c}_\y}{\y'} + \DP{\s}{\y'} -\DP{\s}{\y}$ 
\smallskip
\\
CRF 
& $\conv(\cY)$ 
& ~ $\min\limits_{\p \in \triangleY \colon \EE_\p[Y]=\bmu}-\HHs(\p)$ 
& $\marginals(\s)$ 
& 
$\log\sum_{\y'}\exp \DP{\s}{\y'} - \DP{\s}{\y}$
\smallskip
\\
SparseMAP 
& $\conv(\cY)$ & $\frac{1}{2}\|\bmu\|^2$ & $\sparsemap(\s)$ &$\frac{1}{2}\|\y-\s\|^2 -\frac{1}{2}\|\widehat{\y}_{\Omega}(\s) - \s\|^2$ 
\smallskip
\\
\bottomrule
\end{tabular}
\end{small}
\end{center}
\label{tab:fy_losses_examples}
\end{table}

\paragraph{Properties.}

As the name indicates, this family of loss functions is grounded in the
Fenchel-Young inequality
\citep[Proposition 3.3.4]{borwein_2010}
\begin{equation}
\Omega^*(\s) + \Omega(\bmu) \ge \DP{\s}{\bmu} 
\quad \forall \s \in \dom(\Omega^*), \bmu \in \dom(\Omega).
\label{eq:fenchel_young_inequality}
\end{equation}
The inequality, together with well-known results regarding convex conjugates,
imply the following properties of Fenchel-Young losses.
\vspace{0.5em}
\begin{proposition}{Properties of Fenchel-Young losses}
\label{prop:fy_losses}

\begin{enumerate}[topsep=0pt,itemsep=3pt,parsep=3pt,leftmargin=15pt]

\item {\bf Non-negativity.} $L_{\Omega}(\s; \y) \ge 0$ for any $\s \in
    \dom(\Omega^*)=\RR^d$
and $\y \in \cY \subseteq \dom(\Omega)$. 


\item {\bf Zero loss.} If $\Omega$ is a lower semi-continuous proper convex
    function, then\\
    $\min_{\s} L_\Omega(\s; \y) = 0$, and $L_\Omega(\s; \y) = 0 \Leftrightarrow
    \y \in \partial \Omega^*(\s)$.  If $\Omega$ is strictly convex, 
then $L_\Omega(\s; \y) = 0 \Leftrightarrow \y
= \yHatOmega(\s) = \nabla \Omega^*(\s) = \argmin_{\s \in \RR^d} 
L_\Omega(\s; \y)$.

\item {\bf Convexity \& subgradients.} $L_{\Omega}$ is convex in $\s$ and the
    residual vectors are its subgradients: $\widehat{\y}_{\Omega}(\s) - \y \in
    \partial L_{\Omega}(\s; \y)$. 
    
\item {\bf Differentiability \& smoothness.}
    If $\Omega$ is strictly convex, then $L_{\Omega}$
    is differentiable and $\nabla L_{\Omega}(\s; \y) = \widehat{\y}_{\Omega}(\s) - \y$.
    If $\Omega$ is strongly convex, then $L_\Omega$ is smooth, i.e., $\nabla
    L_\Omega(\s; \y)$ is Lipschitz continuous.

\item {\bf Temperature scaling.} For any constant $t > 0$, 
    $L_{t \Omega}(\s; \y) = t L_{\Omega}(\nicefrac{\s}{t}; \y)$.

\item {\bf Constant invariance.} For any constant $c \in \RR$, 
$L_{\Omega + c}(\s; \y) = L_{\Omega}(\s; \y)$.

\end{enumerate}

\end{proposition}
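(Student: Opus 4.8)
The plan is to derive every item directly from the Fenchel-Young inequality \eqref{eq:fenchel_young_inequality} and standard facts about convex conjugates, working from the definition $L_\Omega(\s;\y) = \Omega^*(\s) + \Omega(\y) - \DP{\s}{\y}$ in \eqref{eq:fy_losses}; most of the properties then reduce to a line of algebra. For \textbf{non-negativity}, I would simply instantiate \eqref{eq:fenchel_young_inequality} at $\bmu = \y$: since $\y \in \cY \subseteq \dom(\Omega)$ and $\s \in \dom(\Omega^*) = \RR^d$, the bound $\Omega^*(\s) + \Omega(\y) \ge \DP{\s}{\y}$ is exactly the assertion $L_\Omega(\s;\y) \ge 0$.

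For \textbf{convexity and subgradients}, I would observe that, as a function of $\s$, the loss is the sum of $\Omega^*(\s)$ (convex, being a supremum of affine functions), the constant $\Omega(\y)$, and the linear term $-\DP{\s}{\y}$, hence convex; differentiating the linear term contributes $-\y$, giving $\partial_\s L_\Omega(\s;\y) = \partial \Omega^*(\s) - \y$, and the gradient-mapping fact $\yHatOmega(\s) \in \partial \Omega^*(\s)$ from \S\ref{sec:gradient_mapping} yields the stated subgradient. The \textbf{differentiability and smoothness} item then follows: when $\Omega$ is strictly convex, $\Omega^*$ is differentiable with $\nabla \Omega^*(\s) = \yHatOmega(\s)$, so $\nabla_\s L_\Omega(\s;\y) = \yHatOmega(\s) - \y$; for smoothness I would invoke the standard duality that $\gamma$-strong convexity of $\Omega$ is equivalent to $\nicefrac{1}{\gamma}$-smoothness of $\Omega^*$, which makes $\nabla_\s L_\Omega$ Lipschitz.

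\textbf{Temperature scaling} and \textbf{constant invariance} reduce to two elementary conjugate identities, $(t\Omega)^*(\s) = t\,\Omega^*(\nicefrac{\s}{t})$ and $(\Omega + c)^* = \Omega^* - c$, each verified directly from the definition of the conjugate; substituting these into \eqref{eq:fy_losses} and tracking the additive constants (which cancel) gives $L_{t\Omega}(\s;\y) = t\,L_\Omega(\nicefrac{\s}{t};\y)$ and $L_{\Omega+c}(\s;\y) = L_\Omega(\s;\y)$ respectively.

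The main obstacle is the \textbf{zero-loss} property, which needs the \emph{equality} case of \eqref{eq:fenchel_young_inequality} rather than the inequality. Here I would use that, for a lower semi-continuous proper convex $\Omega$, equality holds precisely when $\s \in \partial \Omega(\y)$, equivalently (by the biconjugate theorem $\Omega^{**} = \Omega$) when $\y \in \partial \Omega^*(\s)$; this gives $L_\Omega(\s;\y) = 0 \Leftrightarrow \y \in \partial \Omega^*(\s)$. Combined with non-negativity, any such $\s$ attains the value $0$, so $\min_{\s} L_\Omega(\s;\y) = 0$. For the strictly convex refinement I would use that $\Omega^*$ is then differentiable with singleton subdifferential $\partial \Omega^*(\s) = \{\nabla \Omega^*(\s)\} = \{\yHatOmega(\s)\}$, collapsing the condition to $\y = \yHatOmega(\s) = \nabla \Omega^*(\s)$; that this point is also $\argmin_{\s} L_\Omega(\s;\y)$ follows from the subgradient formula of item~3, since $\yHatOmega(\s) - \y = \zeros$ there.
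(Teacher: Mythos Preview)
Your proposal is correct and follows essentially the same approach as the paper: the paper does not give a detailed proof but simply states that the properties follow from the Fenchel-Young inequality \eqref{eq:fenchel_young_inequality} together with well-known results about convex conjugates, singling out the equality case $\s \in \partial\Omega(\bmu)$ for the zero-loss item. Your write-up fills in exactly those standard details (conjugate calculus for temperature scaling and constant shifts, Danskin for subgradients, the strict/strong convexity duality for differentiability/smoothness), so it is a faithful expansion of the paper's sketch.
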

Remarkably, the non-negativity, convexity and constant invariance properties
hold even if $\Omega$ is not convex.  The zero loss property 
follows from the fact that, if $\Omega$ is l.s.c.\ proper convex, then
\eqref{eq:fenchel_young_inequality} becomes an equality (i.e., the duality gap
is zero) if and only if $\s \in \partial \Omega(\bmu)$.  
It suggests that
the minimization of Fenchel-Young losses attempts to adjust the model to produce
predictions $\widehat{\y}_\Omega(\s)$ that are close to the target $\y$,
reducing the duality gap. This is illustrated with
$\Omega=\frac{1}{2}\|\bmu\|^2$ and $\dom(\Omega) = \RR^d$ (leading to the squared
loss) in Figure \ref{fig:duality_gap}.

\paragraph{Domain of $\mathbf{\Omega^*}$.}

Our assumption that the maximum in the regularized prediction
function \eqref{eq:prediction} is achieved for all $\s \in \RR^d$ implies that
$\dom(\Omega^*) = \RR^d$. 
This assumption is quite mild and does not require $\dom(\Omega)$ to be bounded.
Minimizing $L_\Omega(\s; \y)$ \wrt $\s$ is therefore an
\textbf{unconstrained} convex optimization problem.
This contrasts with proper loss functions, which are defined over the
probability simplex,
as discussed in \S\ref{sec:related_work}.

\begin{figure}[t]
\centering
\includegraphics[width=0.6\linewidth]{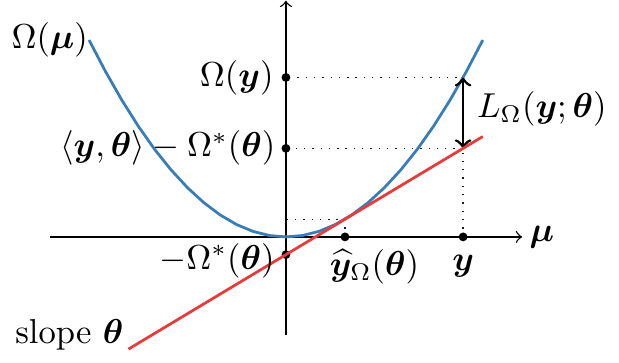}\\
\caption{{\bf Illustration of the Fenchel-Young loss} $L_\Omega(\s;
    \y)=\Omega^*(\s)+\Omega(\y)-\DP{\s}{\y}$, here with
    $\Omega(\bmu)=\frac{1}{2}\|\bmu\|^2$ and $\dom(\Omega)=\RR^d$. 
    Minimizing $L_\Omega(\s; \y)$ \wrt
    $\s$ can be seen as minimizing the duality gap, the difference between
    $\Omega(\bmu)$ and the tangent $\bmu \mapsto \DP{\s}{\bmu} - \Omega^*(\s)$,
at $\bmu=\y$ (the ground truth). The regularized prediction $\yHatOmega(\s)$ is
the value of $\bmu$ at which the tangent touches $\Omega(\bmu)$.
When $\Omega$ is of Legendre type, $L_\Omega(\s; \y)$ is equal to the Bregman
divergence generated by $\Omega$ between $\y$ and $\yHatOmega(\s)$ (cf.
\S\ref{sec:relation_Bregman}). However,
we do not require that assumption in this paper.
}
\label{fig:duality_gap}
\end{figure}

\subsection{Relation with Bregman divergences}
\label{sec:relation_Bregman}

Fenchel-Young losses seamlessly work when $\cY = \dom(\Omega)$ instead of $\cY
\subset \dom(\Omega)$. 
For example, in the case of the \logls loss, where
$-\Omega$ is the Shannon entropy restricted to $\triangle^d$, allowing $\y \in
\triangle^d$ instead of  $\y \in \{\e_i\}_{i=1}^d$
yields the cross-entropy loss, 
$L_{\Omega}(\s; \y) = \KL(\y \| \softmax(\s))$, where $\KL$ denotes the
(generalized) 
Kullback-Leibler divergence
\begin{equation}
\KL(\y \| \bmu) \coloneqq \sum_i y_i \log \frac{y_i}{\mu_i} - \sum_i y_i 
+ \sum_i \mu_i.
\end{equation}
This can be useful in a multi-label setting with supervision in the form of
\emph{label proportions}.  

From this example, it is tempting to conjecture that a similar result holds for
more general Bregman divergences \citep{bregman_1967}. 
Recall that the Bregman divergence $B_\Omega \colon
\dom(\Omega) \times \relint(\dom(\Omega)) \to \RR_+$ generated by a strictly
convex and differentiable $\Omega$ is
\begin{equation}
B_\Omega(\y \| \bmu) 
\coloneqq \Omega(\y) - \Omega(\bmu) - 
\DP{\nabla \Omega(\bmu)}{\y - \bmu}.
\label{eq:Bregman_div}
\end{equation}
In other words, this is the difference at $\y$ between $\Omega$ and its
linearization around $\bmu$. It turns out that $L_\Omega(\s; \y)$ is not in
general equal to $B_\Omega(\y \| \yHatOmega(\s))$. In fact the latter is 
\textbf{not necessarily convex} in $\s$ while the former always is.
However, there is a duality relationship
between Fenchel-Young losses and Bregman divergences, as we now discuss.

\paragraph{A ``mixed-space'' Bregman divergence.}

Letting $\s = \nabla \Omega(\bmu)$ (i.e., $(\s, \bmu)$ is a dual pair), we have 
$\Omega^*(\s) = \DP{\s}{\bmu} - \Omega(\bmu)$. Substituting in 
\eqref{eq:Bregman_div}, we get 
$B_\Omega(\y \| \bmu) = L_{\Omega}(\s; \y).$ 
In other words, 
Fenchel-Young losses can be
viewed as a ``mixed-form Bregman divergence'' 
\cite[Theorem 1.1]{amari_2016}
where the
argument $\bmu$ in \eqref{eq:Bregman_div} is \textbf{replaced by its dual point} $\s$.
This difference is best seen by comparing
the function signatures, $L_\Omega \colon \dom(\Omega^*) \times \dom(\Omega) \to
\RR_+$ vs. $B_\Omega \colon \dom(\Omega) \times \relint(\dom(\Omega)) \to
\RR_+$. An important consequence is that 
Fenchel-Young losses do not impose any restriction on their left argument $\s$:
Our assumption that the maximum in the prediction
function \eqref{eq:prediction} is achieved for all $\s \in \RR^d$ implies
$\dom(\Omega^*) = \RR^d$. 
In contrast, a Bregman divergence would typically need to be composed with a
mapping from $\RR^d$ to $\dom(\Omega)$, such as $\yHatOmega$, resulting in a
possibly non-convex function.

\paragraph{Case of Legendre-type functions.}

We can make the relationship with Bregman divergences further precise 
when $\Omega = \Psi + I_\cC$, where $\Psi$ is restricted to the class of
so-called Legendre-type functions \citep{Rockafellar1970,wainwright_2008}.  We
first recall the definition of this class of functions and then state our results.
\vspace{0.5em}
\begin{definition}{\label{def:essentially_smooth_Legendre_type}Essentially smooth and Legendre type functions}

A function $\Psi$ is essentially smooth if
\begin{itemize}
    \item $\dom(\Psi)$ is non-empty,
    \item $\Psi$ is differentiable throughout $\interior(\dom(\Psi))$,
    \item and $\lim_{i \to \infty} \nabla \Psi(\bmu^i) = +\infty$ for any
        sequence $\{\bmu^i\}$ contained in $\dom(\Psi)$, and converging to a
        boundary point of $\dom(\Psi)$.
\end{itemize}
A function $\Psi$ is of Legendre type if
\begin{itemize}
    \item it is strictly convex on $\interior(\dom(\Psi))$
    \item and essentially smooth.
\end{itemize}

\end{definition}
For instance, $\Psi(\bmu) = \frac{1}{2} \|\bmu\|^2$ is Legendre-type with
$\dom(\Psi)=\RR^d$, and $\Psi(\bmu) = \sum_i \mu_i \log \mu_i$ is
Legendre-type with $\dom(\Psi)=\RR^d_+$.
However, $\Omega(\bmu) = \frac{1}{2} \|\bmu\|^2 + I_{\RR_+^d}(\bmu)$ is not
Legendre-type, since the gradient of $\Omega$ does not explode everywhere on the
boundary of $\RR_+^d$.
The Legendre-type assumption crucially implies that
\begin{equation}
\nabla \Psi(\nabla
\Psi^*(\s)) = \s
\quad \text{for all} \quad \s \in \dom(\Psi^*).
\end{equation}
We can use this fact to derive the following results, proved in 
Appendix \ref{appendix:proof_Bregman_div}.
\vspace{0.5em}
\begin{proposition}{Relation with Bregman divergences}
\label{prop:Bregman_div}

Let $\Psi$ be of Legendre type with $\dom(\Psi^*)=\RR^d$ and 
let $\cC \subseteq \dom(\Psi)$ be a convex set. \\
Let $\Omega$ be the restriction of $\Psi$ to $\cC \subseteq
\dom(\Psi)$, i.e.,
$\Omega \coloneqq \Psi + I_\cC$.

\begin{enumerate}[topsep=0pt,itemsep=3pt,parsep=3pt,leftmargin=15pt]

\item {\bf Bregman projection.} The prediction function regularized by
    $\Omega$, $\yHatOmega(\s)$, reduces to the Bregman projection of
    $\yHatPsi(\s)$ onto $\cC$:
    \begin{equation}
\yHatOmega(\s)
= \argmax_{\bmu \in \cC} \DP{\s}{\bmu} - \Psi(\bmu)
= \argmin_{\bmu \in \cC} 
B_\Psi(\bmu \| \yHatPsi(\s)).
    \end{equation}

\item {\bf Difference of divergences.} For all $\s \in \RR^d$ and $\y \in \cC$:
    \begin{equation}
L_{\Omega}(\s; \y) = 
B_\Psi(\y \| \yHatPsi(\s)) -
B_\Psi(\yHatOmega(\s) \| \yHatPsi(\s)).
    \end{equation}

\item {\bf Bound.} For all $\s \in \RR^d$ and $\y \in \cC$:
\begin{equation}
    0 \le \underbrace{B_\Psi(\y \| \yHatOmega(\s))}_{\text{possibly
    non-convex in } \s} \le \underbrace{L_{\Omega}(\s; \y)}_{\text{convex in } \s}
\end{equation}
with equality when the loss is minimized
\begin{equation}
\yHatOmega(\s) = \y 
\Leftrightarrow
L_{\Omega}(\s; \y) = 0
\Leftrightarrow
B_\Psi(\y \| \yHatOmega(\s)) = 0.
\end{equation}

\item {\bf Composite form.}
When $\cC = \dom(\Psi)$, i.e., $\Omega = \Psi$, we have equality for all
$\s \in \RR^d$
\begin{equation}
    L_\Omega(\s; \y) = B_\Omega(\y \| \yHatOmega(\s)).
\end{equation}

\end{enumerate}

\end{proposition}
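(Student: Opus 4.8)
The plan is to build all four claims on two workhorses: the identity $\s = \nabla\Psi(\yHatPsi(\s))$, which holds because $\Psi$ is Legendre type with $\dom(\Psi^*)=\RR^d$ (so $\yHatPsi(\s)=\nabla\Psi^*(\s)$ and $\nabla\Psi(\nabla\Psi^*(\s))=\s$), together with the elementary conjugate identities $\Psi^*(\s) = \DP{\s}{\yHatPsi(\s)} - \Psi(\yHatPsi(\s))$ and $\Omega^*(\s) = \DP{\s}{\yHatOmega(\s)} - \Psi(\yHatOmega(\s))$, both valid since the respective maximizers are attained. For part 1, I would start from Definition~\ref{def:regularized_pred}: since $\dom(\Omega)=\cC$, the first equality $\yHatOmega(\s) = \argmax_{\bmu\in\cC}\DP{\s}{\bmu}-\Psi(\bmu)$ is immediate. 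For the second equality I expand $B_\Psi(\bmu\|\yHatPsi(\s))$ and substitute $\nabla\Psi(\yHatPsi(\s))=\s$; all terms not depending on $\bmu$ collect into an additive constant, leaving $B_\Psi(\bmu\|\yHatPsi(\s)) = \Psi(\bmu)-\DP{\s}{\bmu}+\text{const}$, so minimizing the divergence over $\cC$ is literally the regularized prediction program.

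Part 2 is then bookkeeping. I write both $B_\Psi(\y\|\yHatPsi(\s))$ and $B_\Psi(\yHatOmega(\s)\|\yHatPsi(\s))$ using the same substitution $\nabla\Psi(\yHatPsi(\s))=\s$, recognizing $\DP{\s}{\yHatPsi(\s)}-\Psi(\yHatPsi(\s))=\Psi^*(\s)$ in each, and $\Psi(\yHatOmega(\s))-\DP{\s}{\yHatOmega(\s)} = -\Omega^*(\s)$ in the second. Subtracting, the $\Psi^*(\s)$ terms cancel and I am left with $\Psi(\y)-\DP{\s}{\y}+\Omega^*(\s)$; since $\y\in\cC$ gives $\Omega(\y)=\Psi(\y)$, this is exactly $L_\Omega(\s;\y)$ from Definition~\ref{def:FY_loss}.

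Part 3 is where the real work sits, and I expect it to be the main obstacle, since it requires relating the two-point combination of part 2 to $B_\Psi(\y\|\yHatOmega(\s))$ across three distinct points. The tool is the three-point identity $B_\Psi(a\|c)=B_\Psi(a\|b)+B_\Psi(b\|c)+\DP{\nabla\Psi(b)-\nabla\Psi(c)}{a-b}$, applied with $a=\y$, $b=\yHatOmega(\s)$, $c=\yHatPsi(\s)$. Combined with part 2 and $\nabla\Psi(c)=\s$, this gives $L_\Omega(\s;\y) = B_\Psi(\y\|\yHatOmega(\s)) + \DP{\nabla\Psi(\yHatOmega(\s))-\s}{\y-\yHatOmega(\s)}$. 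The delicate step is the sign of the last inner product: I invoke the first-order optimality (variational inequality) of the constrained Bregman projection from part 1, whose gradient at the optimum is exactly $\nabla\Psi(\yHatOmega(\s))-\s$, yielding $\DP{\nabla\Psi(\yHatOmega(\s))-\s}{\y-\yHatOmega(\s)}\ge 0$ for every $\y\in\cC$. Non-negativity of $B_\Psi$ (convexity of $\Psi$) gives the lower bound, and the equality chain follows from strict convexity of $\Psi$ on $\interior(\dom(\Psi))$, forcing $B_\Psi(\y\|\yHatOmega(\s))=0\Leftrightarrow \y=\yHatOmega(\s)$, together with the zero-loss property of Proposition~\ref{prop:fy_losses}. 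A point to watch is regularity: I must ensure $\yHatOmega(\s)$ lies where $\nabla\Psi$ is defined so that $B_\Psi(\cdot\|\yHatOmega(\s))$ and the variational inequality are meaningful, which is precisely where the essential smoothness in the Legendre-type hypothesis is used.

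Finally, part 4 falls out of either part 2 or part 3: taking $\cC=\dom(\Psi)$ makes the projection unconstrained, so the optimality condition strengthens to $\nabla\Psi(\yHatOmega(\s))=\s$, i.e. $\yHatOmega(\s)=\yHatPsi(\s)$. The inner-product term in part 3 then vanishes identically, giving $L_\Omega(\s;\y)=B_\Psi(\y\|\yHatOmega(\s))=B_\Omega(\y\|\yHatOmega(\s))$, with equality now holding for all $\s\in\RR^d$.
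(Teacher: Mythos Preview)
Your proposal is correct and follows essentially the same approach as the paper. The only cosmetic difference is in Part~3: the paper invokes the generalized Pythagorean theorem for Bregman divergences as a black box (citing \citet{bregman_clustering}), i.e., $B_\Psi(\y\|\p)+B_\Psi(\p\|\yHatPsi(\s))\le B_\Psi(\y\|\yHatPsi(\s))$ whenever $\p$ is the Bregman projection onto $\cC$, and then combines it with Part~2, whereas you unpack that theorem into its two ingredients---the three-point identity and the first-order variational inequality at the projection---and assemble them directly. These are the same argument at different levels of granularity.
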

We illustrate these properties using $\Psi = \frac{1}{2} \|\cdot\|^2$ as a
running example. From the first property, since
$\yHatPsi(\s) = \s$ 
and $B_\Psi(\y \| \bmu) = \frac{1}{2} \|\y - \bmu\|^2$, we get
\begin{equation}
\yHatOmega(\s) 
= \argmin_{\bmu \in \cC} B_\Psi(\bmu \| \yHatPsi(\s)) 
= \argmin_{\bmu \in \cC} B_\Psi(\bmu \| \s) 
= \argmin_{\bmu \in \cC} \|\bmu - \s\|^2,
\end{equation}
recovering the Euclidean projection onto $\cC$.
The reduction of regularized prediction functions to Bregman projections
(when $\Psi$ is of Legendre type) is
useful because there exist efficient algorithms for computing the Bregman
projection onto various convex sets
\citep{yasutake_2011,online_submodular,bregmanproj,projection_permutahedron}.
Therefore, we can use these algorithms to compute $\yHatOmega(\s)$ provided
that $\yHatPsi(\s)$ is available.

From the second property, we obtain for all $\s \in \RR^d$ and $\y \in \cC$
\begin{equation}
L_{\Omega}(\s; \y) = \frac{1}{2}\|\y-\s\|^2-
\frac{1}{2}\|\widehat{\y}_{\Omega}(\s) - \s\|^2.
\end{equation}
This recovers the expression of the sparsemax loss given in
Table \ref{tab:fy_losses_examples} with $\cC = \triangle^d$.

From the third claim, we obtain for all $\s \in \RR^d$ and $\y \in \cC$
\begin{equation}
    \frac{1}{2} \|\y - \yHatOmega(\s)\|^2 \le L_{\Omega}(\s; \y).
\end{equation}
This shows that $L_{\Omega}(\s; \y)$ provides a
convex upper-bound for the possibly non-convex composite function $B_\Psi(\y
\| \yHatOmega(\s))$. In particular, when $\cC = \triangle^d$,
we get $\frac{1}{2} \|\y - \sparsemax(\s)\|^2 \le L_{\Omega}(\s;
\y)$. This suggests that the sparsemax loss is useful for sparse label
proportion estimation, as confirmed in our experiments
(\S\ref{sec:experiments}).

Finally, from the last property,
if $\Omega = \Psi = \frac{1}{2} \|\cdot\|^2$,
we obtain $L_\Omega(\s; \y) = \frac{1}{2} \|\y
- \s\|^2$, which is indeed the squared loss given in Table
\ref{tab:fy_losses_examples}.

\subsection{Expected loss, Bayes risk and Bregman information}
\label{sec:Bregman_info}

In this section, we discuss the relation between the pointwise Bayes risk
(minimal achievable loss) of a Fenchel-Young loss and Bregman information
\citep{bregman_clustering}.

\paragraph{Expected loss.}

Let $Y$ be a random variable taking values in $\cY$ following the distribution
$\p \in \triangleY$. The expected loss (a.k.a. expected risk) is then
\begin{align}
\EE_\p[L_\Omega(\s; Y)]
&= \sum_{\y \in \cY} p(\y) L_\Omega(\s; \y) \\
&= \sum_{\y \in \cY} p(\y) (\Omega^*(\s) + \Omega(\y) - \DP{\s}{\y}) \\
&= \EE_\p[\Omega(Y)] + \Omega^*(\s) - \DP{\s}{\EE_\p[Y]} \\
&= L_\Omega(\s; \EE_\p[Y]) + \II_\Omega(Y; \p).
\label{eq:expected_loss}
\end{align}
Here, we defined the \textbf{Bregman information} of $Y$ by
\begin{equation}
\II_\Omega(Y; \p) 
\coloneqq \min_{\bmu \in \dom(\Omega)} \EE_\p [B_\Omega(Y \| \bmu)]
= \EE_\p \left[ B_\Omega(Y \| \EE_\p[Y]) \right]
= \EE_\p[\Omega(Y)] - \Omega(\EE_\p[Y]).
\end{equation}
We refer the reader to \citet{bregman_clustering} for a detailed discussion as
to why the last two equalities hold.
The r.h.s.\ is exactly equal to the difference between the two sides of
Jensen's inequality $\EE[\Omega(Y)] \ge \Omega(\EE[Y])$ and is therefore
non-negative. For this reason, it is sometimes also called Jensen gap
\citep{reid_2011}. 

\paragraph{Bayes risk.}

From Proposition \ref{prop:fy_losses}, we know that $\min_{\s} L_\Omega(\s; \y)
= 0$ for all $\y \in \dom(\Omega)$. Therefore, the pointwise Bayes risk
coincides precisely with the Bregman information of $Y$,
\begin{equation}
\min_{\s \in \RR^d} \EE_\p[L_\Omega(\s; Y)]
= \min_{\s \in \RR^d} L_\Omega(\s; \EE_\p[Y]) + \II_\Omega(Y; \p)
= \II_\Omega(Y; \p),
\label{eq:Bayes_risk}
\end{equation}
provided that $\EE_\p[Y] \in \dom(\Omega)$.
A similar relation between Bayes risk and Bregman
information exists for proper losses \citep{reid_2011}. 
We can think of \eqref{eq:Bayes_risk} as a measure of the ``difficulty'' of the
task.  
Combining \eqref{eq:expected_loss} and \eqref{eq:Bayes_risk}, we obtain
\begin{equation}
\EE_\p[L_\Omega(\s; Y)] - \min_{\s \in \RR^d} \EE_\p[L_\Omega(\s; Y)]
= L_\Omega(\s; \EE_\p[Y]),
\end{equation}
the pointwise ``regret'' of $\s \in \RR^d$ \wrt $\p \in \triangleY$.
If $\cY = \{\e_i\}_{i=1}^d$, 
$L_\Omega(\s; \EE_\p[Y]) = L_\Omega(\s; \p)$.

\subsection{Cost-sensitive losses}
\label{sec:cost_sensitive}

Fenchel-Young losses also include the hinge loss of support vector machines.
Indeed, from any classification loss $L_\Omega$, we can construct a
cost-sensitive version of it as follows.  Define 
\begin{equation}
\Psi(\bmu; \y) \coloneqq \Omega(\bmu) - \DP{\bm{c}_{\y}}{\bmu},
\end{equation}
where $\bm{c}_{\y} \in \RR^{|\cY|}_+$ is a fixed
cost vector that may depend on the ground truth $\y$. For example, $\bm{c}_{\y} =
\mathbf{1} - \y$ corresponds to the 0/1 cost and can be used to impose a margin.  
Then, $L_\Psi$ is a cost-sensitive version of $L_\Omega$, which can be written
as
\begin{equation}
L_{\Psi(\cdot;\y)}(\s; \y) =
L_\Omega(\s + \bm{c}_\y; \y) = \Omega^*(\s + \bm{c}_\y) + \Omega(\y)
- \DP{\s + \bm{c}_{\y}}{\y}.  
\end{equation}
This construction recovers the multi-class hinge loss (\cite{multiclass_svm};
$\Omega=0$), the softmax-margin loss (\cite{gimpel_2010}; $\Omega=-\HHs$), and
the cost-augmented sparsemax (\citet[Eq. (13)]{accelerated_sdca},
\citet{sparsemap}; $\Omega=\frac{1}{2}\|\cdot\|^2$). 
It is easy to see that the associated regularized prediction function is
\begin{equation}
    \widehat{\y}_{\Psi}(\s) =  \yHatOmega(\s + \bm{c}_\y).
\end{equation}
For the 0/1 cost $\bm{c}_{\y} = \mathbf{1} - \y$ and $\y = \e_k$,
we have
\begin{equation}
\argmax_{i \in [d]} ~ (\widehat{\y}_{\Psi}(\s))_i = k
\quad \Longrightarrow \quad
\argmax_{i \in [d]} ~ (\yHatOmega(\s))_i = k,
\end{equation}
justifying the use of $\yHatOmega$ at prediction time.

\section{Probabilistic prediction with Fenchel-Young losses}
\label{sec:proba_clf}

In the previous section, we presented Fenchel-Young losses in a broad setting.
We now restrict to classification over the probability simplex.
More precisely, we restrict to the case $\cY = \{\bm{e}_i\}_{i=1}^d$,
(i.e., unstructured multi-class classification), and assume
that $\dom(\Omega) \subseteq \conv(\cY) = \triangle^d$.
In this case, the regularized prediction function \eqref{eq:prediction} becomes
\begin{equation}
\yHatOmega(\s) 
\in \argmax_{\p \in \triangle^d} {\DP{\s}{\p}} - \Omega(\p),
\label{eq:prediction_function_simplex}
\end{equation}
where $\s \in \RR^d$ is a vector of (possibly negative) prediction scores
produced by a model $\bm{f}_{W}(\x)$
and $\p \in \triangle^d$ is a discrete probability distribution.
It is a generalized exponential family distribution
\citep{grunwald_2004,frongillo_2014} with natural parameter $\s \in \RR^d$ and
regularization $\Omega$.
Of particular interest is the case where $\yHatOmega(\s)$ is
{\bf sparse}, meaning that there are scores $\s$ for which the
resulting $\yHatOmega(\s)$ assigns zero probability to some classes. As seen in
\S\ref{sec:reg_pred_examples}, this happens for example with the sparsemax
transformation, but not with the softmax. 
Later, in \S\ref{sec:margin}, we will
establish conditions for the regularized prediction function to be sparse
and will connect it to the notion of separation margin.

We first discuss generalized entropies and the properties of the Fenchel-Young
losses they induce (\S\ref{sec:generalized_ent}). 
We then discuss their expected loss, Bayes risk and Fisher consistency
(\S\ref{sec:proba_expected_loss}).
We then give examples of
generalized entropies and corresponding loss functions, several of them new to
our knowledge (\S\ref{sec:examples_entropy}).
Finally, we discuss the binary classification setting, recovering several
examples of commonly-used loss functions (\S\ref{sec:simplex_binary_case}).

\subsection{Fenchel-Young loss generated by a generalized entropy}
\label{sec:generalized_ent}

\paragraph{Generalized entropies.}

A natural choice of regularization function $\Omega$ over the probability
simplex is $\Omega=-\HH$, where $\HH$ is a generalized entropy
\citep{grunwald_2004}, also called uncertainty function by \citet{degroot_1962}:
a concave function over $\triangle^d$, used to measure the ``uncertainty'' in a
distribution $\p \in \triangle^d$.

\textbf{Assumptions:}
We will make the following assumptions about $\HH$.

\begin{enumerate}[topsep=0pt,itemsep=3pt,parsep=3pt,leftmargin=25pt]
    \item[\textbf{A.1.}] 
    Zero entropy: $\HH(\p) = 0$ if $\p$ is a delta distribution,
    i.e., $\p \in \{\e_i\}_{i=1}^d$.
    \item[\textbf{A.2.}] 
        Strict concavity: $\HH\big((1-\alpha)\p + \alpha \p'\big) > {(1-\alpha)} \HH(\p) +
        \alpha \HH(\p')$, for $\p \neq \p'$, $\alpha \in (0,1)$.
    \item[\textbf{A.3.}] 
        Symmetry: $\HH(\p) =
        \HH(\bm{P}\p)$ for any $\bm{P} \in \mathcal{P}$.

\end{enumerate}

Assumptions A.2 and A.3 imply that $\HH$ is Schur-concave \citep{Bauschke2017},
a common requirement in generalized entropies. This in turn implies assumption
A.1, up to a constant (that constant can easily be subtracted so as to satisfy
assumption A.1).
As suggested by the next result, 
proved in \S\ref{appendix:proof_generalized_entropy},
together, these assumptions
imply that $\HH$ can be used as a sensible
uncertainty measure.
\vspace{0.4em}
\begin{proposition}
If $\HH$ satisfies assumptions A.1--A.3, then it is non-negative and uniquely
maximized by the uniform distribution $\p = \mathbf{1}/d$.
\label{prop:generalized_entropy}
\end{proposition}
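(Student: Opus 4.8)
The plan is to handle the two assertions separately, each through a single application of Jensen's inequality but invoking a different assumption. Non-negativity is the quicker of the two: every $\p \in \triangle^d$ is the convex combination $\p = \sum_{i} p_i \e_i$ of the vertices of the simplex, so concavity of $\HH$ (implied by the strict concavity A.2) together with the normalization A.1 gives
\begin{equation}
\HH(\p) = \HH\!\left(\textstyle\sum_i p_i \e_i\right) \ge \sum_i p_i \,\HH(\e_i) = 0,
\end{equation}
which is exactly the claimed lower bound.

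For maximality I would use a symmetrization argument rather than a direct optimality analysis, since this conveniently sidesteps any need to argue existence or boundary continuity of a maximizer. Fix an arbitrary $\p \in \triangle^d$ and average its images under all permutation matrices. Each coordinate of $\frac{1}{d!}\sum_{\bm P \in \mathcal{P}} \bm P \p$ equals the common mean $\frac{1}{d}\sum_j p_j = \frac{1}{d}$, so this average is precisely the uniform distribution $\mathbf 1/d$. By the symmetry assumption A.3 we have $\HH(\bm P \p) = \HH(\p)$ for every $\bm P \in \mathcal{P}$, and concavity then yields
\begin{equation}
\HH(\mathbf 1/d) = \HH\!\left(\tfrac{1}{d!}\textstyle\sum_{\bm P} \bm P \p\right) \ge \tfrac{1}{d!}\sum_{\bm P} \HH(\bm P \p) = \HH(\p),
\end{equation}
establishing that the uniform distribution attains the maximum.

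It remains to promote this to a strict inequality whenever $\p \neq \mathbf 1/d$, which gives uniqueness of the maximizer. If $\p$ is not uniform, then some permutation $\bm Q$ satisfies $\bm Q \p \neq \p$ (otherwise $\p$ would be invariant under all permutations, hence uniform). I would isolate this distinct pair inside the average: setting $w \coloneqq \tfrac12(\p + \bm Q \p)$, the mean $\mathbf 1/d$ can be rewritten as a convex combination in which $w$ carries weight $2/d!$, so concavity bounds $\HH(\mathbf 1/d)$ below by the matching combination of the $\HH(w)$ and the remaining $\HH(\bm P \p)$ terms, while strict concavity A.2 applied to the distinct points $\p$ and $\bm Q \p$ gives $\HH(w) > \tfrac12 \HH(\p) + \tfrac12 \HH(\bm Q \p)$. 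Combining these and using A.3 to replace every $\HH(\bm P \p)$ by $\HH(\p)$ produces $\HH(\mathbf 1/d) > \HH(\p)$. The only delicate point is precisely this upgrade from the nonstrict multi-point Jensen inequality to a strict one; the pairwise-isolation device above is what resolves it, since A.2 is only stated for two points.
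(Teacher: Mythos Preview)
Your proof is correct and follows essentially the same approach as the paper: Jensen's inequality with the vertex decomposition for non-negativity, and the permutation-averaging symmetrization for maximality. The paper dispatches uniqueness in a single line (``strict concavity ensures that $\p = \ones / d$ is the unique maximizer''), whereas your pairwise-isolation argument makes explicit how the two-point formulation of A.2 suffices; this is a welcome addition of rigor rather than a different route.
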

That is, assumptions A.1-A.3 ensure that the uniform distribution is the maximum
entropy distribution.
A particular case of generalized entropies satisfying assumptions A.1--A.3 are
uniformly separable functions of the form $\HH(\p) = \sum_{j=1}^d h(p_j)$,
where $h:[0,1]\rightarrow\RR_+$ is a non-negative strictly concave function such
that $h(0)=h(1)=0$. However, our framework is not restricted to this form.

\paragraph{Induced Fenchel-Young loss.}

If the ground truth is $\y = \e_k$ and assumption
A.1 holds, the Fenchel-Young loss definition \eqref{eq:fy_losses} becomes
\begin{equation}
L_{-\HH}(\s; \e_k) 
= (-\HH)^*(\s) - \ss_k.
\label{eq:fy_loss_multiclass2}
\end{equation}
This form was also recently proposed by \citet[Proposition 3]{duchi_2016}.
By using the fact that $\Omega^*(\s + c \ones) = \Omega^*(\s) + c$ for all $c
\in \RR$ if $\dom(\Omega) \subseteq \triangle^d$,
we can further rewrite it as
\begin{equation}
L_{-\HH}(\s; \e_k) 
= (-\HH)^*(\s - \ss_k \ones).
\label{eq:fy_loss_multiclass}
\end{equation}
This expression shows that Fenchel-Young losses over $\triangle^d$ can be
written solely in terms of the generalized ``cumulant function'' $(-\HH)^*$. 
Indeed, when $\HH$ is Shannon's entropy, we recover the cumulant (a.k.a.
log-partition) function
$(-\HHs)^*(\s) = \log \sum_{i=1}^d \exp(\theta_i)$.
When $\HH$ is strongly concave over $\triangle^d$,
we can also see $(-\HH)^*$ as a smoothed max operator
\citep{nesterov_smooth,Niculae2017,differentiable_dp} and hence $L_{-\HH}(\s;
\e_k)$ can be seen as a smoothed upper-bound of the perceptron loss
$(\s; \e_k) \mapsto \max_{i \in [d]} \theta_i - \theta_k$.

It is well-known that minimizing the logistic loss $(\s; \e_k) \mapsto
\log\sum_i\exp \ss_i -\ss_k$ is equivalent to minimizing the $\KL$ divergence
between $\e_k$ and $\softmax(\s)$, which in turn is equivalent to maximizing the
likelihood of the ground-truth label, $\softmax(\s)_k$.  Importantly, this
equivalence does not carry over for generalized entropies $\HH$: minimizing
$L_{-\HH}(\s; \e_k)$ is not in general equivalent to minimizing $B_{-\HH}(\e_k,
\widehat{\y}_{-\HH}(\s))$ or maximizing $\widehat{\y}_{-\HH}(\s)_k$. In fact,
maximizing the likelihood is generally a non-concave problem.  Fenchel-Young
losses can be seen as a principled way to construct a convex loss regardless of
$\HH$.

\subsection{Expected loss, Bayes risk and Fisher consistency}
\label{sec:proba_expected_loss}

Let $Y$ be a random variable taking values in $\cY = \{\e_i\}_{i=1}^d$.
If $\HH$ satisfies assumption A.1,
we can use \eqref{eq:fy_loss_multiclass2} to obtain simpler expressions of
the expected loss and pointwise Bayes risk
than the ones derived in \S\ref{sec:Bregman_info}.

\paragraph{Expected loss.}

Indeed, the expected loss (risk) for all $\p \in \triangle^d$ simplifies to:
\begin{equation}
\EE_\p[L_{-\HH}(\s; Y)] 
= \sum_{i=1}^d p_i L_{-\HH}(\s; \e_i)
= \sum_{i=1}^d p_i ((-\HH)^*(\s) - \ss_j)
= (-\HH)^*(\s) - \DP{\p}{\s}.
\label{eq:expected_loss_simplex}
\end{equation}

\paragraph{Bayes risk and entropy.}

The pointwise (conditional) Bayes risk thus becomes
\begin{equation}
\min_{\s \in \RR^d} \EE_\p[L_{-\HH}(\s; Y)] 
= \min_{\s \in \RR^d} (-\HH)^*(\s) - \DP{\p}{\s}
= \HH(\p).
\label{eq:Bayes_risk_simplex}
\end{equation}
Therefore, the pointwise Bayes risk is equal to the generalized entropy $\HH$
generating $L_{-\HH}$, evaluated at $\p$. This is consistent with
\eqref{eq:Bayes_risk}, which states that the pointwise Bayes risk is equal to
the Bregman information of $Y$ under $\p$, because
\begin{equation}
    \II_{-\HH}(Y; \p) = -\II_{\HH}(Y; \p) = \HH(\EE_\p[Y]) - \EE_\p[\HH(Y)] =
    \HH(\p),
\end{equation}
where we used $\EE_\p[Y] = \p$ and assumption A.1.

A similar relation
between generalized entropies and pointwise (conditional) Bayes risk
is well-known in the proper loss (scoring rule) literature
\citep{grunwald_2004,gneiting_2007,reid_composite_binary,vernet_2016}.
The main difference is that the minimization above is over $\RR^d$, while it is
over $\triangle^d$ in that literature (\S\ref{sec:proper_losses}). 
As noted by \citet[\S 4.6]{reid_2011}, Bregman information can also be connected
the notion of statistical information developed by \citet{degroot_1962}, the
reduction between prior and posterior uncertainty $\HH$, of which mutual
information is a special case.

\paragraph{Fisher consistency.}

From \eqref{eq:expected_loss_simplex}, $\EE_\p[L_{-\HH}(\s; Y)] = L_{-\HH}(\s;
\p) + \Omega(\p)$.  Combined with Proposition \ref{prop:fy_losses}, we have that
the pointwise Bayes risk \eqref{eq:Bayes_risk_simplex} is achieved if and only
if $\widehat{\y}_{-\HH}(\s) = \p$. Such losses are Fisher consistent estimators
of probabilities \citep{vernet_2016}.

\subsection{Examples}
\label{sec:examples_entropy}

We now give examples of generalized entropies over the simplex $\triangle^d$ (we
omit the indicator function $I_{\triangle^d}$ from the definitions since there
is no ambiguity).  We illustrate them together with the regularized prediction
function and loss they produce in Figure~\ref{fig:all_entropies}. Several of the
resulting loss functions are new to our knowledge.

\paragraph{Shannon entropy \citep{Shannon1949}.} 

This is the foundation of information theory, defined as 
\begin{equation}
\HHs(\p) \coloneqq -\sum_{j=1}^d p_j \log p_j. 
\end{equation}
As seen in 
Table~\ref{tab:fy_losses_examples}, the resulting Fenchel-Young loss $L_{-\HHs}$
corresponds to the \logls loss. The associated distribution is the classical
softmax, Eq.\ \eqref{eq:softmax}.

\paragraph{Tsallis $\alpha$-entropies \citep{Tsallis1988}.} 

These entropies are defined as 
\begin{equation}
\HHt_{\alpha}(\p) \coloneqq k(\alpha-1)^{-1} \left(1 - \sum_{j=1}^d
p_j^{\alpha}\right), 
\end{equation}
where $\alpha \ge 0$ and $k$ is an arbitrary positive constant. 
They arise as a generalization of the Shannon-Khinchin axioms to
non-extensive systems \citep{Suyari2004} and have numerous scientific
applications \citep{GellMannTsallis2004,Martins2009JMLR}. 
For convenience, we set $k = \alpha^{-1}$ for the rest of this paper. 
Tsallis entropies satisfy assumptions A.1--A.3 and can also be written in separable form:
\begin{equation}
\HHt_{\alpha}(\p) \coloneqq \sum_{j=1}^d h_{\alpha}(p_j)
\quad \text{with} \quad
h_{\alpha}(t) \coloneqq \frac{t-t^\alpha}{\alpha(\alpha - 1)}.
\label{eq:tsallis_separable}
\end{equation}
The limit case $\alpha \rightarrow 1$ corresponds to the Shannon entropy.
When $\alpha=2$, we recover the Gini index \citep{gini_index}, a 
popular ``impurity measure'' for decision trees:
\begin{equation}
\HHt_2(\p) = \frac{1}{2}\sum_{j=1}^d p_j (1-p_j) =
\frac{1}{2}(1-\|\p\|_2^2)
\quad \forall \p \in \triangle^d.
\label{eq:gini_entropy}
\end{equation}
Using the constant invariance property in Proposition \ref{prop:fy_losses},
it can be checked that $L_{-\HHt_2}$ recovers
the sparsemax loss \citep{sparsemax} (cf.\ Table~\ref{tab:fy_losses_examples}). 

Another interesting case is $\alpha\rightarrow +\infty$, which gives
$\HHt_\infty(\p) = 0$, hence $L_{-\HHt_\infty}$ is the perceptron loss in
Table~\ref{tab:fy_losses_examples}. The resulting ``$\argmax$'' distribution puts
all probability mass on the top-scoring classes.
In summary, the prediction functions for
$\alpha = 1, 2, \infty$ are respectively $\softmax$, $\sparsemax$, and
$\argmax$.  
Tsallis entropies can therefore be seen as a \textbf{continuous
parametric family} subsuming these important cases.  

\paragraph{Norm entropies.}

An interesting class of non-separable entropies are entropies generated by a
$q$-norm, defined as
\begin{equation}
\HHn_q(\p) \coloneqq 1 - \|\p\|_q.
\label{eq:norm_entropy}
\end{equation}
We call them {\bf norm entropies}. 
By the Minkowski inequality, $q$-norms with $q>1$ are strictly convex on the
simplex, so $\HHn_q$ satisfies assumptions A.1--A.3 for $q > 1$.  The resulting
norm entropies differ from Tsallis entropies in that the norm is not raised to
the power of $q$: a subtle but important difference.  The limit case $q
\rightarrow \infty$ is particularly interesting: in this case, we obtain
$\HHn_{\infty} = 1-\|\cdot\|_{\infty}$, recovering the Berger-Parker dominance
index  \citep{Berger1970}, widely used in ecology to measure species diversity.  
We surprisingly encounter $\HHn_\infty$ again in Section~\ref{sec:margin}, 
as a limit case for the existence of separation margins.

\paragraph{Squared norm entropies.}

Inspired by \citet{Niculae2017}, as a simple extension of the Gini index
\eqref{eq:gini_entropy}, we consider the generalized entropy based on
squared $q$-norms:
\begin{equation}
\HHsq_{q}(\p) 
\coloneqq \frac{1}{2} (1 - \|\p\|^2_q) 
= \frac{1}{2} - \frac{1}{2} \left(\sum_{j=1}^d p_j^q \right)^{\frac{2}{q}}.
\end{equation}
The constant term $\frac{1}{2}$, omitted by \citet{Niculae2017},
ensures satisfaction of A.1. 
For $q \in (1,2]$, it is known that the squared $q$-norm is strongly convex \wrt
$\|\cdot\|_{q}$ \citep{ball_1994}, implying that $(-\HHsq_{q})^*$, and therefore
$L_{-\HHsq_{q}}$, is smooth.
Although $\widehat{\y}_{-\HHsq_q}(\s)$ cannot to our knowledge be solved in
closed form for $q \in (1,2)$, efficient iterative algorithms such as projected
gradient are available.

\paragraph{R\'enyi {\boldmath $\beta$}-entropies.} 

R\'enyi entropies \citep{Renyi1960} are defined for any $\beta\ge 0$ as:
\begin{equation}\label{eq:renyi_entropies}
\HHr_{\beta}(\p) \coloneqq \frac{1}{1-\beta} \log \sum_{j=1}^d
p_j^\beta.
\end{equation}
Unlike Shannon and Tsallis entropies, R\'enyi entropies are not separable, with
the exception of $\beta \rightarrow 1$, which also recovers Shannon entropy as a
limit case. The case $\beta \rightarrow +\infty$ gives $\HHr_{\beta}(\p) = -\log \|\p\|_{\infty}$. 
For $\beta \in [0, 1]$, R\'enyi entropies satisfy assumptions
A.1--A.3; for $\beta > 1$, R\'enyi entropies fail to be concave.  They are
however pseudo-concave \citep{Mangasarian1965}, meaning that, for all $\p,
\bm{q} \in \triangle^d$,  $\DP{\nabla \HHr_\beta(\p)}{\bm{q} - \p} \le 0$ implies
$\HHr_\beta(\bm{q}) \le \HHr_\beta(\p)$.  This implies, among other things, that
points $\p \in \triangle^d$ with zero gradient are maximizers of $\DP{\p}{\s} +
\HHr_{\beta}(\p)$, which allows us to compute the predictive distribution
$\widehat{\y}_{-\HHr_{\beta}}$ with gradient-based methods.  

\begin{figure}[p]
\centering
\includegraphics[width=\linewidth]{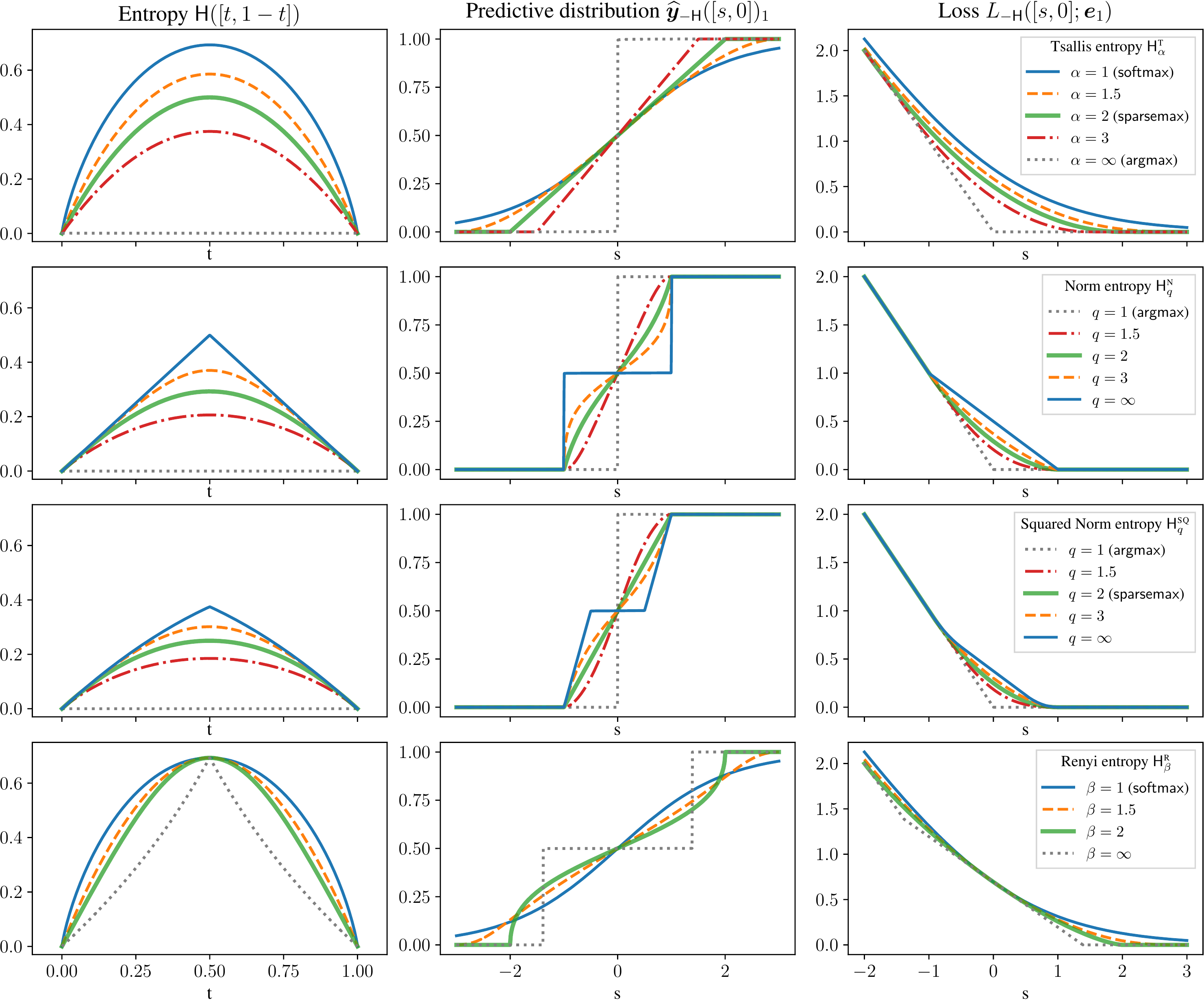}\\
\caption{Examples of \textbf{generalized entropies} (left) along with their
    \textbf{prediction distribution} (middle) and \textbf{Fenchel-Young losses} (right) for the binary case,
    where $\p = [t, 1-t] \in \triangle^2$ and $\s = [s, 0] \in \RR^2$.  Except
    for softmax, which never exactly reaches 0, all distributions shown on the center
can have \textbf{sparse support}.
}
\label{fig:all_entropies}
\end{figure}

\subsection{Binary classification case} 
\label{sec:simplex_binary_case}

When $d = 2$,
the convex conjugate expression simplifies and reduces to a
univariate maximization problem over $[0,1]$:
\begin{equation}
\Omega^*(\s) = \max_{\p \in \triangle^2} \DP{\s}{\p} - \Omega(\p)
= \max_{p \in [0,1]} \ss_1 p + \ss_2 (1-p) - \Omega([p,1-p])
= \phi^*(\ss_1 - \ss_2) + \ss_2,
\end{equation}
where we defined $\phi(p) \coloneqq \Omega([p,1-p])$.
Likewise, it is easy to verify that we also have $\Omega^*(\s) = 
\phi^*(\ss_2 - \ss_1) + \ss_1$.
Let us choose $\s = [s, -s] \in \RR^2$ for some $s \in \RR$.
Since the ground truth is $\y \in \{ \e_1, \e_2 \}$, 
where $\e_1$ and $\e_2$ represent positive and
negative classes, 
from \eqref{eq:fy_loss_multiclass}, we can write
\begin{equation}
L_\Omega(\s; \e_j) 
= \Omega^*(\s - \ss_j \ones) 
= \begin{cases}
    \Omega^*([0, -s]) = \phi^*(-s) & \mbox { if } j=1 \\
    \Omega^*([s, 0]) = \phi^*(s) & \mbox { if } j=2
\end{cases}.
\end{equation}
This can be written concisely as $\phi^*(-y s)$ where $y \in \{+1, -1\}$.
Losses that can be written in this form are sometimes called
margin losses \citep{reid_composite_binary}. 
These are losses that treat the
positive and negative classes symmetrically. %
We have thus made a connection between margin losses and 
the regularization function $\phi(p) \coloneqq \Omega([p,1-p])$.
Note, however, that this is different from the notion of margin 
we develop in \S\ref{sec:margin}.

\paragraph{Examples.} Choosing $\phi(p) = -\HHs([p,1-p])$ leads to
the binary logistic loss
\begin{equation}
    \phi^*(-ys) = \log(1 + \exp(-ys)).
\end{equation}
Choosing $\phi(p) = -\HHt_2([p,1-p])=p^2 - p$
leads to 
\begin{equation}
\phi^*(u) = 
\begin{cases}
    0 & \mbox { if } u \le -1 \\
     u & \mbox { if } u \ge 1 \\
    \frac{1}{4} (u+1)^2 & \mbox{ o.w. }
\end{cases}.
\end{equation}
The resulting loss, $\phi(-ys)$, is known as the modified Huber loss in the
literature \citep{zhang_2004}. Hence, the sparsemax loss is the multiclass
extension of the modified Huber loss, as as already noted in \citep{sparsemax}.

Finally, we note that the modified Huber loss is closely related to the smoothed
hinge loss \citep{accelerated_sdca}.
Indeed, choosing $\phi(p) = \frac{1}{2}p^2 - p$ (notice the $\frac{1}{2}$
factor), we obtain
\begin{equation}
\phi^*(u) = 
\begin{cases}
    0 & \mbox { if } u \le -1 \\
     u + \frac{1}{2} & \mbox { if } u \ge 0 \\
    \frac{1}{2} (1 + u)^2 & \mbox{ o.w. }
\end{cases}.
\end{equation}
It can be verified that $\phi^*(-ys)$ is indeed the smoothed hinge loss.

\section{Separation margin of Fenchel-Young losses}
\label{sec:margin}

In this section, we are going to see that the simple assumptions A.1--A.3 about
a generalized entropy $\HH$ are enough to obtain results about the {\bf separation margin} associated with $L_{-\HH}$. 
The notion of margin is well-known in machine learning, lying at the heart of support vector machines and leading to generalization error bounds \citep{Vapnik1998,Schoelkopf2002,guermeur_2007}. 
We provide a definition and will see that many other Fenchel-Young losses also have a ``margin,'' for suitable conditions on $\HH$.  
Then, we take a step further, and connect the existence of a margin with the {\bf sparsity} of the regularized prediction function, 
providing necessary and sufficient conditions for Fenchel-Young losses to have a margin.  Finally, we show how this margin can be computed analytically.
\vspace{0.5em}
\begin{definition}{\label{def:margin}Separation margin}

Let $L(\s; \e_k)$ be a loss function over $\RR^d \times \{\e_i\}_{i=1}^d$.
We say that $L$ has
the \emph{separation margin property} if there exists $m>0$ such that:
\begin{equation}
\ss_k \ge m + \max_{j \ne k} \ss_j \quad \Rightarrow \quad L(\s; \e_k) = 0.
\label{eq:margin_definition}
\end{equation}
The smallest possible $m$ that satisfies \eqref{eq:margin_definition} is called 
the \emph{margin} of $L$, denoted $\mathrm{margin}(L)$.
\end{definition}

\paragraph{Examples.}  
The most famous example of a loss with a separation margin is the  
{\bf multi-class hinge loss}, 
$L(\s; \e_k) = \max\{0, \max_{j \ne k} 1 + \ss_j - \ss_k\}$, 
which we saw in Table \ref{tab:fy_losses_examples} to be 
a Fenchel-Young loss: it is immediate from the definition that its margin is $1$. 
Less trivially, \citet[Prop.~3.5]{sparsemax} showed that the {\bf sparsemax loss} also 
has the separation margin property. 
On the negative side, the logistic loss does not have a margin, as it is
strictly positive.
Characterizing which Fenchel-Young losses have a margin is an important question
which we address next.

\paragraph{Conditions for existence of margin.} 
To accomplish our goal, we need to characterize the gradient mappings $\partial(-\HH)$ and $\nabla(-\HH)^{*}$ associated with generalized entropies
(note that $\partial(-\HH)$
is never single-valued:  
if $\s$ is in $\partial (-\HH)(\bm{p})$, then so is $\s+c\mathbf{1} $, for any
constant $c \in \mathbb{R}$).
Of particular importance is the subdifferential set 
$\partial(-\HH)(\e_k)$. 
The next proposition, whose proof we defer to 
\S\ref{appendix:proof_margin}, uses this set to provide a necessary and sufficient condition for the existence of a separation margin, along with a formula for computing it. 
\vspace{0.5em}
\begin{proposition}\label{prop:margin}
Let $\HH$ satisfy A.1--A.3. Then:
\begin{enumerate}
\item The loss $L_{-\HH}$ has a separation margin iff there is  a $m>0$ such that $m\e_k \in \partial (-\HH)(\e_k)$. 
\item If the above holds, then the margin of $L_{-\HH}$ is given by the smallest
    such $m$ or, equivalently,
\begin{equation}\label{eq:margin}
\mathrm{margin}(L_{-\HH}) = \sup_{\p \in \triangle^d} \frac{\HH(\p)}{1 - \|\p\|_{\infty}}. 
\end{equation}
\end{enumerate}
\end{proposition}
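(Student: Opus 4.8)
The plan is to reduce the separation-margin question to a clean statement about the subdifferential $\partial(-\HH)(\e_k)$ and then analyze it through a single ``master inequality.'' The starting point is the zero-loss characterization of Proposition~\ref{prop:fy_losses}: since $-\HH$ is lower semi-continuous, proper, and convex (concavity of $\HH$ plus continuity on the compact simplex), we have $L_{-\HH}(\s;\e_k)=0 \iff \e_k\in\partial(-\HH)^*(\s) \iff \s\in\partial(-\HH)(\e_k)$, where the last equivalence is the standard conjugate-subgradient duality. Writing out the subgradient inequality of $-\HH$ at the vertex $\e_k$ and using assumption A.1 ($\HH(\e_k)=0$) shows that $\s\in\partial(-\HH)(\e_k)$ is equivalent to $\theta_k-\DP{\s}{\p}\ge\HH(\p)$ for all $\p\in\triangle^d$; since $\sum_j p_j=1$, this becomes the master inequality $\sum_j(\theta_k-\theta_j)\,p_j\ge\HH(\p)$ for all $\p\in\triangle^d$. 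Everything else follows from this single inequality.

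For the first claim I treat the two directions separately. If $L_{-\HH}$ has margin $m$, evaluating the separation hypothesis at $\s=m\e_k$ (where $\theta_k=m$ and $\max_{j\ne k}\theta_j=0$, so $\theta_k\ge m+\max_{j\ne k}\theta_j$ holds with equality) gives $L_{-\HH}(m\e_k;\e_k)=0$, hence $m\e_k\in\partial(-\HH)(\e_k)$. Conversely, if $m\e_k\in\partial(-\HH)(\e_k)$ for some $m>0$, substituting $\delta_j:=\theta_k-\theta_j=m$ (for $j\ne k$) into the master inequality yields $m(1-p_k)\ge\HH(\p)$ for all $\p$. Then for any $\s$ with $\theta_k-\theta_j\ge m$ for every $j\ne k$, non-negativity of the $p_j$ gives $\sum_j(\theta_k-\theta_j)p_j\ge m\sum_{j\ne k}p_j=m(1-p_k)\ge\HH(\p)$, so $\s\in\partial(-\HH)(\e_k)$ and $L_{-\HH}(\s;\e_k)=0$. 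Taken together, these two computations show that the set of admissible separation constants and the set $\{m>0:m\e_k\in\partial(-\HH)(\e_k)\}$ coincide.

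For the second claim I read off from $m(1-p_k)\ge\HH(\p)\ \forall\p$ that the smallest admissible $m$ is $m^\star=\sup_{\p:\,p_k<1}\HH(\p)/(1-p_k)$, the case $p_k=1$ being vacuous by A.1; by the equality of the two sets above, $m^\star=\mathrm{margin}(L_{-\HH})$. It remains to recast $m^\star$ symmetrically. Non-negativity of $\HH$ (Proposition~\ref{prop:generalized_entropy}) together with $\|\p\|_\infty\ge p_k$ gives $\HH(\p)/(1-p_k)\le\HH(\p)/(1-\|\p\|_\infty)$, hence $m^\star\le\sup_\p\HH(\p)/(1-\|\p\|_\infty)$. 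For the reverse inequality, given any $\p$ I permute its coordinates so that the largest entry lands in position $k$; by the symmetry assumption A.3 this leaves $\HH$ unchanged while turning $1-\|\p\|_\infty$ into $1-p_k$, so every value $\HH(\p)/(1-\|\p\|_\infty)$ is attained by a point of the form appearing in $m^\star$. The two suprema therefore agree and the formula follows.

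The main obstacle is conceptual rather than computational: spotting that the single extreme score vector $\s=m\e_k$ is the worst case and that controlling the subgradient there propagates to the whole margin cone. The mechanism is the bound $\sum_j(\theta_k-\theta_j)p_j\ge m(1-p_k)$, which collapses the many-constraint membership $\s\in\partial(-\HH)(\e_k)$ into the single scalar requirement $m(1-p_k)\ge\HH(\p)$; the only other delicate step is the A.3 symmetrization converting $1-p_k$ into $1-\|\p\|_\infty$. I would also note that the iff in the first claim silently encodes existence: when $\sup_\p\HH(\p)/(1-\|\p\|_\infty)=+\infty$ (as for Shannon entropy) the set $\{m>0:m\e_k\in\partial(-\HH)(\e_k)\}$ is empty and no margin exists, matching the marginless behaviour of the logistic loss.
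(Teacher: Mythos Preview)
Your proof is correct and follows essentially the same route as the paper: both hinge on the characterization $\s\in\partial(-\HH)(\e_k)\iff \theta_k\ge\DP{\s}{\p}+\HH(\p)$ for all $\p\in\triangle^d$, test it at the extreme point $\s=m\e_k$, and then use symmetry to pass from $1-p_k$ to $1-\|\p\|_\infty$. The paper packages the monotonicity step (``if $m\e_k$ works then every $\s$ in the margin cone works'') as a separate lemma and invokes shift invariance explicitly, whereas you obtain it in one line from the master inequality via $\sum_j(\theta_k-\theta_j)p_j\ge m(1-p_k)$; this is a mild streamlining but not a different idea.
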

Reassuringly, the first part confirms that
the logistic loss does not have a margin,
since $\partial(-\HHs)(\e_k)=\varnothing$.
%
A second interesting fact is that the denominator of \eqref{eq:margin} is the generalized
entropy $\HHn_{\infty}(\p)$ introduced in \S\ref{sec:proba_clf}: the {\bf $\infty$-norm entropy}. 
As Figure~\ref{fig:all_entropies} suggests, this entropy provides an upper bound for convex losses with unit margin. 
This provides some intuition to the formula \eqref{eq:margin}, which seeks a
distribution $\p$ maximizing the {\bf entropy ratio} between $\HH(\p)$ and
$\HHn_{\infty}(\p)$.

\paragraph{Relationship between sparsity and margins.} 
The next result, proved in \S\ref{appendix:proof_full_simplex},
characterizes more precisely the image of $\nabla (-\HH)^*$. 
In doing so, it establishes a key result in this paper: 
{\bf a sufficient condition for the existence of a separation margin in $L_{-\HH}$ is the sparsity of the regularized prediction function $\widehat{\y}_{-\HH} \equiv \nabla (-\HH)^*$},   
i.e., its ability to reach the entire simplex, including the boundary points. If $\HH$ is uniformly separable, this is also a necessary condition.  
\vspace{0.5em}
\begin{proposition}{Relationship between
margin losses and 
sparse predictive probabilities}%
\label{prop:full_simplex}%

Let $\HH$ satisfy A.1--A.3 and be 
uniformly separable, i.e., $\HH(\p) = \sum_{i=1}^d h(p_i)$. Then the following statements are all equivalent:
\begin{enumerate}
\item $\partial (-\HH)(\p) \ne \varnothing$ 
for any $\p \in \triangle^d$;
\item The mapping $\nabla (-\HH)^*$ covers the full simplex, i.e., $\nabla
    (-\HH)^*(\RR^d) = \triangle^d$;
\item $L_{-\HH}$ has the separation margin property.
\end{enumerate}
For a general $\HH$ (not necessarily separable) satisfying A.1--A.3, 
we have (1) $\Leftrightarrow$ (2) $\Rightarrow$ (3). 
\end{proposition}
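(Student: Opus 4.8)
The plan is to prove the general-case chain $(1)\Leftrightarrow(2)\Rightarrow(3)$ using only convex duality and Proposition~\ref{prop:margin}, and then to close the loop $(3)\Rightarrow(1)$ in the separable case by a one-dimensional boundary-slope analysis of $h$. Throughout I write $\Omega \coloneqq -\HH + I_{\triangle^d}$, so that the symbol $\partial(-\HH)(\p)$ in the statement means $\partial\Omega(\p)$; I recall from A.2 that $\HH$ is strictly concave, hence $\Omega$ is l.s.c.\ proper strictly convex with $\dom(\Omega^*)=\RR^d$, and by \S\ref{sec:gradient_mapping} this makes $\yHatOmega = \nabla(-\HH)^*$ single-valued.

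For $(1)\Leftrightarrow(2)$ I would use the standard subgradient-inversion identity for a conjugate pair: since $\Omega$ is l.s.c.\ proper convex, $\s\in\partial\Omega(\p)\Leftrightarrow\p\in\partial\Omega^*(\s)$. Because $\Omega^*$ is differentiable, $\partial\Omega^*(\s)=\{\nabla\Omega^*(\s)\}$, so $\p$ lies in the image $\nabla(-\HH)^*(\RR^d)$ exactly when there is some $\s$ with $\s\in\partial(-\HH)(\p)$, i.e.\ when $\partial(-\HH)(\p)\neq\varnothing$. As the image is always contained in $\dom(\Omega)\subseteq\triangle^d$, requiring it to equal $\triangle^d$ is the same as requiring $\partial(-\HH)(\p)\neq\varnothing$ for \emph{every} $\p\in\triangle^d$, which is precisely $(1)$. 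No separability is needed here.

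For $(1)\Rightarrow(3)$ I only use $(1)$ at the vertex $\p=\e_k$, giving some $\s\in\partial(-\HH)(\e_k)$. Exploiting symmetry (A.3), which yields $\partial(-\HH)(\bm{P}\p)=\bm{P}\,\partial(-\HH)(\p)$, I average $\bm{P}\s$ over all permutations $\bm{P}$ fixing coordinate $k$; convexity of the subdifferential keeps the average inside $\partial(-\HH)(\e_k)$, and it has the form $a\ones + (b-a)\e_k$. Since $\dom(\Omega)\subseteq\triangle^d$ makes the subdifferential invariant under adding multiples of $\ones$, I subtract $a\ones$ to get $m\e_k\in\partial(-\HH)(\e_k)$ with $m\coloneqq b-a$. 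Testing the subgradient inequality at the uniform distribution and using $\HH(\ones/d)>0$ (Proposition~\ref{prop:generalized_entropy}) forces $m>0$, so Proposition~\ref{prop:margin} gives the margin and $(3)$ holds.

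It remains to prove $(3)\Rightarrow(1)$ when $\HH(\p)=\sum_i h(p_i)$, and this is where separability is essential and where I expect the main difficulty. By Proposition~\ref{prop:margin}, $(3)$ is equivalent to $\partial(-\HH)(\e_k)\neq\varnothing$; applying the sum rule $\partial\Omega=\partial g + N_{\triangle^d}$ (valid since $\relint(\dom g)\cap\relint(\triangle^d)\neq\varnothing$) to the separable part $g=\sum_i -h(p_i)$ shows that $-h$ must admit a finite subgradient both at the boundary value $0$ (from the $d-1$ zero coordinates) and at $1$ (from coordinate $k$), i.e.\ $h'_+(0)<\infty$ and $h'_-(1)>-\infty$. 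The key point is that these two one-sided slopes control subgradient existence on \emph{all} of $[0,1]$: on $(0,1)$ strict concavity gives differentiability, and at the endpoints the finite one-sided slopes give nonempty $\partial(-h)$. Hence for arbitrary $\p\in\triangle^d$ I can assemble a coordinatewise subgradient $\theta_i\in\partial(-h)(p_i)$, which is automatically a subgradient of the separable sum and therefore of $\Omega$, proving $\partial(-\HH)(\p)\neq\varnothing$ and thus $(1)$. The delicate steps are justifying the sum rule at the boundary faces of the simplex and confirming that the vertex is genuinely the worst case — precisely the feature that breaks for non-separable $\HH$, where subgradient existence on one face says nothing about another, which is why only $(1)\Leftrightarrow(2)\Rightarrow(3)$ survives in general.
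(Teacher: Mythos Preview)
Your argument is correct; the overall logical skeleton matches the paper's, but the individual implications are handled by noticeably different tools.

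For $(1)\Leftrightarrow(2)$ you invoke the biconjugate subgradient identity $\s\in\partial\Omega(\p)\Leftrightarrow\p=\nabla\Omega^*(\s)$ directly, whereas the paper phrases the $(1)\Rightarrow(2)$ direction via KKT conditions for the regularized prediction problem; the two amount to the same convex-analytic fact. For $(1)\Rightarrow(3)$ you symmetrize: average a subgradient at $\e_k$ over the stabilizer of $k$ (using A.3 and convexity of the subdifferential), shift by a multiple of $\ones$, and then force $m>0$ by testing at $\ones/d$. The paper instead relies on its Lemma~\ref{prop:inverse_map} (a monotonicity property of $\partial(-\HH)(\e_k)$) to push the non-$k$ coordinates down to their common minimum before shifting. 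Your route avoids proving that lemma at the cost of a short positivity check. For $(3)\Rightarrow(1)$ in the separable case, you use the subdifferential sum rule $\partial(g+I_{\triangle^d})=\partial g + N_{\triangle^d}$ at $\e_k$ to deduce that $\partial(-h)(0)$ and $\partial(-h)(1)$ are nonempty, and then assemble a coordinatewise subgradient at an arbitrary $\p$. The paper bypasses the sum rule: it sets $g_i=m$ on the zero coordinates and verifies the subgradient inequality there by hand, using the margin formula \eqref{eq:margin} to get $m\ge h(t)/t$ for $t\in(0,1]$. Your argument is cleaner convex analysis; the paper's is more self-contained and makes explicit why $m$ (rather than merely some finite slope) works as the boundary subgradient.

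One small imprecision: you write that on $(0,1)$ ``strict concavity gives differentiability.'' Strict concavity alone does not force differentiability of $h$; what you actually need---and what holds---is that a convex function on $\RR$ has nonempty subdifferential on the interior of its domain, so $\partial(-h)(p_i)\neq\varnothing$ for $p_i\in(0,1)$. With that wording fixed, your $(3)\Rightarrow(1)$ step goes through. Your remark that the ``vertex is the worst case precisely because of separability'' is exactly the content of the paper's counterexample discussion: for non-separable $\HH$, nonemptiness of $\partial(-\HH)$ at $\e_k$ says nothing about other boundary faces.
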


Let us reflect for a moment on the three conditions stated in  Proposition~\ref{prop:full_simplex}. 
The first two conditions involve the subdifferential and gradient of $-\HH$ and its conjugate; the third condition is the margin property of $L_{-\HH}$. To provide some intuition, consider the case where $\HH$ is separable with $\HH(\p) = \sum_i h(\pp_i)$ and $h$ is differentiable in $(0,1)$. Then, from the concavity of $h$, its derivative $h'$ is decreasing, hence the first condition  is met if $\lim_{t=0^+} h'(t) < \infty$ and $\lim_{t=1^-} h'(t) > -\infty$. 
This is the case with Tsallis entropies for $\alpha > 1$, but not Shannon
entropy, since
$h'(t) = -1-\log t$ explodes at $0$.
As stated in Definition \ref{def:essentially_smooth_Legendre_type},
functions whose gradient ``explodes'' in the boundary of their domain (hence
failing to meet the first condition in Proposition~\ref{prop:full_simplex}) are
called ``essentially smooth'' \citep{Rockafellar1970}.  
For those functions, $\nabla (-\HH)^*$ maps only to the relative interior of
$\triangle^d$, never attaining boundary points \citep{wainwright_2008}; this is expressed in the second condition.  
This prevents essentially smooth functions from generating a sparse $\y_{-\HH} \equiv \nabla (-\HH)^*$ or (if they are separable) a
loss $L_{-\HH}$ with a margin, as asserted by the third condition. 
Since Legendre-type functions (Definition
\ref{def:essentially_smooth_Legendre_type})
are strictly convex \textit{and} essentially smooth,
by Proposition \ref{prop:Bregman_div}, loss functions for which
the composite form $L_{-\HH}(\s; \y) = B_{-\HH}(\y \| \widehat{\y}_{-\HH}(\s))$ holds, which is the
case of the logistic loss but not of the sparsemax loss, do not enjoy a margin
and cannot induce a sparse probability distribution.
This is geometrically visible in Figure~\ref{fig:all_entropies}.  

\paragraph{Margin computation.} 

For Fenchel-Young losses that have the separation margin property, Proposition~\ref{prop:margin} provided a formula for determining the margin. 
While informative, formula \eqref{eq:margin} is not very practical, 
as it involves a generally non-convex optimization problem.
The next proposition, proved in
\S\ref{appendix:proof_margin_separable}, takes a step  further and
provides a remarkably simple closed-form expression for  generalized
entropies that are {\bf twice-differentiable}. 
To simplify notation, we denote by $\nabla_j \HH(\p) \equiv (\nabla \HH(\p))_j$
the $j^{\text{th}}$ component of $\nabla \HH(\p)$. 
\vspace{0.5em}
\begin{proposition}\label{prop:margin_separable}
Assume $\HH$ satisfies the conditions in Proposition~\ref{prop:full_simplex} 
and is twice-differ\-entiable on the simplex. Then, for arbitrary $j \ne k$:
\begin{equation}\label{eq:margin_general}
\mathrm{margin}(L_{-\HH}) = {\nabla_j{\HH}(\e_k)} - {\nabla_k{\HH}(\e_k)}.
\end{equation}
In particular, if $\HH$ is separable, i.e., $\HH(\p) = \sum_{i=1}^{|\cY|}
h(\pp_i)$, where $h:[0,1] \rightarrow \RR_{+}$ is concave, twice differentiable,
with $h(0) = h(1) = 0$, then
\begin{equation}\label{eq:margin_separable}
    \mathrm{margin}(L_{-\HH}) = h'(0) - h'(1) =-\hspace{-.2ex}\int_{0}^1\hspace{-.4ex}h''(t) dt.
\end{equation}\end{proposition}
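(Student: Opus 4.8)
The plan is to start from the subgradient characterization of the margin established in Proposition~\ref{prop:margin}: since $\HH$ satisfies the hypotheses of Proposition~\ref{prop:full_simplex}, the loss $L_{-\HH}$ has a margin, and $\mathrm{margin}(L_{-\HH})$ equals the smallest $m > 0$ with $m\e_k \in \partial(-\HH)(\e_k)$. First I would unfold this membership using the definition of the subdifferential of $-\HH$ (viewed as $+\infty$ off $\triangle^d$) together with the normalization $\HH(\e_k) = 0$ from A.1: the condition $m\e_k \in \partial(-\HH)(\e_k)$ is equivalent to $\HH(\p) \le m(1 - p_k)$ for every $\p \in \triangle^d$. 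Writing $g_m(\p) := \HH(\p) - m(1 - p_k)$, which is concave (sum of the concave $\HH$ and an affine term) and satisfies $g_m(\e_k) = 0$, the condition becomes exactly the statement that $\e_k$ is a global maximizer of $g_m$ over the simplex.

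Next I would evaluate this maximality condition using twice-differentiability. Because $g_m$ is concave, $\e_k$ maximizes $g_m$ on $\triangle^d$ if and only if the first-order condition $\DP{\nabla g_m(\e_k)}{\p - \e_k} \le 0$ holds for all $\p \in \triangle^d$. Here I would record two facts. First, by the symmetry assumption A.3, differentiating $\HH(\bm{P}\p) = \HH(\p)$ at $\p = \e_k$ for permutations $\bm{P}$ fixing coordinate $k$ shows that all components $\nabla_j\HH(\e_k)$ with $j \ne k$ are equal; call this common value $a$, and set $b := \nabla_k\HH(\e_k)$. Second, although the gradient of $\HH$ on the simplex is only defined modulo $\ones$, this ambiguity is harmless: adding $c\ones$ changes the inner product by $c\DP{\ones}{\p - \e_k} = 0$ on the affine hull of $\triangle^d$, and the quantity $a - b$ is itself invariant to such a shift.

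With $\nabla g_m(\e_k) = \nabla\HH(\e_k) + m\e_k$, a direct computation gives $\DP{\nabla g_m(\e_k)}{\p - \e_k} = (1 - p_k)(a - b - m)$. Since $1 - p_k \ge 0$ and $p_k$ ranges over $[0,1]$, the first-order condition holds for all $\p$ precisely when $m \ge a - b$, so the smallest admissible $m$, hence the margin, is $a - b = \nabla_j\HH(\e_k) - \nabla_k\HH(\e_k)$, which is the claimed formula \eqref{eq:margin_general}. Positivity of this value, as required by Definition~\ref{def:margin}, follows from strict concavity A.2: along the edge $t \mapsto (1-t)\e_k + t\e_j$ the concave map vanishes at both endpoints by A.1, so its one-sided derivative $a - b$ at $t = 0$ is strictly positive.

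Finally, for the separable case $\HH(\p) = \sum_i h(p_i)$ I would substitute $\nabla_j\HH(\p) = h'(p_j)$, yielding $\nabla_j\HH(\e_k) = h'(0)$ and $\nabla_k\HH(\e_k) = h'(1)$, so the margin equals $h'(0) - h'(1)$; the integral form is then the fundamental theorem of calculus, $h'(0) - h'(1) = -\int_0^1 h''(t)\,dt$. I expect the only delicate point to be the rigorous passage from the subgradient membership to the global-maximality statement, and the careful bookkeeping of the gradient being defined only up to $\ones$; once the first-order optimality condition is set up at the vertex $\e_k$, the remaining computation is routine.
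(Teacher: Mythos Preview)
Your argument is correct and takes a genuinely different route from the paper's proof.

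The paper starts from the entropy-ratio formula of Proposition~\ref{prop:margin} (part~2), $\mathrm{margin}(L_{-\HH}) = \sup_{\p} \HH(\p)/(1-\|\p\|_\infty)$, and uses symmetry to reduce to the one-parameter path $\p(t) = \e_1 + t\bm{v}$ with $\bm{v} = [-1, \tfrac{1}{d-1}, \dots, \tfrac{1}{d-1}]$, yielding $A(t) = \HH(\p(t))/t$. It then shows that $A$ is nonincreasing by differentiating twice: writing $A'(t) = B(t)/t^2$, it computes $B'(t) = -t\,\bm{v}^\top \nabla^2(-\HH)(\p(t))\,\bm{v} \le 0$ from positive semidefiniteness of the Hessian, whence $B(t) \le B(0) = 0$ and $A' \le 0$. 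The supremum is therefore $\lim_{t\to 0^+} A(t)$, evaluated by L'H\^opital to $-\bm{v}^\top \nabla(-\HH)(\e_1) = \nabla_j\HH(\e_1) - \nabla_1\HH(\e_1)$.

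You instead start from part~1 of Proposition~\ref{prop:margin} (the subgradient characterization $m\e_k \in \partial(-\HH)(\e_k)$), rewrite it as the requirement that $\e_k$ maximize the concave function $g_m(\p) = \HH(\p) - m(1-p_k)$ over $\triangle^d$, and then invoke the first-order optimality condition $\DP{\nabla g_m(\e_k)}{\p-\e_k} \le 0$ directly at the vertex. After the symmetry observation that $\nabla_j\HH(\e_k)$ is constant over $j \ne k$, the condition collapses to the linear inequality $m \ge a - b$, and you read off the margin.

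Your route is shorter and in fact uses strictly less: only first-order differentiability of $\HH$ at $\e_k$ is needed, whereas the paper's monotonicity step genuinely relies on the Hessian. The paper's approach, on the other hand, makes explicit why the margin arises as a limit of the entropy ratio along the ``uniform-off-$k$'' direction, which connects more visibly to the formula~\eqref{eq:margin}. Your care about the gradient being defined only modulo $\ones$ on the affine hull of the simplex, and your verification that $a-b>0$ via strict concavity on an edge, are both clean and correct.
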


The compact formula \eqref{eq:margin_general} provides a geometric characterization of separable entropies and their margins: 
\eqref{eq:margin_separable} tells us that only the slopes of $h$ at the two extremities of 
$[0,1]$ are relevant in determining the margin.

\paragraph{Example: case of Tsallis and norm entropies.} 
As seen in \S\ref{sec:proba_clf}, Tsallis entropies are separable with $h(t) = (t - t^\alpha)/(\alpha(\alpha-1))$. 
For $\alpha > 1$, $h'(t) = (1 - \alpha t^{\alpha -
1})/(\alpha(\alpha-1))$, hence $h'(0)=1/(\alpha(\alpha-1))$ and
$h'(1)=-1/\alpha$. Proposition~\ref{prop:margin_separable} then yields 
\begin{equation}
    \mathrm{margin}(L_{-\HHt_\alpha}) = h'(0) - h'(1) = (\alpha-1)^{-1}. 
\end{equation}
Norm entropies, while not separable, have gradient
$\nabla \HHn_q(\p) = -( \nicefrac{\p}{\|\p\|_q} )^{q-1}$, 
giving $\nabla \HHn_q(\e_k) = -\e_k$, so
\begin{equation}
\mathrm{margin}(\HHn_q) =
\nabla_j \HHn_q(\e_k) - \nabla_k \HHn_q(\e_k) = 1, 
\end{equation}
as confirmed visually in Figure~\ref{fig:all_entropies},
in the binary case. 

\section{Positive measure prediction with Fenchel-Young losses}
\label{sec:positive_measures}

In this section, we again restrict to classification and $\cY =
\{\e_i\}_{i=1}^d$ but now assume that $\dom(\Omega) \subseteq \cone(\cY) =
\RR^d_+$, where $\cone(\cY)$ is the conic hull of $\cY$.
In this case, the regularized prediction function \eqref{eq:prediction} becomes
\begin{equation}
\yHatOmega(\s) 
\in \argmax_{\m \in \RR^d_+} {\DP{\s}{\m}} - \Omega(\m),
\end{equation}
where $\s \in \RR^d$ is again a vector of prediction scores
and $\m \in \RR^d_+$ can be interpreted as a discrete positive measure
(unnormalized probability distribution).

We first demonstrate how Fenchel-Young losses over positive measures allow to
recover \textbf{one-vs-all reductions} (\S\ref{sec:one_vs_all}), theoretically
justifying this popular scheme. We then give examples of loss
instantiations (\S\ref{sec:ova_examples}).

\subsection{Uniformly separable regularizers and one-vs-all loss functions}
\label{sec:one_vs_all}

A particularly simple case is that of uniformly separable $\Omega$, i.e.,
$\Omega(\m) = \sum_{j=1}^d \phi(m_j) + I_{\RR_+^d}(\m)$, for some $\phi \colon
\RR_+ \to \RR$. In that case the regularized prediction
function can be computed in a coordinate-wise fashion:
\begin{equation}
(\yHatOmega(\s))_j = 
\argmax_{m \in \RR_+} ~ m \ss_j - \phi(m).
\end{equation}
As we shall later see, this simplified optimization problem often enjoys a
closed-form solution.
Intuitively, $(\yHatOmega(\s))_j$ 
can be interpreted as the ``unnormalized probability'' of class $j$.
Likewise, the corresponding Fenchel-Young loss is separable over classes:
\begin{equation}
L_\Omega(\s; \y) 
= \Omega^*(\s) + \Omega(\y) - \DP{\s}{\y}
= \sum_{j=1}^d \phi^*(\ss_j) + \phi(y_j) - \ss_j y_j
= \sum_{j=1}^d L_\phi(\ss_j; y_j), 
\end{equation}
where $y_j \in \{1,0\}$ indicates membership to class $j$. 
The separability allows to train the model producing each $\ss_j$ in an
embarrassingly parallel fashion.

Let us now consider the case $\phi(p) = -\HH([p,1-p])$, where $\HH$ is a
generalized entropy and $\phi$ is restricted to $[0,1]$.  From
\S\ref{sec:simplex_binary_case}, we know that $\phi^*(s) = \phi^*(-s) + s$ for
all $s \in \RR$. If
$\HH$ further satisfies assumption A.1, meaning that $\phi(1)=\phi(0)=0$, then
we obtain
\begin{equation}
L_\phi(s; 1) = \phi^*(s) + \phi(1) - s = \phi^*(-s) 
\quad \text{and} \quad
L_\phi(s; 0) = \phi^*(s) + \phi(0) = \phi^*(s).
\end{equation}
Combining the two, we obtain
\begin{equation}
    L_\Omega(\s; \y) = \sum_{j=1}^d \phi^*(-(2 y_i - 1) \ss_i),
\end{equation}
where we used that $(2 y_i - 1) \in \{+1, -1\}$.
Fenchel-Young losses thus
recover classical \textbf{one-vs-all} loss functions.
Since Fenchel-Young losses satisfy $L_\Omega(\s; \y) = 0 \Leftrightarrow
\yHatOmega(\s) = \y$, our framework provides a theoretical justification for
one-vs-all, a scheme that works well in practice \citep{defense_ova}.
Further, our framework justifies using $\yHatOmega(\s)$ as a measure of class
membership, as commonly implemented (with post-normalization) in software
packages \citep{sklearn,sklearn_api}.

Finally, we point out that in the binary case, choosing $\s = [\qq, -\qq]$, we
have the relationship
\begin{equation}
L_\Omega(\s; \e_1) = 2 \phi^*(-s)
\quad \text{and} \quad
L_\Omega(\s; \e_2) = 2 \phi^*(s).
\end{equation}
Thus, we recover the same loss as in \S\ref{sec:simplex_binary_case}, up to a
constant 2 factor (i.e., learning over the simplex $\triangle^d$ or over the
unit cube $[0,1]^d$ is equivalent for binary classification).

\subsection{Examples} 
\label{sec:ova_examples}

Recall that $\Omega(\m) = \sum_{j=1}^d \phi(m_j) + I_{\RR_+^d}(\m)$.
Choosing $\phi(p) = \frac{1}{2} p^2$ gives
\begin{equation}
\yHatOmega(\s)
= [\s]_+
\quad \text{and} \quad
L_\Omega(\s; \y) = \sum_{i=1}^d \frac{1}{2} [\ss_i]_+^2 + \frac{1}{2} y_i^2 -
\ss_i y_i.
\end{equation}

Choosing $\phi(p) = -\HHs([p,1-p]) + I_{[0,1]}(p)$ leads to
\begin{equation}
\yHatOmega(\s) = \sigmoid(\s)
\coloneqq \frac{\ones}{\ones + \exp(-\s)},
\quad \text{and} \quad
L_\Omega(\s; \y) = \sum_{i=1}^d \log(1 + \exp(-(2 y_i-1) \ss_i))
\end{equation}
the classical sigmoid function and the one-vs-all logistic function.
Note that $-\Omega(\p) = \sum_j \HHs([p_j,1-p_j]) = -\sum_j p_j \log p_j +
(1-p_j) \log (1-p_j)$ is sometimes known as the Fermi-Dirac entropy
\citep[Commentary of Section 3.3]{borwein_2010}.

Choosing $\phi(p) = -\HHt_2([p,1-p]) + I_{[0,1]}(p) = p^2 - p
+ I_{[0,1]}(p)$ leads to 
\begin{equation}
(\yHatOmega(\s))_j = 
(\phi^*)'(s_j) = 
\begin{cases}
    0 & \mbox { if } u \le -1 \\
     1 & \mbox { if } u \ge 1 \\
    \frac{1}{2} (u+1) & \mbox{ o.w. }
\end{cases} 
\quad \forall j \in [d].
\end{equation}
We can think of this function as a ``sparse sigmoid''. 

\section{Structured prediction with Fenchel-Young losses}
\label{sec:structured_prediction}

In this section, we now turn to prediction over the convex hull of a set $\cY$
of structured objects (sequences, trees, assignments, etc.), represented as
$d$-dimensional vectors, i.e. $\cY \subseteq \RR^d$. 
In this case, the regularized prediction function \eqref{eq:prediction} becomes
\begin{equation}
    \yHatOmega(\s) = \argmax_{\bmu \in \conv(\cY)} \DP{\bmu}{\s} - \Omega(\bmu).
\label{eq:structured_yhat}
\end{equation}
Typically, $\sizeY$ will be exponential in $d$, i.e., $d \ll \sizeY$.  
Because $\yHatOmega(\s) = \sum_{\y \in \cY} p(\y) \y = \EE_\p[Y]$ for some $\p
\in \triangleY$, $\yHatOmega(\s)$ can be interpreted as the expectation under
some (not necessarily unique) underlying distribution.

We first discuss the concepts of probability distribution regularization
(\S\ref{sec:proba_regul}) and mean regularization (\S\ref{sec:mean_regul}), and
the implications in terms of computational tractability and identifiability
(note that in the unstructured setting, probability distribution
and mean regularizations coincide).
We then give several examples of polytopes and show how a Fenchel-Young loss
can seamlessly be constructed over them given access to regularized prediction
functions (\S\ref{sec:polytope_examples}).
Finally, we extend the notion of separation margin to the structured setting
(\S\ref{sec:structured_margin}).

\subsection{Distribution regularization, marginal inference and structured sparsemax}
\label{sec:proba_regul}

In this section, we discuss the concept of \textbf{probability distribution
regularization}. Our treatment follows closely the variational formulations
of exponential families \citep{exponential_families,wainwright_2008} and
generalized exponential families \citep{grunwald_2004,frongillo_2014} but
adopts the novel viewpoint of regularized prediction functions.
We discuss two instances of that framework: the
structured counterpart of the softmax, marginal inference,
and a new structured counterpart of the sparsemax, structured sparsemax.

Let us start with a probability-space regularized prediction function
\begin{equation}
\widehat{\y}_{-\HH}(\qtheta) = \argmax_{\p \in \triangleY} \DP{\p}{\qtheta} +
\HH(\p),
\end{equation}
where $\HH$ is a generalized entropy and  $\qtheta \coloneqq (\DP{\s}{\y})_{\y
\in \cY} \in \RRY$
is a vector that contains the scores of all possible structures.
Note that in the unstructured setting, where $\cY = \{\e_1,\dots,\e_d\}$, 
we have $\qtheta = \s$ and hence the distinction between $\s$ and $\qtheta$
is not necessary.
In the structured prediction setting, however,
the above optimization problem is defined over a space of size $\sizeY \gg d$.
The regularized prediction function outputs a probability distribution over
structures and from Danskin's theorem, we have
\begin{equation}
    \widehat{\y}_{-\HH}(\qtheta) = \nabla_{\qtheta} (-\HH)^*(\qtheta) = \p^\star
    \in \triangleY.
\end{equation}
Using the chain rule, differentiating \wrt $\s$ instead of $\qtheta$ gives
\begin{equation}
    \nabla_{\s} (-\HH)^*(\qtheta) = \EE_{\p^\star}[Y] \in \conv(\cY).
\end{equation}
Therefore, the gradient \wrt $\s$ corresponds to
the mean under $\p^\star$. This can be equivalently expressed under our
framework by defining a
regularization function directly in mean space. Let us define the regularization
function $\Omega$ over $\conv(\cY)$ as
\begin{equation}
-\Omega(\bmu) \coloneqq \max_{\p \in \triangleY} \HH(\p) 
\quad \text{s.t.} \quad \EE_\p[Y]=\bmu.
\label{eq:generalized_max_entropy}
\end{equation}
That is, among all distributions 
satisfying the first-moment matching
condition $\EE_\p[Y]=\bmu$,
we seek the distribution $\p^\star$ with maximum (generalized) entropy. This
allows to make the underlying distribution unique and identifiable.
With this choice of $\Omega$, simple calculations show that the corresponding
regularized prediction function is precisely the mean under that distribution:
\begin{equation}
\yHatOmega(\s) = \nabla \Omega^*(\s) = \EE_{\p^\star}[Y] \in \conv(\cY).
\end{equation}
Similar results hold for higher order moments: If $\Omega^*$ is
twice-differentiable, the Hessian corresponds to the second moment under
$\p^\star$:
\begin{equation}
\textnormal{cov}_{\p^\star}[Y] = \nabla^2 \Omega^*(\s).
\end{equation}

The relation between these various maps
is summarized in Figure \ref{fig:map_diagram}.
\begin{figure}[t]
\begin{center}
\includegraphics[scale=1.2]{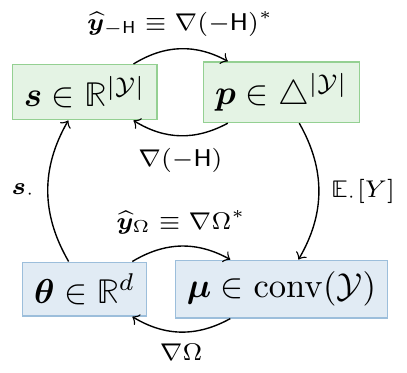}
\end{center}
\caption{{\bf Summary of maps between spaces.}
    We define $\Omega$ over $\conv(\cY)$ using a generalized maximum entropy
    principle:
$-\Omega(\bmu) \coloneqq \max_{\p \in \triangleY} \HH(\p) 
\text{ s.t. } \EE_\p[Y]=\bmu$
(cf. \S\ref{sec:proba_regul}).  
We also define $\qtheta \coloneqq (\DP{\s}{\y})_{\y
\in \cY} \in \RRY$,
the vector that contains the scores of all possible structures, a linear map
from $\RR^d$ to $\RRY$.
Note that the diagram is not necessarily commutative.} \label{fig:map_diagram}
\end{figure}

\paragraph{Marginal inference.}

A particular case of \eqref{eq:generalized_max_entropy} is 
\begin{equation}
-\Omega(\bmu) = \max_{\p \in \triangleY} \HHs(\p) 
\quad \text{s.t.} \quad \EE_\p[Y]=\bmu,
\label{eq:Omega_CRF}
\end{equation}
where $\HHs$ is the Shannon entropy. 
The distribution achieving
maximum Shannon entropy, which is unique, is the Gibbs distribution
$p(\y; \s) \propto \exp(\DP{\s}{\y})$.

As shown in Table \ref{tab:fy_losses_examples},
the resulting loss $L_\Omega$ is 
the CRF loss \citep{Lafferty2001}
and the resulting regularized prediction function $\yHatOmega$ is known as 
marginal inference in the literature \citep{wainwright_2008}:
\begin{equation}
\yHatOmega(\s) = \marginals(\s) \coloneqq    
\sum_{\y \in \cY} \exp(\DP{\s}{\y}) \y \big / Z(\s),
\end{equation}
where $Z(\s) \coloneqq \sum_{\y \in \cY} \exp(\DP{\s}{\y})$ is the partition
function (normalization constant) of the Gibbs distribution.
Although marginal inference is intractable in general, we will see in
\S\ref{sec:polytope_examples} that it can
be computed exactly and efficiently for specific polytopes.  
The conjugate of $\Omega(\bmu)$, $\Omega^*(\s)$, corresponds to the log
partition function: $\Omega^*(\s) = \log Z(\s)$.

\paragraph{Structured sparsemax.}

From the above perspective, it is tempting to define a sparsemax counterpart of
the CRF loss and of marginal inference by replacing Shannon's entropy $\HHs$
with the Gini index $\HHt_2$ (see \eqref{eq:gini_entropy} for a definition) in
\eqref{eq:Omega_CRF}:
\begin{equation}
-\Omega(\bmu) = \max_{\p \in \triangleY} \HHt_2(\p) 
\quad \text{s.t.} \quad \EE_\p[Y]=\bmu.
\end{equation}
This is equivalent to seeking a distribution $\p^\star$ with minimum squared norm
\begin{equation}
    \Omega(\bmu) + \frac{1}{2} = \min_{\p \in \triangleY} \frac{1}{2} \|\p\|^2 
\quad \text{s.t.} \quad \EE_\p[Y]=\bmu.
\end{equation}
The corresponding $\yHatOmega(\s)$ is then $\yHatOmega(\s) = \EE_{\p^\star}[Y]$.
Recalling that $\p^\star = \widehat{\y}_{-\HHt_2}(\qtheta)$,
a naive approach would be to first compute the score vector $\qtheta =
(\DP{\s}{\y})_{\y \in \cY} \in \RRY$ and then to
project that vector on the simplex to obtain $\p^\star$.
Unfortunately, $\qtheta$ could be exponentially large and $\p^\star$ could be
arbitrarily dense in the worst case, making that approach intractable in
general.
As also noted by \citet{smoother_way}, we can approximate the problem by further
imposing an upper-bound on the sparsity of the distribution, $\|\p\|_0 \le k$.
Since the optimal $k$-sparse distribution solely depends on the top-$k$ elements
of $\qtheta$ \citep{kyrillidis_2013}, we can first use a $k$-best oracle to
retrieve the top-$k$ elements of $\qtheta$ and project them onto the simplex. A
disadvantage of that approach is that $k$-best oracles are usually more complex
than MAP oracles and $k$ could be arbitrarily large in order to guarantee an
exact solution to the original problem. For shortest path problems over a
directed acyclic graph, an alternative approximation is to smooth the maximum
operator directly in the dynamic programming recursion
\citep{differentiable_dp}. 

\subsection{Mean regularization and SparseMAP}
\label{sec:mean_regul}

Marginal inference is computationally tractable only for certain 
polytopes and structured sparsemax requires approximations, even for
polytopes for which exact marginal inference is available. In this section,
we discuss an alternative approach which only requires access to a MAP oracle,
broadening the set of applicable polytopes.
The key idea is \textbf{mean regularization}: We directly regularize the mean
$\bmu$ rather than the distribution $\p$. 

The mean regularization counterpart of marginal inference is
\begin{equation}
\widehat{\y}_{-\HHs}(\s) = \argmax_{\bmu \in \conv(\cY)} \DP{\s}{\bmu} + \HHs(\bmu)
= \argmin_{\bmu \in \conv(\cY)} \KL(\bmu \| e^{\s - \ones}).
\end{equation}
It has been used for specific convex polytopes, most importantly in the optimal
transport literature \citep{cuturi_2013,peyre_2017} but also for learning to
predict permutation matrices \citep{helmbold_2009} or permutations
\citep{yasutake_2011,ailon_2016}.
The mean regularization counterpart of sparsemax is known as SparseMAP
\citep{sparsemap}: 
\begin{equation}
\yHatOmega(\s) 
= \sparsemap(\s) \coloneqq
\argmax_{\bmu \in \conv(\cY)} \DP{\s}{\bmu} 
- \frac{1}{2} \|\bmu\|^2 
= \argmin_{\bmu \in \conv(\cY)} \|\bmu - \s\|^2.
\label{eq:sparsemap}
\end{equation}

The main advantage of mean regularization is computational.
Indeed, computing $\yHatOmega(\s)$ now simply involves a $d$-dimensional
optimization problem instead of a $|\cY|$-dimensional one and
can be cast as a Bregman projection onto $\conv
(\cY)$(\S\ref{sec:relation_Bregman}). For specific polytopes, that projection
can often be computed
directly (\S\ref{sec:polytope_examples}). More generally, it can always be
computed to arbitrary precision given access to a MAP oracle
\begin{equation}
\argmax_{\y \in \conv(\cY)} \DP{\s}{\y}=
\argmax_{\y \in \cY} \DP{\s}{\y} \eqqcolon \map(\s),
\end{equation}
thanks to conditional gradient algorithms (\S\ref{sec:computing_yOmega}).
Since MAP inference is a cornerstone of structured prediction,
efficient algorithms have been developed for many kinds of structures
(\S\ref{sec:polytope_examples}).
In addition, as conditional gradient algorithms maintain a convex combination of
vertices, they can also return a (not necessarily unique) distribution $\p \in
\triangleY$ such that $\yHatOmega(\s) = \EE_\p[Y]$.
From Carath\'{e}odory's theorem, the support of $\p$
contains at most $d \ll |\cY|$ elements. This ensures that $\yHatOmega(\s)$ can
be written as a ``small'' number of elementary structures.
The price to pay for this computational tractability is that the underlying
distribution $\p$ is not necessarily unique.

\subsection{Examples}
\label{sec:polytope_examples}

Deriving loss functions for structured outputs can be challenging.  In this
section, we give several examples of polytopes $\conv(\cY)$ for which
efficient computational oracles (MAP, marginal inference, projection) are
available, thus allowing to obtain a Fenchel-Young loss for learning over these
polytopes.  A summary is given in Table \ref{table:polytopes}.

\begin{table}[t]
\label{table:polytopes}
\caption{{\bf Examples of convex polytopes and computational cost of the three main
    computational oracles:} MAP, marginal inference and projection (in the Euclidean
    distance sense or more generally in the Bregman divergence sense). 
For polytopes for which
direct marginal inference or projection algorithms are not available, we can
always use conditional gradient algorithms to compute an optimal solution to
arbitrary precision --- see \S\ref{sec:computing_yOmega}.
}
\begin{center}
\begin{small}
\begin{tabular}{lccccc}
\toprule
Polytope & Vertices & Dim. & MAP & Marginal & Projection \\
\midrule
Probability simplex & Basis vectors & $d$ & $O(d)$ & $O(d)$ & $O(d)$ \\
                    & Sequences & $nm^2$ & $O(nm^2)$ & $O(nm^2)$ & N/A \\
Arborescence & Spanning trees & $n(n-1)$ & $O(n^2)$ & $O(n^3)$ & N/A \\
             & Alignments & $nm$ & $O(nm)$ & $O(nm)$ & N/A \\
Permutahedron & Permutations & $d$ & $O(d \log d)$ & N/A & $O(d \log d)$ \\
Birkhoff & Permutation matrices & $n^2$ & $O(n^3)$ & \#P-complete & $O(n^2 /
\epsilon)$ \\
\bottomrule
\end{tabular}
\end{small}
\end{center}
\end{table}

\paragraph{Sequences.}

We wish to tag a sequence $(\x_1, \dots, \x_n)$ of vectors in $\RR^p$
(\textit{e.g.,} word representations) with the most likely output sequence
(\textit{e.g.,} entity tags) $\bm{s} = (s_1, \dots, s_n) \in [m]^n$.  It is
convenient to represent each sequence $\bm{s}$ as a $n \times m \times m$ binary
tensor $\bm{y} \in \cY$, such that $y_{t,i,j} = 1$ if $\y$ transitions from node
$j$ to node $i$ on time $t$, and~$0$ otherwise.  The potentials $\s$ can
similarly be organized as a $n \times m \times m$ real tensor, such that
$\theta_{t,i,j} = \phi_t(\x_t, i, j)$, where $\phi_t$ is a potential function.
Using the above binary tensor representation, the Frobenius inner product
$\langle \s, \y \rangle$ is equal to $\sum_{t=1}^{n} \phi_t(\x_t, s_{t},
s_{t-1})$, the cumulated score of $\bm{s}$.

MAP inference, $\argmax_{\y \in \cY} \DP{\y}{\s}$, seeks the
highest-scoring sequence and can be computed using
Viterbi's algorithm \citep{viterbi_error_1967} in $O(nm^2)$ time.  Marginal
inference can be computed in the same cost using the forward-backward algorithm
\citep{baum_1966} or equivalently using backpropagation
\citep{eisner_inside-outside_2016,differentiable_dp}.

When $\Omega$ is defined over $\conv(\cY)$ as in \eqref{eq:Omega_CRF},
the Fenchel-Young loss corresponds to a
linear-chain conditional random field loss \citep{Lafferty2001}. In recent
years, this loss has been used to train various end-to-end natural language
pipelines based on neural networks \citep{collobert_2011,lample_2016}.

\paragraph{Alignments.}

Let $\bm{A} \in \RR^{m \times p}$ and $\bm{B} \in \RR^{n \times p}$ be two
time-series of lengths $m$ and $n$, respectively.  We denote by $\bm{a}_i \in
\RR^p$ and $\bm{b}_j \in \RR^p$ their $i^{\text{th}}$ and $j^\text{th}$
observations.  Our goal is to find an alignment between $\bm{A}$ and $\bm{B}$,
matching their observations.
We define $\s$ as a $m \times n$ matrix, such that
$\theta_{i,j}$ is the similarity between observations $\bm{a}_i$ and
$\bm{b}_j$.
Likewise, we represent an alignment $\y$ as a $m \times n$ binary matrix, such
that $y_{i,j}=1$ if $\bm{a}_i$ is aligned with $\bm{b}_j$, and $0$ otherwise. 
We write $\cY$ the set of all monotonic alignment matrices, such that the path
that connects the upper-left $(1,1)$ matrix entry to the lower-right $(m,n)$
one uses only $\downarrow,\rightarrow,\searrow$ moves. 

In this setting, MAP inference, $\argmax_{\y \in \cY} \DP{\y}{\s}$, corresponds
to seeking the maximal similarity (or equivalently,  minimal cost) alignment
between the two time-series. It can be computed in $O(mn)$ time using dynamic
time warping, DTW \citep{Sakoe78}. Marginal inference can be computed
in the same cost by backpropagation, if we replace
the hard minimum with a soft one in the DTW recursion. This algorithm is known
as soft-DTW \citep{soft_dtw,differentiable_dp}.

Structured SVMs were combined with DTW to learn to predict music-to-score
alignments \citep{garreau_metric_2014}. Our framework easily enables extensions
of this work, such as replacing DTW with soft-DTW, which
amounts to introducing entropic regularization \wrt the probability
distribution over alignments.

\paragraph{Spanning trees.}

When $\cY$ is the set of possible directed spanning trees (arborescences) of a
complete graph $\mathcal{G}$ with $n$ vertices, the convex hull $\conv(\cY) \subset \RR^{n(n-1)}$ is known as the
arborescence polytope \citep{andre-concise} (each $\y \in \cY$ is a binary vector which indicates which arcs belong to the arborescence). MAP inference may be performed by
maximal arborescence algorithms \citep{Chu1965,Edmonds1967} in $O(n^2)$ time
\citep{tarjan}, and the Matrix-Tree theorem \citep{Kirchhoff1847} provides an
$O(n^3)$ marginal inference algorithm
\citep{koo-mt,DSmithSmith2007}.
Spanning tree structures are often used in natural language processing for
(non-projective) \emph{dependency parsing}, with graph edges corresponding to dependencies
between the words in a sentence.
\citep{mstparser,kg}.

\paragraph{Permutations.}

We view ranking as a structured prediction problem.
Let $\cY$ be the set of $d$-permutations of a prescribed vector $\w \in \RR^d$,
i.e., $\cY = \{ [w_{\pi_1}, \dots, w_{\pi_d}] \in \RR^d \colon \bpi \in \Pi\}$,
where $\Pi$ denotes the permutations of $(1, \dots, d)$.
We assume without loss of generality that $\w$ is sorted in descending order,
i.e., $w_1 \ge \dots \ge w_d$.
MAP inference seeks the permutation of $\w$ whose inner product
with $\s$ is maximized:
\begin{equation}
\max_{\bm{y} \in \cY} \DP{\s}{\y} 
= \max_{\bm{y} \in \cY} \sum_{i=1}^d \ss_i y_i
= \max_{\bpi \in \Pi} \sum_{i=1}^d \ss_i w_{\pi_i}
= \max_{\bpi \in \Pi} \sum_{i=1}^d \ss_{\pi_i} w_i.
\end{equation}
Since $\w$ is assumed sorted, an optimal solution $\bpi^\star$ of the last
optimization problem is a permutation sorting $\s$ in descending order.
The function $\s \mapsto \sum_{i=1}^d \theta_{\pi^\star_i} w_i$ is called
ordered weighted averaging (OWA) operator \citep{yager_1988} and includes the
mean and max operator as special cases. MAP
inference over $\cY$ can be seen as the variational formulation of the OWA
operator. An optimal solution 
$\y^\star$ is simply $\w$ sorted using the inverse permutation of $\bpi^\star$.
The overall computational cost is therefore $O(d \log d)$, for sorting $\s$.

The convex hull of $\cY$, $\conv(\cY)$, is known as the \emph{permutahedron} 
when $\w = [d,\dots,1]$. For arbitrary $\w$, we follow 
\citet{projection_permutahedron} and call $\conv(\cY)$ the permutahedron induced
by $\w$.
Its vertices correspond to permutations of $\w$.

We can define mean-regularized ranking prediction functions $\yHatOmega(\s)$ if we
choose $\w$ such that each $w_i$ represents the ``preference'' of being in
position $i$. 
Intuitively, the score vector $\s \in \RR^d$ should be such that each $\ss_i$
is the score of instance $i$ (e.g., a document or a label).
We give several examples below.
\begin{itemize}[topsep=0pt,itemsep=3pt,parsep=3pt,leftmargin=8pt]
    \item Choosing $\w = [1, 0, \dots, 0]$, $\conv(\cY)$ is equal to
        $\triangle^d$, the probability simplex. We thus recover probabilistic
        classification as a natural special case.

    \item Choosing $\w = \frac{1}{k} [\underbrace{1, \dots, 1}_{k \text{
        times}}, 0, \dots, 0]$, $\conv(\cY)$ is equal to $\{\p \in \triangle^d
        \colon \|\p\|_\infty \le \frac{1}{k} \}$, sometimes referred to as the
        capped probability simplex
        \citep{warmuth_2008,projection_permutahedron}. This setting corresponds
        to predicting $k$-subsets.

    \item Choosing $\w = \frac{2}{d (d+1)} [d, d-1, \dots, 1]$ corresponds to
        predicting full rankings.

    \item Choosing $\w = \frac{2}{k (k+1)} [k, k-1, \dots, 1, 
        \underbrace{0, \dots, 0}_{d - k - 1 \text{ times}}]$
        corresponds to predicting partial rankings.
\end{itemize}
The corresponding polytopes are illustrated in Figure \ref{fig:permutahedron}.
In all the examples above, $\w \in \triangle^d$, implying $\conv(\cY)
\subseteq \triangle^d$. Therefore, $\yHatOmega(\s)$ outputs a probability
distribution.

\begin{figure}[t]
\centering
\includegraphics[scale=3.5]{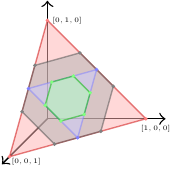}
\caption{{\bf Examples of instances of permutahedron induced by $\w$.}
Round circles indicate vertices of the permutahedron, permutations of $\w$.
Choosing $\w = [1,0,0]$ recovers the probability simplex (red) while choosing
$\w = [\frac{1}{2},\frac{1}{2},0]$ recovers the capped probability simplex
(blue).  The other two instances are obtained by $\w =
[\frac{2}{3},\frac{1}{3},0]$ (gray) and
$\w=[\frac{1}{2},\frac{1}{3},\frac{1}{6}]$ (green). Euclidean projection onto
these polytopes can be cast as isotonic regression. More generally, Bregman
projection reduces to isotonic \textit{optimization}.}
\label{fig:permutahedron}
\end{figure}

As discussed in \S\ref{sec:relation_Bregman}, computing the regularized
prediction function $\yHatOmega(\s)$ is equivalent
to a Bregman projection when $\Omega = \Psi + I_\cC$, where $\Psi$ is Legendre type.
The Euclidean projection onto $\cC=\conv(\cY)$ reduces to isotonic regression
\citep{zeng_2014,orbit_regul}. The computational cost is $O(d \log d)$. More
generally, Bregman projections reduce to isotonic \textit{optimization}
\citep{projection_permutahedron}. This provides a unified way to compute
$\yHatOmega(\s)$ efficiently, regardless of $\w$.

The generated Fenchel-Young loss $L_\Omega(\s; \y)$, is illustrated in Figure
\ref{fig:ranking_loss} for various choices of $\Omega$. When $\Omega=0$, as
expected, the loss is zero as long as the
predicted ranking is correct.  Note that in order to define a meaningful loss,
it is necessary that $\y \in \cY$ or more generally $\y \in \conv(\cY)$. That
is, $\y$ should belong to the convex hull of the permutations of $\w$.

Permutahedra have been used to derive online learning to rank algorithms
\citep{yasutake_2011,ailon_2016} but it is not obvious how to extract a loss
from these works.
Ordered weighted averaging (OWA) operators have been used to define related
top-$k$ multiclass losses \citep{usunier_2009,lapin_2015} but without
identifying the connection with permutahedra. 
Our construction follows directly from the general Fenchel-Young loss framework
and provides a novel geometric perspective.

\begin{figure}[t]
\centering
\includegraphics[width=\linewidth]{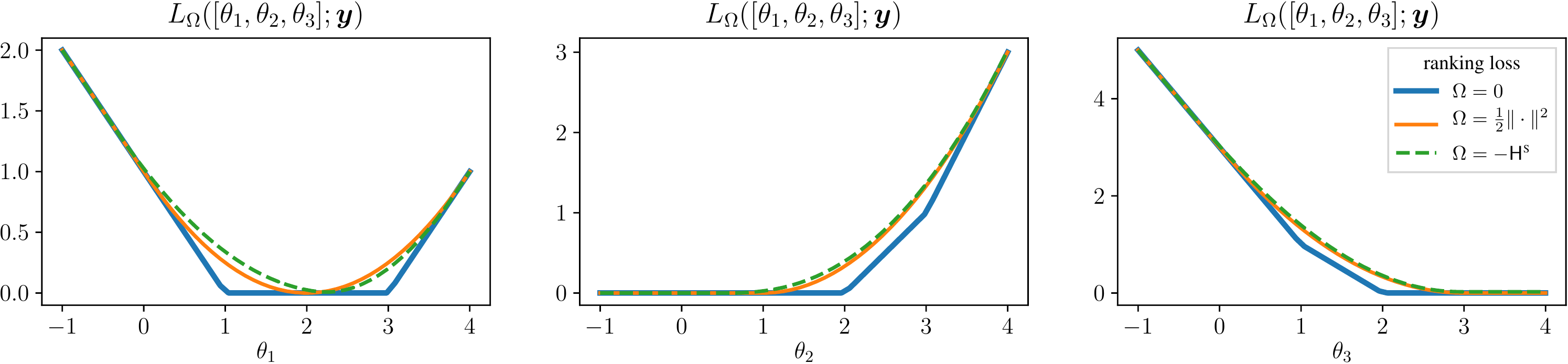}
\caption{\textbf{Ranking losses} generated by restricting $\dom(\Omega)$ to the
    permutahedron induced by $\w = [3, 2, 1]$. We set the ground-truth $\y=
[2,1,3]$, i.e., a permutation of $\w$. We set $\s=\y$ and inspect how the loss
changes when varying each $\ss_i$.
When $\Omega=0$, the loss is zero as expected when $\theta_1 \in [1,3]$,
$\theta_2 \in (-\infty,2]$ and $\theta_3 \in [2,\infty)$, since the ground-truth
ranking is still correctly predicted in these intervals.
When $\Omega=\frac{1}{2}\|\cdot\|^2$ and $\Omega=-\HHs$, we obtain a smooth
approximation. Although not done here for clarity, if $\w$ is normalized 
such that $\w \in \triangle^d$, then $\yHatOmega(\s) \in \triangle^d$ as well.
}
\label{fig:ranking_loss}
\end{figure}

\paragraph{Permutation matrices.}

Let $\cY \subset \{0,1\}^{n \times n}$ be the set of $n \times n$ permutation
matrices.
MAP inference is the solution of the linear assignment problem
\begin{equation}
\max_{\bm{y} \in \cY} \DP{\s}{\y}
= \max_{\bpi \in \Pi} \sum_{i=1}^n \theta_{i, \pi_i},
\end{equation}
where $\Pi$ denotes the permutations of $(1,\dots,n)$.
The problem can be solved exactly in $O(n^3)$ time using the Hungarian
algorithm~\citep{hungarian} or the Jonker-Volgenant algorithm~\citep{lapjv}.
Noticeably, marginal inference is known to be \#P-complete
\citep[Section 3.5]{valiant1979complexity,taskar-thesis}. This makes it an open
problem how to solve marginal inference for this polytope.

In contrast, projections on 
the convex hull of $\cY$, 
the set of doubly stochastic matrices known as Birkhoff polytope
\citep{birkhoff}, can be computed efficiently. Since the
Birkhoff polytope is a special case of transportation polytope, we can leverage
algorithms from the optimal transport
literature to compute the mean-regularized prediction function.
When $\Omega(\bmu) = \DP{\bmu}{\log \bmu}$, $\yHatOmega(\s)$ can be computed
using the Sinkhorn algorithm \citep{sinkhorn,cuturi_2013}. For other
regularizers, we can use Dykstra's algorithm \citep{rot_mover} or dual
approaches \citep{blondel_2018}.  The cost of obtaining an
$\epsilon$-approximate solution is typically $O(n^2/\epsilon)$.

The Birkhoff polytope has been used to define continuous relaxations of
non-convex ranking losses \citep{sinkprop}. In contrast, the Fenchel-Young loss
over the Birkhoff polytope (new to our knowledge) is convex
by construction.
Although working with permutation matrices is more computationally expensive
than working with permutations, it brings different modeling power, since it
allows to take into account similarity between instances (e.g., text documents)
through the similarity matrix $\s$.  It also enables other applications, such as
learning to match.

\subsection{Structured separation margins}
\label{sec:structured_margin}

We end this section by extending some of the results in \S\ref{sec:margin} for the structured prediction case, showing that {\bf there is also a relation between structured entropies and margins.} 
In the sequel, we assume that structured outputs are contained in a sphere of radius $r$, i.e., $\|\y\| = r$ for any $\y \in \cY$. 
This holds for all the examples above (sequences, alignments, spanning trees, permutations of a given vector, and permutation matrices). In particular, it holds whenever outputs are represented as binary vectors with a constant number of entries set to 1; this includes overcomplete parametrizations of discrete graphical models 
\citep{wainwright_2008}. 

\smallskip

\begin{definition}{\label{def:structured_margin}Structured separation margin}

Let $L(\s; \y)$ be a loss function over $\RR^d \times \cY$.
We say that $L$ has
the \emph{structured separation margin property} if there exists $m>0$ such that, for any $\y \in \cY$:
\begin{equation}
\DP{\s}{\y} \ge \max_{\y' \in \cY} \left( \DP{\s}{\y'} + \frac{m}{2}\|\y - \y'\|^2\right) \quad \Rightarrow \quad L(\s; \y) = 0.
\label{eq:margin_definition_struct}
\end{equation}
The smallest possible $m$ that satisfies \eqref{eq:margin_definition_struct} is called 
the \emph{margin} of $L$, denoted $\mathrm{margin}(L)$.
\end{definition}

Note that this definition generalizes the unstructured case (Definition~\ref{def:margin}), which is recovered when $\cY = \{\e_i\}_{i=1}^d$. 
Note also that, when outputs are represented as binary vectors, 
the term $\|\y - \y'\|^2$ is a {\bf Hamming distance}, which counts how many bits need to be flipped to transform $\y'$ into $\y$. 
The most famous example of a loss with a structured separation margin in the {\bf structured hinge loss} used in structured support vector machines \citep{taskar-thesis,structured_hinge}.

The next proposition extends Proposition~\ref{prop:margin}. We defer its proof to 
\S\ref{appendix:proof_structured_margin}. 
\vspace{0.5em}
\begin{proposition}\label{prop:structured_margin}
Assume $\Omega$ is convex and $\cY$ is contained in a sphere of radius $r$. Then:
\begin{enumerate}
\item The loss $L_{\Omega}$ has a structured separation margin iff there is  a $m>0$ such that, for any $\y \in \cY$, $m\y \in \partial \Omega(\y)$. 
\item If the above holds, then the margin of $L_{\Omega}$ is given by the smallest
such $m$ or, equivalently,
\begin{equation}\label{eq:structured_margin}
\mathrm{margin}(L_{\Omega}) = \sup_{\bmu \in \conv(\cY),\, \y \in \cY} \frac{\Omega(\y) - \Omega(\bmu)}{r^2 - \DP{\y}{\bmu}}.
\end{equation}
\end{enumerate}
\end{proposition}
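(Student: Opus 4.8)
The plan is to reduce both claims to the zero-loss characterization of Proposition~\ref{prop:fy_losses}, namely $L_\Omega(\s;\y) = 0 \Leftrightarrow \s \in \partial\Omega(\y)$ (valid since $\Omega$ is proper, l.s.c.\ and convex), and then to exploit the sphere constraint to linearize the quadratic margin term. Since $\|\y\| = \|\y'\| = r$ for $\y,\y' \in \cY$, I would first rewrite $\frac{m}{2}\|\y-\y'\|^2 = m(r^2 - \DP{\y}{\y'})$, so that the premise of the defining implication \eqref{eq:margin_definition_struct} becomes $\DP{\s - m\y}{\y - \bmu} \ge 0$ for all $\bmu \in \conv(\cY)$ (the quantification over vertices $\y' \in \cY$ extends to the hull because the functional $\DP{\s - m\y}{\cdot}$ is linear). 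In parallel, I would unfold $m\y \in \partial\Omega(\y)$ into the subgradient inequality $\Omega(\y) - \Omega(\bmu) \le m\DP{\y}{\y - \bmu} = m(r^2 - \DP{\y}{\bmu})$ for all $\bmu \in \conv(\cY)$.

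For the first claim, the reverse direction is then a short chaining argument: assuming $m\y \in \partial\Omega(\y)$, any $\s$ satisfying the linearized margin premise obeys $\DP{\s}{\bmu - \y} \le m\DP{\y}{\bmu - \y} \le \Omega(\bmu) - \Omega(\y)$ for every $\bmu \in \conv(\cY)$, which is exactly $\s \in \partial\Omega(\y)$, hence $L_\Omega(\s;\y) = 0$. For the forward direction, I would test the margin implication at the specific score $\s = m\y$: there the premise holds with equality (both sides of \eqref{eq:margin_definition_struct} equal $m r^2 - m\DP{\y}{\y'}$), so the margin property forces $L_\Omega(m\y;\y) = 0$, i.e.\ $m\y \in \partial\Omega(\y)$ for every $\y \in \cY$. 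Together these show the sharper statement that ``$L_\Omega$ has the margin property with value $m$'' is \emph{equivalent} to ``$m\y \in \partial\Omega(\y)$ for all $\y \in \cY$''.

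For the second claim, I would convert the subgradient inequality into a scalar bound on $m$. The key geometric fact is that $r^2 - \DP{\y}{\bmu} = \DP{\y}{\y-\bmu} = \tfrac{1}{2}\|\y-\bmu\|^2 + \tfrac{1}{2}(r^2 - \|\bmu\|^2) \ge \tfrac{1}{2}\|\y-\bmu\|^2 > 0$ whenever $\bmu \ne \y$ (using $\|\bmu\| \le r$ for $\bmu \in \conv(\cY)$, by convexity of the norm), while both numerator and denominator vanish at $\bmu = \y$. Dividing, $m\y \in \partial\Omega(\y)$ for all $\y$ is equivalent to $m \ge \frac{\Omega(\y)-\Omega(\bmu)}{r^2 - \DP{\y}{\bmu}}$ for all $\y \in \cY$ and all $\bmu \in \conv(\cY)$ with $\bmu \ne \y$. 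Since enlarging $m$ only relaxes this, the admissible $m$ form an interval unbounded above whose left endpoint is precisely the supremum in \eqref{eq:structured_margin}; combined with the equivalence established in the first claim, this identifies $\mathrm{margin}(L_\Omega)$.

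The main obstacle I anticipate is at the level of well-posedness rather than the core argument: I must confirm strict positivity of $r^2 - \DP{\y}{\bmu}$ off the diagonal (so the ratio is defined and dividing preserves the inequality direction) and correctly dispose of the degenerate $0/0$ case $\bmu = \y$, where the constraint is vacuous. A secondary subtlety is the ``smallest $m$'' wording: if the supremum in \eqref{eq:structured_margin} is not attained, there is no literal smallest admissible value, so I would read $\mathrm{margin}(L_\Omega)$ as the infimum of admissible margins and note that lower semicontinuity of $\Omega$ makes this infimum agree with the supremum formula.
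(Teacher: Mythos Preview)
Your proposal is correct and follows essentially the same route as the paper's proof: linearize the quadratic margin term via the sphere constraint, test the margin implication at $\s = m\y$ for the forward direction, chain the margin premise with the subgradient inequality for the converse, and then divide through to obtain the ratio formula. The paper packages the chaining step as a separate lemma (if $\s \in \partial\Omega(\y)$ and $\DP{\s'-\s}{\y'} \le \DP{\s'-\s}{\y}$ for all $\y' \in \cY$, then $\s' \in \partial\Omega(\y)$), which you simply inline with $\s = m\y$; your treatment of the positivity of $r^2 - \DP{\y}{\bmu}$ and the degenerate case $\bmu = \y$ is in fact more careful than the paper's, which divides without comment.
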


\paragraph{Unit margin of the SparseMAP loss.} 
We can invoke Proposition~\ref{prop:structured_margin} to show that the SparseMAP loss of \citet{sparsemap} has a structured margin of $1$, a novel result that follows directly from our construction. 
Indeed, from \eqref{eq:structured_margin}, we have that, for $\Omega(\bmu) = \frac{1}{2}\|\bmu\|^2$:
\begin{equation}
\mathrm{margin}(L_{\Omega}) = \sup_{\bmu \in \conv(\cY),\, \y \in \cY} \frac{\frac{1}{2}r^2 - \frac{1}{2}\|\bmu\|^2}{r^2 - \DP{\y}{\bmu}} = 1 - \inf_{\bmu \in \conv(\cY),\, \y \in \cY} \frac{\frac{1}{2}\|\y-\bmu\|^2}{\DP{\y}{\y - \bmu}} \le 1,
\end{equation}
where the last inequality follows from the fact that both the numerator and denominator in the second term are non-negative, the latter due to the Cauchy-Schwartz inequality. 
We now show that, for any $\y \in \cY$, we have $\inf_{\bmu \in \conv(\cY)} \frac{\frac{1}{2}\|\y-\bmu\|^2}{\DP{\y}{\y - \bmu}} = 0$. Choosing $\bmu = t\y' + (1-t)\y$, for an arbitrary $\y' \in \cY \setminus \{\y\}$, and letting $t \rightarrow 0^+$, we obtain $\frac{\frac{1}{2}\|\y-\bmu\|^2}{\DP{\y}{\y - \bmu}} = \frac{\frac{1}{2}t\|\y-\y'\|^2}{\DP{\y}{\y - \y'}} \rightarrow 0$.



\section{Algorithms for learning with Fenchel-Young losses}
\label{sec:training_algorithms}

In this section, we present generic primal (\S\ref{sec:primal_training}) and
dual (\S\ref{sec:dual_training}) algorithms for training predictive models with
a Fenchel-Young loss $L_\Omega$ for arbitrary $\Omega$. In doing so, we
obtain unified algorithms for training models with a wealth of existing and new
loss functions.
We then discuss algorithms for computing regularized prediction functions
(\S\ref{sec:computing_yOmega}) and proximity operators
(\S\ref{sec:computing_prox}).
We summarize these ``computational oracles'' in Figure~\ref{fig:oracles}.  

\begin{figure}[t]
\begin{center}
\includegraphics[width=\linewidth]{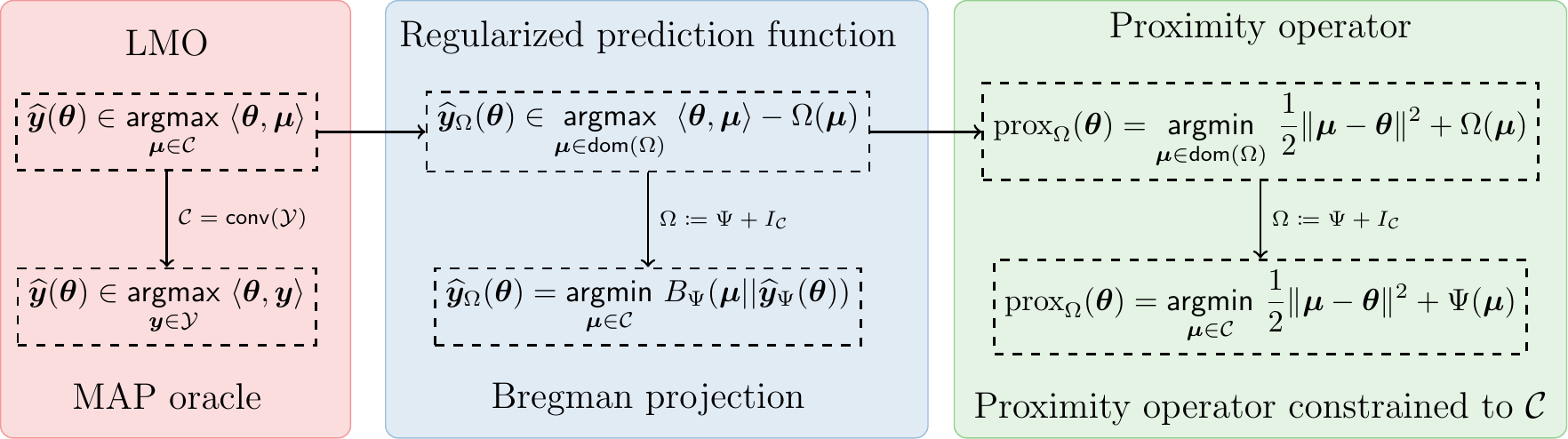}
\end{center}
\caption{\textbf{Summary of computational oracles:}
$\widehat{\y}(\s)$ is used to compute unregularized predictions or as a linear
maximization oracle in conditional gradient algorithms;
$\yHatOmega(\s)$ is used to compute regularized predictions,
loss values $L_{\Omega}(\s; \y) = f_{\s}(\y) - f_{\s}(\yHatOmega(\s))$,
where $f_{\s}(\bmu) \coloneqq \Omega(\bmu) - \DP{\s}{\bmu}$,
and loss gradients $\nabla L_{\Omega}(\s; \y) = \widehat{\y}_{\Omega}(\s) - \y$;
finally, $\prox_\Omega(\s)$ is used by training algorithms, in particular
block coordinate ascent algorithms. In the above, we assume $\Psi$ is
Legendre-type and $\cC \subseteq \dom(\Psi)$.}
\label{fig:oracles}
\end{figure}

\subsection{Primal training}
\label{sec:primal_training}

Let $\bm{f}_{W}:\cX \rightarrow \RR^d$ be a model parametrized by $W$. To learn
$W$ from training data $\{(\x_i, \y_i)\}_{i=1}^n$, $(\x_i,\y_i) \in \cX \times
\cY$, we minimize the regularized empirical risk
\begin{equation}
\min_{W} F(W) + G(W) 
\coloneqq \sum_{i=1}^n L_\Omega(\bm{f}_W(\x_i); \y_i) + G(W),
\label{eq:primal}
\end{equation}
where $G$ is a regularization function \wrt parameters $W$. Typical examples are
Tikhonov regularization, $G(W) = \frac{\lambda}{2} \|W\|_F^2$, and elastic net
regularization,
$G(W) = \frac{\lambda}{2} \|W\|^2_F + \lambda \rho \|W\|^2$, for some
$\lambda > 0$ and $\rho \ge 0$.

The objective in \eqref{eq:primal} is very broad and allows
to learn, e.g., linear models or neural networks using Fenchel-Young losses. Since $L_\Omega$ is
convex, if $\bm{f}_W$ is a linear model and $G$ is convex, then
\eqref{eq:primal} is convex as well.  Assuming further that $\Omega$ is strongly
convex, then $L_\Omega$ is smooth and \eqref{eq:primal} can be solved globally
using proximal gradient algorithms \citep{Wright2009,fista,saga}.  From
Proposition~\ref{prop:fy_losses}, the gradient of $L_\Omega(\s; \y)$ \wrt $\s$
is the residual vector
\begin{equation}
\nabla L_{\Omega}(\s; \y) = \widehat{\y}_{\Omega}(\s) - \y \in \RR^d.
\end{equation}
Using the chain rule, the gradient of $F$ \wrt $W$ is
\begin{equation}
    \nabla F(W) = \sum_{i=1}^n D^\top_{\bm{f}_W(\x_i)} 
    \nabla L_{\Omega}(\bm{f}_W(\x_i); \y_i),
\end{equation}
where $D_{\bm{f}_W(\x_i)}$ is the Jacobian of $\bm{f}_W(\x_i)$ \wrt $W$, a
linear map from the space of $W$ to $\RR^d$.
For linear models, we set $\s_i = \bm{f}_W(\x_i) = W \x_i$, where $W \in \RR^{d
\times p}$ and $\x_i \in \RR^p$, and thus get
\begin{equation}
\nabla F(W) = (\widehat{Y}_\Omega - Y)^\top X,
\end{equation}
where $\widehat{Y}_\Omega$, $Y$ and $X$ are matrices whose rows
gather $\yHatOmega(W \x_i)$, $\y_i$ and $\x_i$, for $i=1,\dots,n$.

In summary, the two main computational
ingredients to solve \eqref{eq:primal} are $\yHatOmega$ and the proximity
operator of $G$ (when $G$ is non-differentiable). 
This separation of concerns results in very modular implementations.

\paragraph{Remark on temperature scaling.}

When $\bm{f}_{W}$ is a linear model and $G$ is a homogeneous function, it is
easy to check that the temperature scaling parameter $t > 0$ from Proposition
\ref{prop:fy_losses} and the regularization strength $\lambda > 0$ above are
redundant. Hence, we can set $t=1$ at training time, without loss of generality.
However, at test time, using $\widehat{\y}_{t \Omega}(\s)$ and tuning $t$ can
potentially improve accuracy. For sparse prediction functions, $t$ also gives
control over sparsity (the smaller, the sparser).

\subsection{Dual training}
\label{sec:dual_training}

We now derive dual training of Fenchel-Young losses.
Although our treatment follows closely \citet{accelerated_sdca},
it is different in that we put the output regularization $\Omega$ at the center
of all computations, leading to a new perspective.

\paragraph{Dual objective.}

For linear models $\bm{f}_W(\x) = W \x$, where $W \in \RR^{d \times p}$,
it can sometimes be more computationally efficient to solve the corresponding
dual problem, especially in the $n \ll p$ setting.
From Fenchel's duality theorem (see for instance \citet[Theorem
3.3.5]{borwein_2010}),
we find that the dual problem of \eqref{eq:primal} is
\begin{equation}
\max_{\alpha \in \RR^{n \times d}}
-\sum_{i=1}^n L_\Omega^*(-\balpha_i; \y_i) - G^*\left(\alpha^\top X\right),
\end{equation}
and where $L_\Omega^*$ is the convex conjugate of $L_\Omega$ in its first
argument.

We now rewrite the dual problem using the specific form of $L_\Omega$.
Using $L_\Omega^*(-\balpha; \y) = \Omega(\y - \balpha) - \Omega(\y)$ and using
the change of variable $\bmu_i \coloneqq \y_i - \balpha_i$, we get
\begin{equation}
    \max_{\mu \in \RR^{n \times d}} -D(\mu) 
\text{ s.t. } \bmu_i \in \dom(\Omega) ~ \forall i \in [n],
\label{eq:dual}
\end{equation}
where we defined
\begin{equation}
    D(\mu) \coloneqq \sum_{i=1}^n \Omega(\bmu_i) - \Omega(\y_i) + G^*(V(\mu))  
    \quad \text{and} \quad
    V(\mu) \coloneqq (Y - \mu)^\top X.
\end{equation}
This expression is informative as we can interpret each $\bmu_i$ as regularized
predictions, i.e., $\bmu_i$ should belong to $\dom(\Omega)$,
the same domain as $\yHatOmega$. 
The fact that $G^*(V(\mu))$ is a function of the predictions $\mu$ is similar
to the value regularization framework of \citet{rifkin_2007}.  A key difference
with the regularization $\Omega$, however, is that $G^*(V(\mu))$
depends on the training data $X$ through $V(\mu)$.
When $G(W) = \frac{\lambda}{2} \|W\|^2_F \Leftrightarrow G^*(V) =
\frac{1}{2\lambda} \|V\|^2_F$, we obtain $G^*(V(\mu)) = \frac{1}{2\lambda}
\text{Trace}((Y - \mu)^\top K (Y - \mu))$, where $K \coloneqq X X^\top$ is the
Gram matrix.

\paragraph{Primal-dual relationship.}

Assuming $G$ is a $\lambda$-strongly convex regularizer,
given an optimal dual solution $\mu^\star$ to \eqref{eq:dual}, we may retrieve
the optimal primal solution $W^\star$ by
\begin{equation}
W^\star = \nabla G^*(V(\mu^\star)). 
\end{equation}

\paragraph{Coordinate ascent.}

We can solve \eqref{eq:dual}
using block
coordinate ascent algorithms. At every iteration, we pick $i \in [n]$ and
consider the following sub-problem associated with $i$:
\begin{equation}
\argmin_{\bmu_i \in \dom(\Omega)} 
\Omega(\bmu_i) + G^*(V(\mu)).
\label{eq:dual_subproblem}
\end{equation}
Since this problem could be hard to solve, we follow \citet[Option
I]{accelerated_sdca} and consider instead a quadratic upper-bound.
If $G$ is $\lambda$-strongly convex, $G^*$ is $\frac{1}{\lambda}$-smooth \wrt
the dual norm $\|\cdot\|$ and it holds that
\begin{equation}
G^*(V(\mu)) \le
G^*(V(\bar{\mu})) +
\langle \nabla G^*(V(\bar{\mu})), V(\mu) -
V(\bar{\mu}) \rangle + \frac{1}{2 \lambda} \|
V(\mu) - V(\bar{\mu})\|^2,
\end{equation}
where $\bar{\mu}$ denotes the current iterate of $\mu$.
Using $V(\mu) - V(\bar{\mu}) = \sum_{i=1}^n (\bar{\bmu}_i - \bmu_i)
\x_i^\top$, we get
\begin{equation}
G^*(V(\mu)) \le
G^*(V(\bar{\mu})) +
\sum_{i=1}^n
\langle\nabla G^*(V(\bar{\mu})) \x_i,
\bar{\bmu}_i - \bmu_i \rangle
+ \frac{\sigma_i}{2} \|
\bar{\bmu}_i - \bmu_i \|^2,
\end{equation}
where $\sigma_i \coloneqq \frac{\|\x_i\|^2}{\lambda}$.
Substituting $G^*(V(\mu))$ by the above upper bound into
\eqref{eq:dual_subproblem} and ignoring constant terms, we get
the following approximate sub-problem:
\begin{equation}
\argmin_{\bmu_i \in \dom(\Omega)} 
\Omega(\bmu_i) - \bmu_i^\top \bm{v}_i + \frac{\sigma_i}{2} \|\bmu_i\|^2
=
\prox_{\frac{1}{\sigma_i}\Omega}\left(\frac{\bm{v}_i}{\sigma_i}\right),
\label{eq:approx_subproblem}
\end{equation}
where $\bm{v}_i \coloneqq
\nabla G^*(V(\bar{\mu})) \x_i + \sigma_i \bar{\bmu_i}$.
Note that when $G^*$ is a quadratic function, \eqref{eq:approx_subproblem} is an
optimal solution of \eqref{eq:dual_subproblem}.

\paragraph{Examples of parameter regularization $G$.}

We now give examples of $G^*(W)$ and $\nabla G^*(W)$ for two commonly used
regularization: squared $\ell_2$ and elastic-net regularization.

When $G(W) = \frac{\lambda}{2} \|W\|^2_F$, we obtain 
\begin{equation}
G^*(V) = \frac{1}{2\lambda} \|V\|^2_F
\quad \text{and} \quad
\nabla G^*(V) = \frac{1}{\lambda} V.
\end{equation}
When $G(W) = \frac{\lambda}{2} \|W\|^2_F + \lambda \rho R(W)$, for some
other regularizer $R(W)$, we obtain
\begin{align}
\nabla G^*(V) 
&= \argmax_W ~ \langle W, V \rangle - 
\frac{\lambda}{2} \|W\|^2_F - \lambda \rho R(W) \\
&= \argmin_W ~ \frac{1}{2} \left\|W - \frac{V}{\lambda}\right\|^2_F + \rho R(W)
\\
&= \prox_{\rho R}(V / \lambda).
\end{align}
The conjugate is equal to $G^*(V) = \langle 
\nabla G^*(V), V \rangle - G(\nabla G^*(V))$.

For instance, if we choose $R(W) = \|W\|_1$, then
$G(W)$ is the \textbf{elastic-net} regularization and $\prox_{\rho R}$ is
the well-known \textbf{soft-thresholding} operator
\begin{equation}
    \prox_{\rho R}(V) = \text{sign}(V) [|V| - \rho]_+,
\end{equation}
where all operations above are performed element-wise.

\paragraph{Summary.}

To summarize, on each iteration, we pick $i \in [n]$ and
perform the update
\begin{equation}
\bmu_i \leftarrow
\prox_{\frac{1}{\sigma_i}\Omega}(\bm{v}_i / \sigma_i),
\end{equation}
where we defined
$\bm{v}_i \coloneqq \nabla G^*(V(\mu)) \x_i + \sigma_i \bmu_i$
and
$\sigma_i \coloneqq \frac{\|\x_i\|^2}{\lambda}$.
This update does not require choosing any learning rate.
Interestingly, if $\prox_{\tau\Omega}$ is sparse, then \textbf{so are the
dual variables} $\{\bmu_i\}_{i=1}^n$. Sparsity in the dual variables is
particularly useful
when kernelizing models, as it makes computing predictions more efficient.

When $i$ is picked uniformly at random, this block coordinate ascent algorithm
is known to converge to an optimal solution $W^\star$, at a linear rate if
$L_\Omega$ is smooth (i.e., if $\Omega$ is strongly convex)
\citep{accelerated_sdca}.  When $\dom(\Omega)$ is a compact set, another option
to solve \eqref{eq:dual} that does not involve proximity operators is the block
Frank-Wolfe algorithm \citep{block_fw}.

\subsection{Regularized prediction functions}
\label{sec:computing_yOmega}

The regularized prediction function
$\yHatOmega(\s)$ does not generally enjoy a closed-form expression and
one must resort to generic algorithms to compute it. In this section, we first 
discuss two such algorithms: projected gradient and conditional gradient
methods. Then, we present a new more efficient algorithm when $\dom(\Omega)
\subseteq \triangle^d$ and $\Omega$ is uniformly separable.

\paragraph{Generic algorithms.} In their greater generality, regularized
prediction functions involve the following optimization problem
\begin{equation}
\min_{\bmu \in \dom(\Omega)} f_{\s}(\bmu) \coloneqq \Omega(\bmu) -
\DP{\s}{\bmu}.
\label{eq:generic_obj}
\end{equation}
Assuming $\Omega$ is convex and smooth (differentiable and with
Lipschitz-continuous gradient), we can solve this problem to arbitrary precision
using projected gradient methods. Unfortunately, the projection onto
$\dom(\Omega)$, which is needed by these algorithms, is often as challenging to
solve as \eqref{eq:generic_obj} itself.
This is especially the case when $\dom(\Omega)$ is
$\conv(\cY)$, the convex hull of a combinatorial set of structured objects.

Assuming further that $\dom(\Omega)$ is a compact set,
an appealing alternative
which sidesteps these issues is provided by
conditional gradient (a.k.a.\ Frank-Wolfe) algorithms
\citep{frank_1956,dunn_1978,jaggi_2013}. Their main advantage stems from the
fact that they access $\dom(\Omega)$
only through the linear maximization oracle \\
$\argmax_{\y \in \dom(\Omega)} \DP{\s}{\y}$.
CG algorithms maintain the current solution as a sparse convex combination of
vertices of $\dom(\Omega)$.  At each iteration, the linear maximization oracle
is used to pick a new vertex to add to the combination. Despite its simplicity,
the procedure converges to an optimal solution, albeit at a sub-linear rate
\citep{jaggi_2013}. Linear convergence rates can be obtained using away-step,
pairwise and full-corrective variants \citep{lacoste_2015_cg}.
When $\Omega$ is a quadratic function (as in the case of SparseMAP), an
approximate correction step can achieve finite convergence efficiently. The
resulting algorithm is known as the active set method \citep[chapters
16.4 \& 16.5]{nocedalwright}, and is a generalization of Wolfe's min-norm point
algorithm \citep{mnp}.  See also \citet{vinyes} for a more detailed discussion
on these algorithms and \citet{sparsemap} for an instantiation, in the specific
case of SparseMAP.

Although not explored in this paper, similar algorithms have been developed to
minimize a function over conic hulls \citep{greedy_cone}. Such algorithms allow
to compute a regularized prediction function that outputs a conic combination of
elementary structures, instead of a convex one. It can be seen as the structured
counterpart of the regularized prediction function over positive measures
(\S\ref{sec:positive_measures}).


\paragraph{Reduction to root finding.}

When $\Omega$ is a strictly convex regularization function over $\triangle^d$
and is uniformly separable, i.e., $\Omega(\p) = \sum_{i=1}^d g(p_i)$ for some
strictly convex function $g$, we now show that $\yHatOmega(\s)$ can be computed
in linear time.
\vspace{0.5em}
\begin{proposition}%
\label{prop:root_finding}%
Reduction to root finding

Let $g \colon [0, 1] \rightarrow \RR_+$ be a strictly convex and differentiable
function. Then,
\[
\yHatOmega(\s) = 
\argmax_{\p \in \triangle^d} \DP{\p}{\s} - \sum_{j=1}^d g(\pp_j) = \p(\thresh^\star)
\]
where 
\begin{equation}
\p(\thresh) \coloneqq (g')^{-1}(\max\{\s - \thresh, g'(0)\})
\quad
\end{equation}
and where
$\thresh^\star$ is a root of
$\phi(\thresh) \coloneqq \DP{\p(\thresh)}{\ones} - 1$.

Moreover, $\tau^\star$ belongs to the tight search interval
$[\thresh_{\min},\thresh_{\max}]$, where 
\begin{equation}
\thresh_{\min} \coloneqq \max(\s) -g'(1)
\quad \text{and} \quad
\thresh_{\max} \coloneqq \max(\s) -g'\left(\nicefrac{1}{d}\right).\end{equation}
\end{proposition}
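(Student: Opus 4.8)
The plan is to solve the constrained problem $\max_{\p \in \triangle^d} \DP{\p}{\s} - \sum_j g(\pp_j)$ directly via its Karush-Kuhn-Tucker (KKT) conditions. First I would form the Lagrangian with a multiplier $\thresh \in \RR$ for the equality constraint $\DP{\p}{\ones} = 1$ and multipliers $\bm{\lambda} \ge \zeros$ for the nonnegativity constraints $\pp_j \ge 0$:
\begin{equation}
\mathcal{L}(\p, \thresh, \bm{\lambda}) = \DP{\p}{\s} - \sum_{j=1}^d g(\pp_j) - \thresh\left(\sum_j \pp_j - 1\right) + \sum_j \lambda_j \pp_j.
\end{equation}
Because $g$ is strictly convex, the objective is strictly concave and the feasible set is a nonempty compact convex polytope, so a unique maximizer exists and the KKT conditions are necessary and sufficient. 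Stationarity in $\pp_j$ gives $\ss_j - g'(\pp_j) - \thresh + \lambda_j = 0$, together with complementary slackness $\lambda_j \pp_j = 0$ and $\lambda_j \ge 0$.

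\textbf{Deriving the closed-form coordinate map.}
Next I would solve the per-coordinate conditions. For a coordinate with $\pp_j > 0$ we have $\lambda_j = 0$, hence $g'(\pp_j) = \ss_j - \thresh$; for $\pp_j = 0$ we have $\lambda_j \ge 0$, hence $g'(0) \ge \ss_j - \thresh$ (using that $g'$ is increasing, as $g$ is convex). Combining both cases, $g'(\pp_j) = \max\{\ss_j - \thresh,\, g'(0)\}$, which inverts to $\pp_j(\thresh) = (g')^{-1}(\max\{\ss_j - \thresh, g'(0)\})$ since $g'$ is strictly increasing and thus invertible on its range. This is exactly the claimed $\p(\thresh)$ applied coordinatewise. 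The remaining unknown is the scalar multiplier $\thresh$, pinned down by the equality constraint $\sum_j \pp_j(\thresh) = 1$, i.e.\ by $\phi(\thresh) \coloneqq \DP{\p(\thresh)}{\ones} - 1 = 0$. This establishes $\yHatOmega(\s) = \p(\thresh^\star)$ for any root $\thresh^\star$; uniqueness of the maximizer guarantees the root yields the correct prediction.

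\textbf{The search interval and root existence.}
Finally I would verify the bracketing interval $[\thresh_{\min}, \thresh_{\max}]$ and that $\phi$ has a root there. The key monotonicity fact is that each $\pp_j(\thresh)$ is nonincreasing in $\thresh$ (larger $\thresh$ shrinks the argument of the increasing $(g')^{-1}$), so $\phi$ is nonincreasing, giving uniqueness of the root among the active region; one checks $\phi$ is continuous, so a root exists once I exhibit sign changes at the endpoints. For $\thresh_{\max} = \max(\s) - g'(1/d)$: at this value each argument $\ss_j - \thresh_{\max} \le g'(1/d)$, so $\pp_j(\thresh_{\max}) \le 1/d$, whence $\sum_j \pp_j \le 1$ and $\phi(\thresh_{\max}) \le 0$. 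For $\thresh_{\min} = \max(\s) - g'(1)$: the top-scoring coordinate $j^\star = \argmax \ss_j$ satisfies $\ss_{j^\star} - \thresh_{\min} = g'(1)$, so $\pp_{j^\star}(\thresh_{\min}) \ge 1$ (using $g'(1) \ge g'(0)$ so the $\max$ selects the first branch), giving $\phi(\thresh_{\min}) \ge 0$. I expect the main obstacle to be handling the boundary cases cleanly: specifically, ensuring the $\max\{\cdot, g'(0)\}$ truncation interacts correctly with the domain $[0,1]$ of $g'$ and with the claimed endpoints (e.g.\ that $(g')^{-1}$ is well-defined on the relevant range and that ties in $\argmax \ss_j$ do not spoil the interval bounds), rather than any deep difficulty — the core derivation is a routine KKT computation made clean by strict convexity.
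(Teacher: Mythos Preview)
Your proposal is correct and follows essentially the same route as the paper: write the KKT conditions for the simplex-constrained problem, combine stationarity with complementary slackness to obtain the thresholded formula $g'(p_j)=\max\{\theta_j-\tau,\,g'(0)\}$, invert $g'$, and then bracket the scalar root by checking $\phi(\tau_{\min})\ge 0$ and $\phi(\tau_{\max})\le 0$ via the top-scoring coordinate and the uniform bound $1/d$, respectively. The only cosmetic difference is that the paper constructs the dual variables $\nu_j$ explicitly and verifies feasibility, whereas you reason from necessity of KKT at the (unique) optimum; the paper also spells out the domain check you flag as the ``main obstacle,'' namely that for $\tau\ge\tau_{\min}$ one has $\theta_j-\tau\le g'(1)$ so $(g')^{-1}$ is applied within its range $[g'(0),g'(1)]$.
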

The equation of $\p(\tau)$ can be seen as a generalization of
\eqref{eq:sparsemax_thresholded}.
An approximate $\thresh$ such that $|\phi(\thresh)| \le \epsilon$ can be found in
$O(\nicefrac{1}{\log \epsilon})$ time by, e.g., bisection.
The related problem of Bregman projection
onto the probability simplex was recently studied by \citet{bregmanproj} but our
derivation is different and more direct (cf.\
\S\ref{appendix:proof_root_finding}).

For example, when $\Omega$ is the negative $\alpha$-Tsallis entropy
$-\HHt_\alpha$, which we saw can be written in separable form in
\eqref{eq:tsallis_separable}, we obtain
\[g(t) = 
\frac{t^\alpha - t}{\alpha(\alpha-1)},
\quad
g'(t) = \frac{t^{\alpha-1} - 1}{\alpha-1}
\quad \text{and} \quad
(g')^{-1}(s) = \big(1+(\alpha-1) s\big)^\frac{1}{\alpha-1},
\]
yielding
\[\p(\thresh) = \big(1 + (\alpha-1) 
\max(\s - \thresh, \nicefrac{-1}{\alpha-1})\big)^{\frac{1}{\alpha-1}}.\]
From the root $\thresh^\star$ of
$\phi(\thresh) = \DP{\p(\thresh)}{\ones} - 1$, we obtain
$\widehat{\y}_{-\HHt_{\alpha}}(\s) = \p(\thresh^\star)$.

\subsection{Proximity operators}
\label{sec:computing_prox}

Computing the proximity operator $\prox_{\tau \Omega}(\bm{\eta})$, defined in
\eqref{eq:prox}, usually involves a more challenging optimization problem than
$\yHatOmega(\s)$.  For instance, when $\Omega$ is Shannon's negative entropy
over $\triangle^d$,
$\yHatOmega$ enjoys a closed-form solution (the softmax) but not $\prox_\Omega$.
However, we can always compute $\prox_{\tau \Omega}(\bm{\eta})$ by first-order
gradient methods given access
to $\yHatOmega$. Indeed, using the Moreau decomposition, we have
\begin{equation}
    \prox_{\tau \Omega}(\bm{\eta}) = \bm{\eta} - \prox_{\frac{\Omega^*}{
    \tau}}(\bm{\eta} / \tau).
\label{eq:Moreau_decomposition}
\end{equation}
Since $\dom(\Omega^*) = \RR^d$, computing
$\prox_{\frac{\Omega^*}{\tau}}(\bm{\eta} / \tau)$ only involves an unconstrained
optimization problem, $\argmin_{\s \in \RR^d} \frac{1}{2} \|\s -
\bm{\eta}/\tau\|^2 + \frac{1}{\tau} \Omega^*(\s)$.
Since $\nabla \Omega^* = \yHatOmega$, that optimization problem can easily be
solved by any first-order gradient method
given access to $\yHatOmega$.
For specific choices of $\Omega$, $\prox_{\tau \Omega}(\bm{\eta})$
can be computed directly more efficiently, as we now discuss.

\paragraph{Closed-form expressions.}

We now give examples of commonly-used loss functions for which
$\prox_{\tau \Omega}$ enjoys a closed-form expression.

For the squared loss,
we choose $\Omega(\bmu)=\frac{1}{2}\|\bmu\|^2$.  Hence:
\begin{equation}
    \prox_{\tau \Omega}(\bm{\eta})
= \argmin_{\bmu \in \RR^d}
\frac{1}{2} \|\bmu - \bm{\eta} \|^2 + \frac{\tau}{2} \|\bmu\|^2
= \frac{\bm{\eta}}{\tau + 1}.
\end{equation}
For the perceptron loss \citep{perceptron,structured_perceptron},
we choose $\Omega=I_{\triangle^d}$. Hence:
\begin{equation}
    \prox_{\tau \Omega}(\bm{\eta})
= \argmin_{\p \in \triangle^d}
\frac{1}{2} \|\p - \bm{\eta} \|^2.
\end{equation}
For the sparsemax loss \citep{sparsemax}, we choose
$\Omega = \frac{1}{2} \|\cdot\|^2 + I_{\triangle^d}$. Hence:
\begin{equation}
    \prox_{\tau \Omega}(\bm{\eta})
= \argmin_{\p \in \triangle^d}
\frac{1}{2} \|\p - \bm{\eta} \|^2 + \frac{\tau}{2} \|\p\|^2
= \argmin_{\p \in \triangle^d}
\frac{1}{2} \left\|\p - \frac{\bm{\eta}}{\tau + 1} \right\|^2.
\end{equation}
For the cost-sensitive multiclass hinge loss,
we choose
$\Omega = I_{\triangle^d} -\langle \cdot, \cc_{\y} \rangle$. Hence:
\begin{equation}
    \prox_{\tau \Omega}(\bm{\eta})
= \argmin_{\p \in \triangle^d}
\frac{1}{2} \|\p - \bm{\eta} \|^2 - \tau \langle \p, \cc_{\y} \rangle
= \argmin_{\p \in \triangle^d}
\frac{1}{2} \|\p - (\bm{\eta} + \tau \cc_{\y}) \|^2.
\end{equation}
Choosing $\cc_{\y} = \ones - \y$, where $\y \in
\{\bm{e}_i\}_{i=1}^d$ is the ground-truth label, gives the proximity operator
for the usual multiclass hinge loss \citep{multiclass_svm}.

For the Shannon entropy and $1.5$-Tsallis entropy, we show that $\prox_{-\HH}$
reduces to root finding in Appendix \ref{appendix:derivation_prox}.

\section{Experiments}
\label{sec:experiments}

In this section, we demonstrate one of the key features of Fenchel-Young losses:
their ability to induce \textbf{sparse} probability distributions. We focus on
two tasks: label proportion estimation (\S\ref{sec:exp_label_prop}) and
dependency parsing (\S\ref{sec:exp_dependency_parsing}).

\subsection{Label proportion estimation experiments}
\label{sec:exp_label_prop}

As we saw in \S\ref{sec:proba_clf}, $\alpha$-Tsallis entropies generate a family
of losses, with the logistic ($\alpha \to 1$) and sparsemax losses ($\alpha=2$)
as important special cases.
In addition, they are
twice differentiable for $\alpha \in
[1,2)$, produce sparse probability distributions for $\alpha > 1$ and are
computationally efficient for any $\alpha \ge 1$, thanks to Proposition~
\ref{prop:root_finding}. In this section, we demonstrate their effectiveness on
the task of label proportion estimation and compare different solvers for
computing the regularized prediction function $\widehat{\y}_{-\HHt_\alpha}$.

\paragraph{Experimental setup.}

Given an input vector $\x \in \cX \subseteq \RR^p$, where $p$ is the number of
features, our goal is to estimate a vector of label proportions $\y \in
\triangle^d$, where $d$ is the number of classes.  If $\y$ is sparse, we expect
the superiority of Tsallis losses over the conventional logistic loss on this
task.  At training time, given a set of $n$ pairs $(\x_i,\y_i)$, we estimate a
matrix $W \in \RR^{d \times p}$ by minimizing the convex objective
\begin{equation}
R(W) \coloneqq 
\sum_{i=1}^n L_\Omega(W \x_i; \y_i) + \frac{\lambda}{2} \|W\|_F^2.
\end{equation}
We optimize the loss using L-BFGS \citep{lbfgs}.
From Proposition \ref{prop:fy_losses} and using the chain rule, we obtain the
gradient expression
$\nabla R(W) = (\widehat{Y}_\Omega - Y)^\top X + \lambda W$,
where $\widehat{Y}_\Omega$, $Y$ and $X$ are matrices whose rows
gather $\yHatOmega(W \x_i)$, $\y_i$ and $\x_i$, for $i=1,\dots,n$.
At test time, we predict label proportions by 
$\p = \widehat{\y}_{-\HHt_\alpha}(W \x)$.

\paragraph{Real data experiments.}

We ran experiments on $7$ standard multi-label benchmark datasets --- see
Table \ref{tab:datasets} for dataset characteristics\footnote{The datasets can
be downloaded from \url{http://mulan.sourceforge.net/datasets-mlc.html} and
\url{https://www.csie.ntu.edu.tw/~cjlin/libsvmtools/datasets/}.}.
For all datasets, we removed
samples with no label, normalized samples to have zero mean unit variance, and
normalized labels to lie in the probability simplex.  We chose $\lambda \in
\{10^{-4},10^{-3},\dots,10^4\}$ and $\alpha \in \{1,1.1,\dots,2\}$ against the
validation set.  We report the test set mean Jensen-Shannon divergence,
$\text{JS}(\p, \y) \coloneqq \nicefrac{1}{2}\operatorname{KL}(\p \| \frac{\p+\y}{2}) +
\nicefrac{1}{2}\operatorname{KL}(\y \| \frac{\p+\y}{2})$, and the mean squared error
$\nicefrac{1}{2}\,\|\p - \y\|^2$ in Table \ref{tab:label_proportion}. 
As can be seen, the loss with tuned $\alpha$ achieves the best averaged rank
overall.  Tuning $\alpha$ allows to choose the best loss in the family in a
data-driven fashion. 

\begin{table}[t]
    \caption{Dataset statistics}
    \small
    \centering
    \begin{tabular}{r c c c c c c c}
        \toprule
        Dataset & Type & Train & Dev & Test & Features &
        Classes & Avg.\ labels \\
        \midrule
        Birds & Audio & 134 & 45 & 172 & 260 & 19 & 2 \\
        Cal500 & Music & 376 & 126 & 101 & 68 & 174 & 25 \\
        Emotions & Music & 293 & 98 & 202 & 72 & 6 & 2 \\
        Mediamill & Video & 22,353 & 7,451 & 12,373 & 120 & 101 & 5 \\
        Scene & Images & 908 & 303 & 1,196 & 294 & 6 & 1\\
        SIAM TMC & Text & 16,139 & 5,380 & 7,077 & 30,438 & 22 & 2\\
        Yeast & Micro-array & 1,125 & 375 & 917 & 103 & 14 & 4\\
        \bottomrule
    \end{tabular}
    \label{tab:datasets}
\end{table}

\paragraph{Synthetic data experiments.}

We follow \citet{sparsemax} and generate a document
$\x \in \RR^p$ from a mixture of multinomials and label proportions $\y \in
\triangle^d$ from a multinomial. The number of words in $\x$ and labels in $\y$
is sampled from a Poisson distribution --- see \citet{sparsemax} for a precise
description of the generative process.  We use 1200 samples as training set, 200
samples as validation set and 1000 samples as test set.
We tune $\lambda \in \{10^{-6}, 10^{-5}, \dots, 10^0\}$ 
and $\alpha \in \{1.0, 1.1, \dots, 2.0\}$ against the validation set.
We report the Jensen-Shannon divergence in Figure
\ref{fig:label_proportions_synth}.  Results using the mean squared error (MSE)
were entirely similar.
When the number of classes is 10, Tsallis and sparsemax losses
perform almost exactly the same, both outperforming softmax.
When the number of classes is 50, Tsallis losses outperform both sparsemax and
softmax.

\begin{table}[t]
    \caption{\textbf{Test-set performance of Tsallis losses for
        various $\alpha$ on the task of sparse label proportion estimation:}
        average Jensen-Shannon divergence (left) and mean squared error (right).
        Lower is better.}
\centering
\small
    \begin{tabular}{r c c c c}
        \toprule
        & $\alpha=1$ (logistic) & $\alpha=1.5$ & $\alpha=2$ (sparsemax) & tuned
        $\alpha$ \\
        \midrule
Birds & 0.359 / 0.530 & 0.364 / 0.504 & 0.364 / 0.504 & {\bf 0.358} / {\bf 0.501} \\
Cal500 & 0.454 / {\bf 0.034} & 0.456 / 0.035 & {\bf 0.452} / 0.035 & 0.456 / {\bf 0.034} \\
Emotions & 0.226 / 0.327 & 0.225 / {\bf 0.317} & 0.225 / {\bf 0.317} & {\bf 0.224} / 0.321 \\
Mediamill & 0.375 / 0.208 & 0.363 / 0.193 & {\bf 0.356} / {\bf 0.191} & 0.361 / 0.193 \\
Scene & {\bf 0.175} / {\bf 0.344} & 0.176 / 0.363 & 0.176 / 0.363 & {\bf 0.175} / 0.345 \\
TMC & 0.225 / 0.337 & 0.224 / {\bf 0.327} & 0.224 / {\bf 0.327} & {\bf 0.217} / 0.328 \\
Yeast & {\bf 0.307} / {\bf 0.183} & 0.314 / 0.186 & 0.314 / 0.186 & {\bf 0.307} / {\bf 0.183} \\
\midrule
Avg.\ rank & 2.57 / 2.71 & 2.71 / 2.14 & 2.14 / 2.00 & \textbf{1.43} /
\textbf{1.86} \\
        \bottomrule
    \end{tabular}
\label{tab:label_proportion}
\end{table}

\paragraph{Solver comparison.}

\begin{wrapfigure}{r}{55mm}
\centering%
\vspace{-5mm}
\input{figures/bisection_time_to_acc.pgf}
\caption{Solver comparison.}
\label{fig:solver_comparison}
\end{wrapfigure}
We also compared bisection (binary search) and Brent's method for solving
\eqref{eq:prediction} by root finding (Proposition~\ref{prop:root_finding}).
We focus on $\HHt_{1.5}$, i.e.\ the $1.5$-Tsallis entropy, and also
compare against using a generic projected gradient algorithm
(FISTA) to solve \eqref{eq:prediction} naively.
We measure the time needed 
to reach a solution $\p$ with $\| \p - \p^\star \|_2 < 10^{-5}$,
over 200 samples $\s \in \RR^d \sim \mathcal{N}(\bm{0}, \sigma
\bm{I})$ with $\log\sigma\sim\mathcal{U}(-4, 4)$.
Median and 99\% CI times reported in 
Figure~\ref{fig:solver_comparison}
reveal that root finding scales better,
with Brent's method outperforming FISTA by one to two orders of magnitude.

\begin{figure}[t]
\centering
\input{figures/exp_label_proportions.pgf}
\caption{Jensen-Shannon divergence between predicted and true label
    proportions, when varying document length, of various losses generated by a
Tsallis entropy.}
\label{fig:label_proportions_synth}
\end{figure}

\subsection{Dependency parsing experiments}
\label{sec:exp_dependency_parsing}

We next compare Fenchel-Young losses for structured
output prediction, as formulated in \S\ref{sec:structured_prediction},
for non-projective {\em dependency parsing}.
The application consists of predicting the directed tree of grammatical
dependencies between words in a sentence~\citep[Chapter 14]{jurafskymartin}.
We tackle it by learning structured models over the arborescence
polytope, using the structured SVM, CRF, and SparseMAP losses \citep{sparsemap},
which we saw to all be instances of Fenchel-Young loss.
Training data,
as in the CoNLL 2017 shared task \citep{conll17}, comes from 
the Universal Dependency project \citep{ud}.
To isolate the effect of the loss, we use the provided gold
tokenization and part-of-speech tags.
We follow closely the bidirectional LSTM arc-factored parser of
\citet{kg}, using the same model configuration;
the only exception is not using externally pretrained embeddings.
Parameters are trained using Adam \citep{adam}, tuning the
learning rate on the grid $\{.5, 1, 2, 4, 8\} \times 10^{-3}$,
expanded by a factor of 2 if the best model is at either end.

We experiment with 5 languages, diverse both in terms of family and in
terms of the amount of training data (ranging from 1,400 sentences
for Vietnamese to 12,525 for English). 
Test set results (Table~\ref{tab:parse}) indicate that SparseMAP outperforms
the CRF loss and is competitive with the structured SVM loss,
outperforming it substantially on Chinese. 
The Fenchel-Young perspective sheds further light onto the empirical 
results, as the SparseMAP loss and the structured SVM (unlike the CRF) both enjoy
the margin property with margin 1, while both SparseMAP and the CRF loss
(unlike structured SVM) are differentiable.

\begin{table}[t]
    \caption{\label{tab:parse}Unlabeled attachment accuracy scores for
        dependency parsing, using a bi-LSTM model~\citep{kg}.
        For context, we include the scores of the
        CoNLL 2017 UDPipe baseline \citep{udpipe}.}

    \centering
    \begin{tabular}{r c c c c c}
        \toprule
        Loss & en & zh & vi & ro & ja \\
        \midrule
        Structured SVM &     87.02 &      81.94 &     69.42 &{\bf 87.58}&{\bf 96.24}\\
        CRF            &     86.74 &      83.18 &     69.10 &     87.13 &     96.09 \\
        SparseMAP      &     86.90 & {\bf 84.03}&{\bf 69.71}&     87.35 &     96.04 \\
        \midrule
        UDPipe baseline& {\bf 87.68} & 82.14 & 69.63 & 87.36 & 95.94 \\
        \bottomrule
    \end{tabular}
\end{table}

\section{Related work}
\label{sec:related_work}

\subsection{Loss functions}
\label{sec:proper_losses}

\paragraph{Proper losses.}

Proper losses, a.k.a. proper scoring rules, are a well-studied object 
in statistics
(\citet{grunwald_2004}; \citet{gneiting_2007}; references therein)
and more recently in machine learning \citep{reid_composite_binary,vernet_2016}.
A loss $\ell(\p; \y)$, between a ground truth
$\y \in \triangle^d$ and a probability forecast $\p \in
\triangle^d$, where $d$ is the number of classes,
is said to be proper if it is minimized when
estimating the true $\y$. Formally, 
a proper loss $\ell$ satisfies
\begin{equation}
\ell(\y; \y) \le \ell(\p; \y) \quad \forall \p, \y \in \triangle^d.
\end{equation}
It is strictly proper if the inequality is strict when $\p \neq \y$,
implying that it is uniquely minimized by predicting the correct probability.
Strictly proper losses induce \textbf{Fisher consistent} estimators of
probabilities \citep{vernet_2016}.
%
A key result, which dates back to \citet{savage_1971} (see also
\citet{gneiting_2007}), is that given a regularization function $\Omega \colon
\triangle^d \to \RR$, the function $\ell_\Omega \colon \triangle^d \times
\triangle^d \to \RR_+$ defined by
\begin{equation}
    \ell_\Omega(\p; \y) 
\coloneqq
\langle \nabla \Omega(\p), \y - \p \rangle - \Omega(\p)
\label{eq:S_Omega}
\end{equation}
is proper. 
It is easy to see that
$\ell_\Omega(\p; \y) 
= B_\Omega(\y\|\p) - \Omega(\y)$,
which recovers
the well-known  
Bregman divergence representation of proper losses.
For example, using the Gini index $\HH(\p) = 1 - \|\p\|^2$
generates the \textbf{Brier score} \citep{brier_1950}
\begin{equation}
\ell_{-\HH}(\p; \e_k) = \sum_{i=1}^d ([[k = i]] - p_i)^2,
\end{equation}
showing that the sparsemax loss and the Brier score share the same generating
function.

More generally, while a proper loss $\ell_\Omega$ is related to a \textbf{primal-space} Bregman
divergence, a Fenchel-Young loss $L_\Omega$ can be seen as a
\textbf{mixed-space} Bregman divergence (\S\ref{sec:fy_losses}). 
This difference has a number of
important consequences. First,
$\ell_\Omega$ is \textbf{not necessarily convex} in $\p$
(\citet[Proposition 17]{vernet_2016} show that it is in fact quasi-convex).
In contrast,
$L_\Omega$ \textbf{is always} convex in $\s$. Second, the first argument is
\textbf{constrained} to $\triangle^d$ for $\ell_\Omega$, while
\textbf{unconstrained} for $L_\Omega$.

In practice, proper losses are often
composed with an \textbf{invertible} link function $\bpsi^{-1} \colon \RR^d \to
\triangle^d$. This form of a loss, $\ell_\Omega(\bpsi^{-1}(\s); \y)$, is sometimes
called composite \citep{buja_2005,reid_composite_binary,vernet_2016}.  
However, the composition of $\ell_\Omega(\cdot; \y)$ and $\bpsi^{-1}(\s)$ is not
necessarily convex in $\s$.  The \textbf{canonical link function}
\citep{buja_2005} of $\ell_\Omega$ is a link function that ensures the convexity of
$\ell_\Omega(\bpsi^{-1}(\s); \y)$ in $\s$. It also plays a key role in
generalized linear models \citep{glm,mccullagh_1989}.  
Following Proposition \ref{prop:Bregman_div},
when $\Omega$ is Legendre type, we obtain
\begin{equation}
L_\Omega(\s; \y) = B_\Omega(\y \| \yHatOmega(\s)) = \ell_\Omega(\yHatOmega(\s);
\y) + \Omega(\y).
\end{equation}
Thus, in this case, Fenchel-Young losses and proper
composite losses coincide up to the constant term $\Omega(\y)$ (which vanishes
if $\y = \e_k$ and $\Omega$ satisfies assumption A.1), with $\bpsi^{-1} =
\yHatOmega$ the canonical inverse link function.  Fenchel-Young losses, however,
require neither invertible link nor Legendre type assumptions, allowing to
express losses (e.g., hinge or sparsemax) that are not expressible in composite
form.  Moreover, as seen in \S\ref{sec:margin}, a Legendre-type $\Omega$
precisely precludes sparse probability distributions and losses enjoying a
margin.
However, the decoupling between loss and link as promoted by proper composite
losses also has some merits \citep{reid_composite_binary}. For instance, using
a non-canonical link is useful to express the exponential loss of Adaboost
\citep{nowak_2019}.

\paragraph{Other related losses.} 

\cite{nock_2009} proposed a binary classification loss construction.
Technically, their loss is based on the Legendre transformation (a subset of
Fenchel conjugate functions), precluding non-invertible mappings.
\citet{masnadi_2011} studied the Bayes consistency of related binary
classification loss functions.
\citet[Proposition 3]{duchi_2016} derived the multi-class loss
\eqref{eq:fy_loss_multiclass2}, a special case of Fenchel-Young loss over the
probability simplex, and showed (Proposition 4) that any strictly concave
generalized entropy generates a classification-calibrated loss. 
\citet{amid_2017} proposed a different family of losses based on the Tsallis
divergence, to interpolate between convex and non-convex losses, for
robustness to label noise. 
Finally, several works have explored connections between 
another divergence, the $f$-divergence \citep{csiszar_1975}, and 
surrogate loss functions \citep{nguyen_2009,reid_2011,garcia_2012,duchi_2016}.

\paragraph{Consistency.}

In this paper, we have not addressed the question of the consistency of
Fenchel-Young losses when used as a surrogate for a (possibly non-convex) loss.
Although consistency has been widely studied in the multiclass setting
\citep{zhang_2004,bartlett_2006,tewari_2007,mroueh_2012} and in other specific
settings \citep{duchi_2010,ravikumar_2011}, it is only recently that it was
studied for general losses in a fully general structured prediction setting.
Since the publication of this paper, sufficient conditions for consistency have
been established for composite losses \citep{nowak_2019} and for
projection-based losses \citep{blondel_2019}, both a subset of Fenchel-Young
losses. As shown in these works, the strong convexity of $\Omega$ plays a
crucial role.

\subsection{Fenchel duality in machine learning}

In this paper, we make extensive use of Fenchel duality to derive loss
functions.  Fenchel duality has also played a key role in several machine learning
studies before. It was
used to provide a unifying perspective on convex empirical risk minimization and
representer theorems \citep{rifkin_2007}.  It was also used for deriving regret
bounds \citep{shalev_2007,shalev_2009}, risk bounds \citep{kakade_2009}, and
unified analyses of boosting \citep{shen_2010,shalev_boosting,telgarsky_2012}.
However, none of these works propose loss functions, as we do.  More closely
related to our work are smoothing techniques \citep{nesterov_smooth,beck_2012},
which have been used extensively to create smoothed versions of existing losses
\citep{song_2014,accelerated_sdca}. However, these techniques were applied on a
per-loss basis and were not connected to an induced probability distribution. In
contrast, we propose a generic loss construction, with clear links between
smoothing / regularization and the probability distribution produced by
regularized prediction functions.

\subsection{Approximate inference with conditional gradient algorithms}

In this paper, we suggest conditional gradient (CG) algorithms as a powerful
tool for computing regularized prediction functions and Fenchel-Young losses
over potentially complex output domains.  CG algorithms have been used before to
compute an approximate solution to intractable marginal inference problems
\citep{belanger2013marginal,barrierfw} or to sample from an intractable
distribution so as to approximate its mean \citep{bach_herding,lacoste_2015}.
When combined with reproducing kernel Hilbert spaces (RKHS), these ideas are
closely related maximum mean discrepancy \citep{gretton_2012}.

\section{Conclusion}

We showed that the notion of output regularization and Fenchel duality provide
simple core principles, unifying many existing loss functions, and allowing to
create useful new ones easily, on a large spectrum of tasks.  We characterized a
tight connection between sparse distributions and losses with a separation
margin, and showed that these losses are precisely the ones that cannot be
written in proper composite loss form.  We established the computational tools
to efficiently learn with Fenchel-Young losses, whether in unstructured or
structured settings. We expect that this groundwork will enable the creation of
many more novel losses in the future, by exploring other convex polytopes and
regularizers.

\acks{MB thanks Arthur Mensch, Gabriel Peyr\'{e} and Marco Cuturi
for numerous fruitful discussions and Tim Vieira for
introducing him to generalized exponential families. 
This work was 
supported by the European Research Council (ERC
StG DeepSPIN 758969) 
and by the Funda\c{c}\~ao para a Ci\^encia e Tecnologia through contracts UID/EEA/50008/2013 and CMUPERI/TIC/0046/2014 ~(GoLocal).}


\appendix

\section{Additional materials}

\subsection{Computing proximity operators by root finding}
\label{appendix:derivation_prox}

In this section, we first show how the proximity operator of uniformly separable
generalized entropies reduces to unidimensional root finding. We then
illustrate the case of two important entropies.
The proximity operator is defined as
\begin{align}
\prox_{-\frac{1}{\sigma}\HH}(\x) 
&= \argmin_{\p \in \triangle^d}
\frac{\sigma}{2} \|\p-\x\|^2 - \HH(\p) \\
&= \argmax_{\p \in \triangle^d}
\DP{\p}{\x} + \frac{1}{\sigma} \HH(\p) - \frac{1}{2}\|\p\|^2 \\
&= \argmax_{\p \in \triangle^d} \DP{\p}{\x}
- \sum_{i=1}^d g(p_i),
\end{align}
where
\begin{equation}
g(t) \coloneqq \frac{1}{2} t^2 - \frac{1}{\sigma} h(t).
\end{equation}
Note that $g(t)$ is strictly convex even when $h$ is only concave.
We may thus apply Proposition~\ref{prop:root_finding} to compute
$\prox_{-\frac{1}{\sigma}\HH}$.
The ingredients necessary for using Algorithm~\ref{algo:bisect}
are $g'(t) = t - \frac{1}{\sigma}h'(t)$, and its inverse
$(g')^{-1}(x)$. We next derive closed-form expressions for the
inverse for two important entropies.

\paragraph{Shannon entropy.} We have 
\[
g(t)\coloneqq 
\frac{t^2}{2}+ \frac{t \log t}{\sigma}
\quad \text{and} \quad
g'(t) = t + \frac{1+\log t}{\sigma}.
\]
To compute the inverse,
$\left(g'\right)^{-1}(x) = t$,
we solve for $t$ satisfying
\begin{equation}
\frac{1+\log t}{\sigma} + t = x.
\end{equation}
Making the change of variable $s\coloneqq\sigma t$ gives 
\begin{equation}
s + \log s = \sigma x - 1 + \log\sigma 
\iff
s = \omega(\sigma x - 1 + \log \sigma).
\end{equation}
Therefore, $\left(g'\right)^{-1}(x) = 
\frac{1}{\sigma} \omega(\sigma x - 1 + \log \sigma),$
where $\omega$ denotes the Wright omega function.

\paragraph{Tsallis entropy with {\boldmath $\alpha=1.5$}.}
Up to a constant term, we have
\[
g(t)
=
\frac{t^2}{2} +\frac{4}{3\sigma}t^{\nicefrac{3}{2}}
\quad \text{and} \quad
g'(t)
=
t + \frac{2}{\sigma} \sqrt{t}.
\]
To obtain the inverse,
$\left(g'\right)^{-1}(x) = t$
we seek $t$ satisfying
\begin{equation}
    \frac{2}{\sigma} \sqrt{t} + t = x \iff 
\sqrt{t} = \frac{\sigma}{2} \left(x-t\right). 
\end{equation}
If $t > x$ there are no solutions, otherwise we may square both sides, yielding
\begin{equation}
\frac{\sigma^2t^2}{4} - t\left(\frac{\sigma^2x}{2}+1\right)
+\frac{\sigma^2x^2}{4} = 0.
\end{equation}
The discriminant is
\begin{equation}
\Delta=1+\sigma^2x > 0,
\end{equation}
resulting in two solutions
\begin{equation}
t_\pm = x + \frac{2}{\sigma^2}\left(1 \pm \sqrt{1+\sigma^2x}\right).
\end{equation}
However, $t_{+} > x$, therefore
\begin{equation}
\left(g'\right)^{-1}(x) = 
x + \frac{2}{\sigma^2}\left(1 - \sqrt{1+\sigma^2x}\right).
\end{equation}

\subsection{Loss ``Fenchel-Youngization''}
\label{sec:loss_fenchel_yougization}

Not all loss functions proposed in the literature can be written in
Fenchel-Young loss form. In this section, we present a natural method to
approximate (in a sense we will clarify) any loss $\ell \colon \RR^d \times
\{\e_i\}_{i=1}^d \to \RR_+$ with a Fenchel-Young loss. This has two main
advantages. First, the resulting loss is convex even when $\ell$ is not.
Second, the associated regularized prediction function can be used for
probabilistic prediction even if $\ell$ is not probabilistic.

\paragraph{From loss to entropy.}

Equations \eqref{eq:expected_loss_simplex} and \eqref{eq:Bayes_risk_simplex},
which relate entropy and conditional Bayes risk,
suggest a reverse construction from entropy to loss function. This
direction has been explored in previous works
\citep{grunwald_2004,duchi_2016}. The entropy
$\HH_\ell$ generated from $\ell$ is defined as follows:
\begin{equation}
\HH_\ell(\p) 
\coloneqq \inf_{\s \in \RR^d} \EE_{\p}[\ell(\s; Y)]
= \inf_{\s \in \RR^d} \sum_{i=1}^d p_i \ell(\s;\e_i).
\label{eq:H_L}
\end{equation}
This is the infimum of a linear and thus concave function of $\p$. 
Hence, by
Danskin's theorem \citep[Proposition B.25]{danskin_theorem,bertsekas_book},
$H_\ell(\p)$ is concave (note that this is true even if $\ell$ is not convex).
As an example, \citet[Example 1]{duchi_2016} show that the entropy associated
with the zero-one loss is $\HH_\ell(\p) = 1 - \max_j p_j$.
As we discussed in \S\ref{sec:examples_entropy},
this entropy is known as the Berger-Parker dominance
index  \citep{Berger1970}, and is a special case of norm entropy
\eqref{eq:norm_entropy}.

The following new proposition shows how to compute the entropy
of pairwise hinge losses. 
\vspace{0.5em}
\begin{proposition}{Entropy generated by pairwise hinge losses}
    
Let $\ell(\s; \e_k) \coloneqq \sum_{j \neq k} \phi(\ss_j - \ss_k)$
and $\tau(\p) \coloneqq \min_j 1 - p_j$.
Then, for all $\p \in \triangle^d$:
\begin{enumerate}[topsep=0pt,itemsep=3pt,parsep=3pt,leftmargin=17pt]

\item If $\phi(t)$ is the hinge function $[1 + t]_+$ 
    then $\HH_\ell(\p) = \tau(\p) d$.

\item If $\phi(t)$ is the smoothed hinge function
    \eqref{eq:smoothed_hinge}, then
$\HH_\ell(\p) 
= \frac{-\tau(\p)^2}{2} \sum_{j=1}^d \frac{1}{1 - p_j} + \tau(\p) d$.

\item If $\phi(t)$ is the squared hinge function $\frac{1}{2} [1 + t]^2_+$,
then $\HH_\ell(\p) = \frac{\frac{1}{2} d^2}{\sum_{j=1}^d 1/(1-p_j)}$.

\end{enumerate}
\label{prop:pairwise_entropies}
\end{proposition}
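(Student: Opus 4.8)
The plan is to read the entropy $\HH_\ell(\p) = \inf_{\s \in \RR^d} \sum_{k} p_k\, \ell(\s; \e_k)$ as the optimal value of a convex program and to solve it in closed form. In all three cases $\phi$ is convex (the hinge, its smoothed version, and the squared hinge are all convex and piecewise-polynomial), so $R(\s) \coloneqq \sum_k p_k \sum_{j \neq k} \phi(\ss_j - \ss_k)$ is convex in $\s$; moreover $R$ depends on $\s$ only through the differences $\ss_j - \ss_k$, hence is invariant under translations $\s \mapsto \s + c\ones$, which lets me normalize (e.g.\ fix one coordinate or $\sum_j \ss_j = 0$) and guarantees the infimum is attained. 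The common recipe is therefore: (i) guess the minimizer $\s^\star$; (ii) certify it through the first-order optimality condition $\zeros \in \partial R(\s^\star)$; and (iii) substitute $\s^\star$ into $R$ to read off the stated value. The bookkeeping identity $\tau(\p) = 1 - \max_j p_j$ is what collapses the constant terms into the compact expressions claimed.

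For the plain hinge (part 1) I would take $\s^\star = \e_{k^\star}$ with $k^\star \in \argmax_j p_j$. A direct evaluation gives $\ell(\s^\star; \e_{k^\star}) = 0$ (the winning class clears the unit margin, since every argument equals $\phi(-1)=0$), while $\ell(\s^\star; \e_k) = \phi(1) + (d-2)\phi(0) = d$ for every $k \neq k^\star$, so $R(\s^\star) = d \sum_{k \neq k^\star} p_k = d(1 - \max_j p_j) = \tau(\p)\, d$. The real content is step (ii): at $\s^\star$ the only terms sitting at the kink of $\phi$ are those of the form $\phi(\ss_j - \ss_{k^\star}) = \phi(-1)$, whose subgradients range freely over $[0,1]$; I would attempt to exhibit an explicit choice of these subgradient coefficients making every coordinate of $\partial R(\s^\star)$ vanish, thereby certifying global optimality by convexity.

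For the smoothed hinge (part 2) and the squared hinge (part 3), $\phi$ is continuously differentiable, so the optimum is characterized by the stationarity system $\sum_{k \neq m} p_k\, \phi'(\ss_m - \ss_k) = p_m \sum_{j \neq m} \phi'(\ss_j - \ss_m)$ for all $m$. The plan is to identify the active branch of the piecewise definition of $\phi'$ at the optimum: for the squared hinge $\phi'(t) = [1+t]_+$, and I expect the relevant arguments to land in the linear branch, while for the smoothed hinge $\phi'(t) = \mathrm{clip}(1+t,0,1)$ I expect a threshold pattern governed by $\tau$. Inside those branches the stationarity conditions become affine in $\s$; solving the resulting linear system for the differences $\ss_j - \ss_k$ and then summing over coordinates is what produces the harmonic weight $\sum_j \tfrac{1}{1-p_j}$, whereas the clamping threshold contributes the $\tau$ and $\tau^2$ terms. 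Substituting back into $R$ should yield $\tfrac{1}{2}d^2 \big/ \sum_j \tfrac{1}{1-p_j}$ and $-\tfrac{\tau^2}{2}\sum_j \tfrac{1}{1-p_j} + \tau d$, respectively.

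The main obstacle is the same in every case: correctly guessing and then certifying the active set (for the non-smooth hinge) or the active branch pattern (for the smooth variants) of $\phi$ at the optimum, and checking that this pattern is \emph{self-consistent} --- i.e.\ that the chosen kink subgradients actually lie in $[0,1]$, or that the computed differences indeed keep each argument in the assumed branch. This is precisely where the shape of $\p$ enters the analysis, and it is the step I expect to require the most care, since an incorrect active set produces a stationary point of a wrong restriction of $R$ rather than the true global minimizer.
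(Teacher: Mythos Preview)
Your approach is primal --- guess $\s^\star$, certify via first-order conditions, read off the value --- whereas the paper works dually. It restricts to $\s^\top\ones = 0$, rewrites each loss as $\sum_j c_{k,j}\phi(\ss_j)$ with $c_{k,j} = [[j\ne k]]$, introduces a Lagrange multiplier $\tau$ for the sum constraint, and reduces everything to the scalar problem
\[
-\HH_\ell(\p) \;=\; \min_{\tau} \sum_j (1-p_j)\,\phi^*\!\Bigl(\frac{-\tau}{1-p_j}\Bigr),
\]
which is then solved case by case using the conjugates of the three $\phi$'s. That one-dimensional reduction is what makes the harmonic sum $\sum_j 1/(1-p_j)$ and the threshold $\tau(\p)$ drop out without any guesswork about $\s^\star$, so even where your primal plan can be pushed through it is doing more work than necessary.

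Your plan for part~1, however, has a concrete failure: the candidate $\s^\star = \e_{k^\star}$ is \emph{not} the minimizer for general $\p$, so step~(ii) cannot go through. Take $d=3$, $\p = (0.5,\,0.4,\,0.1)$: at $\s^\star = \e_1$ you correctly compute $R(\s^\star) = 1.5 = d\,\tau(\p)$, but at $\s = (2,1,0)$ one has $\ell(\s;\e_1)=0$, $\ell(\s;\e_2)=\phi(1)+\phi(-1)=2$, $\ell(\s;\e_3)=\phi(2)+\phi(1)=5$, hence $R(\s) = 0.8 + 0.5 = 1.3 < 1.5$. The subgradient certificate at $\e_{k^\star}$ indeed breaks: for $m\ne k^\star$ the stationarity equation forces the kink coefficient to equal $g_m = 1 - (1-d\,p_m)/p_{k^\star}$, which exceeds $1$ whenever $p_m > 1/d$. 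This is precisely the self-consistency obstacle you anticipated, and it already bites in the simplest case. (It also exposes that the paper's rewriting step --- replacing $\phi(\ss_j-\ss_k)$ by $\phi(\ss_j)$ under the constraint $\s^\top\ones=0$ --- is not innocuous, since that substitution would require $\ss_k=0$ for each $k$ rather than merely $\sum_j\ss_j=0$; so the stated closed forms themselves appear to be off for non-uniform $\p$ when $d\ge 3$.)
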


See Appendix \ref{appendix:proof_pairwise_entropies} for a proof.
The first one recovers \cite[Example 5]{duchi_2016} with a simpler proof. The
last two are new.
These entropies are illustrated in Figure \ref{fig:pairwise_ent}.
For the last two choices, $\HH_\ell$ is strictly concave over $\triangle^d$ and the
associated prediction function $\widehat{\y}_{-\HH_\ell}(\s)$ is typically
sparse, although we are not aware of a closed-form expression.

\begin{figure}[t]
\centering
\includegraphics[width=\textwidth]{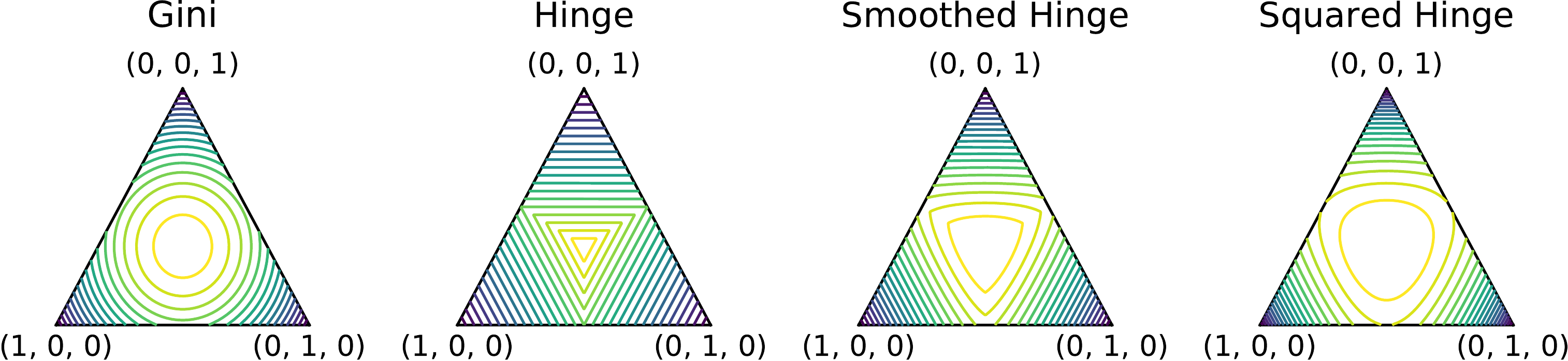}
\caption{Contour comparison of Gini index $\HHt_2$ and pairwise hinge entropies
$\HH_\ell$.}
\label{fig:pairwise_ent}
\end{figure}

\paragraph{And back to Fenchel-Young loss.}

A natural idea is then to construct a Fenchel-Young from $\HH_\ell$.
The resulting loss is convex by construction.
Let us assume for simplicity that $\ell$ achieves its minimum at $0$, 
implying $\HH_\ell(\e_k)=0$.
Combining \eqref{eq:fy_loss_multiclass2} and \eqref{eq:H_L}, we obtain
\begin{align}
L_{-\HH_\ell}(\s; \e_k)
&= (-\HH_\ell)^*(\s) - \theta_k \\
&= \max_{\p \in \triangle^d} \DP{\s}{\p} + \HH_\ell(\p) - \theta_k \\
&= \max_{\p \in \triangle^d} \min_{\s' \in \RR^d} \sum_{i=1}^d p_i (\ell(\s';
\e_i) + \theta_i) - \theta_k.
\end{align}
By weak duality, we have for every (possibly non-convex) $\ell$
\begin{align}
L_{-\HH_\ell}(\s; \e_k)
&\le  \min_{\s' \in \RR^d} \max_{\p \in \triangle^d} \sum_{i=1}^d p_i (\ell(\s';
\e_i) + \theta_i) - \theta_k \\
&= \min_{\s' \in \RR^d} \max_{i \in [d]} \ell(\s'; \e_i) + \theta_i - \theta_k.
\end{align}
We can see $(\s; \e_k) \mapsto \max_{i \in [d]} \ell(\s'; \e_i) + \theta_i -
\theta_k$ as a family of cost-augmented hinge losses \citep{structured_hinge}
parametrized by $\s'$. Hence, $L_{-\HH_\ell}(\s; \e_k)$ is upper-bounded by the
tightest loss in this family.

If $\ell$ is convex, strong duality holds and we obtain equality
\begin{align}
L_{-\HH_\ell}(\s; \e_k)
= \min_{\s' \in \RR^d} \max_{i \in [d]} \ell(\s'; \e_i) + \theta_i - \theta_k.
\end{align}

\paragraph{Margin.} 

We can easily upper-bound the margin of $L_{-\HH_\ell}$.
Indeed, assuming $\ell(\s; \e_i)$ is upper-bounded by $m$ for all $\s$ and $i
\in [d]$, we get
\begin{equation}
0 \le L_{-\HH_\ell}(\s; \e_k)
\le \max_{i \in [d]} \ell(\s; \e_i) + \theta_i - \theta_k
\le \max_{i \in [d]} m + \theta_i - \theta_k.
\end{equation}
Hence, from Definition \ref{def:margin}, we have $\mathrm{margin}(L_{-\HH_\ell})
\le m$.  Note that if
$\mathrm{margin}(L_{-\HH_\ell}) \le \mathrm{margin}(\ell)$,
then $L_{-\HH_\ell}(\s; \e_k) = 0 \Rightarrow \ell(\s; \e_k) = 0$.

\section{Proofs}

\subsection{Proof of Proposition \ref{prop:prediction_func}}
\label{appendix:proof_prediction_func}

\paragraph{Effect of a permutation.}

Let $\Omega(\bmu)$ be symmetric.  We first prove that $\Omega^*$ is
symmetric as well. Indeed, we have
\begin{equation}
\Omega^*(\bm{P}\s) 
= \sup_{\bmu \in \dom(\Omega)} (\bm{P} \s)^\top \bmu - \Omega(\bmu) 
= \sup_{\bmu \in \dom(\Omega)} \s^\top \bm{P}^\top
\bmu - \Omega(\bm{P}^\top \bmu) = \Omega^*(\s).  
\end{equation}
The last equality was obtained by
a change of variable $\bmu' = \bm{P}^\top \bmu$, from which $\bmu$ is
recovered as $\bmu = \bm{P}\bmu'$, which proves $\nabla \Omega^*(\bm{P} \bmu)
= \bm{P} \nabla \Omega^*(\bmu)$.  

\paragraph{Order preservation.}

Since $\Omega^*$ is convex, the gradient operator $\nabla
\Omega^*$ is monotone, i.e.,
\begin{equation}
(\s' - \s)^{\top} (\bmu'-\bmu) \ge 0
\end{equation}
for any
$\s, \s' \in \RR^d$, $\bmu=\nabla \Omega^*(\s)$ and $\bmu'=\nabla
\Omega^*(\s')$.  Let $\s'$ be obtained from $\s$ by swapping two coordinates,
i.e., $\ss_j'=\ss_i$, $\ss_i'=\ss_j$, and $\ss_k'=\ss_k$ for any $k \notin
\{i,j\}$. Then, since $\Omega$ is symmetric, we obtain:
\begin{eqnarray}
2(\ss_j - \ss_i)(\mu_j-\mu_i)\ge 0,
\end{eqnarray}
which implies $\ss_i > \ss_j \Rightarrow \mu_i \ge \mu_j$ and $\mu_i > \mu_j
\Rightarrow \ss_i \ge \ss_j$. To fully prove the claim, we need to show that the last
inequality is strict: to do this, we simply invoke
$\nabla \Omega^*(\bm{P} \bmu) = \bm{P} \nabla \Omega^*(\bmu)$
with a matrix $\bm{P}$ that permutes $i$ and $j$, from which we must have $\ss_i
= \ss_j \Rightarrow \mu_i = \mu_j$. 

\paragraph{Approximation error.} 

Let $\y^\star \coloneqq \argmax_{\y \in \cY}
\DP{\s}{\y}$ and $\bmu^\star \coloneqq \argmax_{\bmu \in \dom(\Omega)}
\DP{\s}{\bmu} - \Omega(\bmu)$. If $f$ is a differentiable
$\gamma$-strongly convex function with unique minimizer $\bmu^\star$, then
\begin{equation}
\frac{\gamma}{2} \|\y - \bmu^\star\|^2_2 
\le f(\y) - f(\bmu^\star) 
\quad \forall \y \in \dom(f).
\end{equation}
We assume that $\cY \subseteq \dom(\Omega)$ so it suffices to upper-bound
$f(\y^\star) - f(\bmu^\star)$, where $f(\bmu) \coloneqq \Omega(\bmu) -
\DP{\s}{\bmu}$.
Since $L \le \Omega(\bmu) \le U$ for all $\bmu \in \dom(\Omega)$, we have
$\DP{\s}{\bmu^\star} \ge -f(\bmu^\star) + L$ and
$-f(\y^\star) + U \ge \DP{\s}{\y^\star}$. Together with
$\DP{\s}{\y^\star} \ge \DP{\s}{\bmu^\star}$, this implies $f(\y^\star) -
f(\bmu^\star) \le U - L$. Therefore, $\frac{1}{2} \|\y^\star - \bmu^\star\|^2
\le \frac{U-L}{\gamma}$.

\paragraph{Temperature scaling \& constant invariance.} 

These immediately follow from properties of the $\argmax$ operator.


\subsection{Proof of Proposition \ref{prop:Bregman_div}}
\label{appendix:proof_Bregman_div}

We set $\Omega \coloneqq \Psi + I_\cC$.

\paragraph{Bregman projections.}

If $\Psi$ is Legendre type, then $\nabla \Psi(\nabla \Psi^*(\s)) = \s$ for
all $\s \in \interior(\dom(\Psi^*))$. Using this and our assumption that
$\dom(\Psi^*)=\RR^d$, we get for all $\s \in \RR^d$:
\begin{equation}
B_\Psi(\p \| \nabla \Psi^*(\s)) 
= \Psi(\p) - \DP{\s}{\p} + \DP{\s}{\nabla \Psi^*(\s)} - \Psi(\nabla
\Psi^*(\s)).
\label{eq:Bregman_div_nabla_Omega_conj}
\end{equation}
The last two terms are independent of $\p$ and therefore
\begin{equation}
\yHatOmega(\s)
= \argmax_{\p \in \cC} \DP{\s}{\p} - \Psi(\p)
= \argmin_{\p \in \cC} B_\Psi(\p \| \nabla \Psi^*(\s)),
\end{equation}
where $\cC \subseteq \dom(\Psi)$.
The r.h.s.\ is the
Bregman projection of $\nabla \Psi^*(\s)$ onto $\cC$.

\paragraph{Difference of Bregman divergences.}

Let $\p = \yHatOmega(\s)$.
Using \eqref{eq:Bregman_div_nabla_Omega_conj}, we obtain
\begin{align}
B_\Psi(\y \| \nabla \Psi^*(\s)) - 
B_\Psi(\p \| \nabla \Psi^*(\s))
&= \Psi(\y) - \DP{\s}{\y} + \DP{\s}{\p} - \Psi(\p)
\\
&= \Omega(\y) - \DP{\s}{\y} + \Omega^*(\s) \\
&= L_{\Omega}(\s; \y),
\label{eq:Bregman_diff}
\end{align}
where we assumed $\y \in \cC$ and $\cC \subseteq \dom(\Psi)$,
implying $\Psi(\y) = \Omega(\y)$.

If $\cC = \dom(\Psi)$ (i.e., $\Omega = \Psi$), then $\p = \nabla
\Psi^*(\s)$ and $B_\Psi(\p \| \nabla \Psi^*(\s)) = 0$. We thus get
the \textbf{composite form} of Fenchel-Young losses
\begin{equation}
B_\Omega(\y \| \nabla \Omega^*(\s))    
= B_\Omega(\y \| \yHatOmega(\s))    
= L_\Omega(\s; \y).
\end{equation}

\paragraph{Bound.}

Let $\p = \yHatOmega(\s)$. Since $\p$ is the Bregman projection of $\nabla
\Psi^*(\s)$ onto $\cC$,
we can use the well-known Pythagorean theorem for
Bregman divergences (see, e.g., \citet[Appendix A]{bregman_clustering}) to
obtain for all $\y \in \cC \subseteq \dom(\Psi)$:
\begin{equation}
B_\Psi(\y \| \p) + B_\Psi(\p \| \nabla \Psi^*(\s))
\le B_\Psi(\y \| \nabla \Psi^*(\s)).
\end{equation}
Using \eqref{eq:Bregman_diff}, we obtain for all $\y \in \cC$:
\begin{equation}
    0 \le B_\Psi(\y \| \p) \le L_{\Omega}(\s; \y).
\end{equation}

Since $\Omega$ is a l.s.c.\ proper convex
function, from Proposition \ref{prop:fy_losses}, we immediately get
\begin{equation}
\p = \y 
\Leftrightarrow
L_{\Omega}(\s; \y) = 0
\Leftrightarrow
B_\Psi(\y \| \p) = 0.
\end{equation}

\subsection{Proof of Proposition \ref{prop:generalized_entropy}}
\label{appendix:proof_generalized_entropy}

The two facts stated in Proposition \ref{prop:generalized_entropy} ($\HH$ is
always non-negative and maximized by the uniform distribution) follow directly
from Jensen's inequality.  Indeed, for all $\p \in \triangle^d$:
\begin{itemize}

\item $\HH(\p) \ge \sum_{j=1}^d p_j \HH(\e_j) = 0$; 

\item $\HH(\mathbf{1}/ d) = \HH\left(\sum_{\bm{P} \in \mathcal{P}} \frac{1}{d!}
\bm{P}\p \right) \ge \sum_{\bm{P} \in \mathcal{P}} \frac{1}{d!}
\HH(\bm{P} \p) = \HH(\p)$,

\end{itemize}
where $\mathcal{P}$ is the set of $d \times d$ permutation matrices. 
Strict concavity ensures that $\p = \ones / d$ is the unique maximizer.

\subsection{Proof of Proposition~\ref{prop:margin}}\label{appendix:proof_margin}

We start by proving the following lemma. 
\vspace{0.5em}
\begin{lemma}\label{prop:inverse_map}
Let $\HH$ satisfy assumptions A.1--A.3. 
Then:
\begin{enumerate}
\item We have $\s \in \partial (-\HH)(\e_k)$ iff $\ss_k = (-\HH)^*(\s)$. That is:
\begin{equation}
\partial (-\HH)(\e_k) = \{\s \in \RR^d \colon \ss_k \ge \DP{\s}{\p} + \HH(\p), \,\,\forall \p \in \triangle^d\}.
\end{equation}
\item If $\s \in \partial (-\HH)(\e_k)$, 
then, we also have $\s' \in \partial (-\HH)(\e_k)$ 
for any $\s'$ such that $\ss_k' = \ss_k$ and 
$\ss_i' \le \ss_i$, for all $i \ne k$.
\end{enumerate}
\end{lemma}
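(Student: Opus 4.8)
The plan is to obtain both parts from the Fenchel--Young equality characterizing subgradients, combined with the normalization in A.1. The key tool is the standard fact that, for any function $f$ with conjugate $f^*$, one has $\s \in \partial f(\bmu)$ if and only if $f(\bmu) + f^*(\s) = \DP{\s}{\bmu}$: the inequality $f(\bmu) + f^*(\s) \ge \DP{\s}{\bmu}$ always holds by definition of the conjugate, and $\s \in \partial f(\bmu)$ is precisely the assertion that it is attained with equality. I would apply this with $f = -\HH$, understood as extended to $+\infty$ outside $\triangle^d$, so that $f^*$ coincides with the conjugate $(-\HH)^*(\s) = \sup_{\p \in \triangle^d} \DP{\s}{\p} + \HH(\p)$ used throughout the paper.

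For the first part, I would instantiate this equality at $\bmu = \e_k$. Since $\DP{\s}{\e_k} = \ss_k$ and A.1 gives $(-\HH)(\e_k) = -\HH(\e_k) = 0$, the membership $\s \in \partial(-\HH)(\e_k)$ reduces exactly to $(-\HH)^*(\s) = \ss_k$, which is the first displayed claim. To recover the explicit set description, I would observe that evaluating the supremum defining $(-\HH)^*$ at $\p = \e_k$ already gives $(-\HH)^*(\s) \ge \ss_k$; hence the equality $(-\HH)^*(\s) = \ss_k$ is equivalent to the reverse inequality $(-\HH)^*(\s) \le \ss_k$, that is, to $\ss_k \ge \DP{\s}{\p} + \HH(\p)$ for every $\p \in \triangle^d$, which is the stated characterization.

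For the second part, I would work directly from this set description and the nonnegativity of the entries of any $\p \in \triangle^d$. Given $\s \in \partial(-\HH)(\e_k)$ and $\s'$ with $\ss_k' = \ss_k$ and $\ss_i' \le \ss_i$ for $i \ne k$, the bound $\DP{\s'}{\p} = \ss_k p_k + \sum_{i \ne k} \ss_i' p_i \le \ss_k p_k + \sum_{i \ne k} \ss_i p_i = \DP{\s}{\p}$ follows since $p_i \ge 0$. Combining this with part 1 yields $\DP{\s'}{\p} + \HH(\p) \le \DP{\s}{\p} + \HH(\p) \le \ss_k = \ss_k'$ for all $\p \in \triangle^d$, so $\s'$ satisfies the characterization and therefore belongs to $\partial(-\HH)(\e_k)$.

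Since everything rests on the Fenchel--Young equality and A.1, there is no genuine analytic obstacle; the only point needing care is the bookkeeping that identifies $\partial(-\HH)(\e_k)$ and $(-\HH)^*$ with the convex subdifferential and conjugate of the extended function $-\HH + I_{\triangle^d}$, ensuring the supremum is over $\triangle^d$. Note that strict concavity (A.2) and symmetry (A.3) play no role here and are reserved for the results that build on this lemma.
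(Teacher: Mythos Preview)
Your proof is correct and follows essentially the same route as the paper: both use the Fenchel--Young equality at $\e_k$ together with A.1 for part~1, and the coordinate-wise monotonicity of $\DP{\cdot}{\p}$ (from $\p\ge 0$) for part~2. Your presentation is in fact slightly cleaner, handling general $k$ directly rather than reducing to $k=1$, and your remark that A.2--A.3 are unused here is accurate.
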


\paragraph{Proof of the lemma:} 
Let $\Omega = -\HH$. 
From Proposition~\ref{prop:prediction_func} (order preservation), we can consider $\partial \Omega(\e_1)$ without loss of generality, in which case any $\s \in \partial \Omega(\e_1)$ satisfies $\ss_1 = \max_{j} \ss_j$. 
We have $\s \in \partial \Omega(\e_1)$ iff 
$\Omega(\e_1) = \DP{\s}{\e_1} - \Omega^*(\s) = \ss_1 - \Omega^*(\s)$. 
Since  $\Omega(\e_1)=0$, 
we must have $\ss_1 = \Omega^*(\s) \ge \sup_{\p\in \triangle^d} \DP{\s}{\p} - \Omega(\p)$, which proves part 1. 
To see 2, note that we have $\ss_k'=\ss_k \ge \DP{\s}{\p} - \Omega(\p) \ge \DP{\s'}{\p} - \Omega(\p)$, for all $\p \in \triangle^d$, from which the result follows. \hfill $\blacksquare$

We now proceed to the proof of Proposition~\ref{prop:margin}. 
Let $\Omega = -\HH$, and suppose that $L_{\Omega}$ has the separation margin property. Then, $\s = m\e_1$ satisfies the margin condition $\ss_1 \ge m + \max_{j\ne 1} \ss_j$, hence $L_{\Omega}(m\e_1, \e_1) = 0$. 
From the first part of Proposition~\ref{prop:fy_losses}, this implies $m\e_1 \in \partial \Omega(\e_1)$. 

Conversely, 
let us assume that $m\e_1 \in \partial \Omega(\e_1)$.
From the second part of Lemma~\ref{prop:inverse_map}, 
this implies that 
$\s \in \partial \Omega(\e_1)$ for 
any $\s$ such that $\ss_1 = m$ and $\ss_i \le 0$ for all $i\ge 2$; and more generally we have $\s + c\mathbf{1} \in \partial \Omega(\e_1)$. 
That is, any $\s$ with $\ss_1 \ge m + \max_{i\ne 1} \ss_i$ satisfies 
$\s \in \partial \Omega(\e_1)$. From Proposition~\ref{prop:fy_losses}, 
this is equivalent to $L_{\Omega}(\s;\e_1) = 0$. 

Let us now determine the margin of $L_{\Omega}$, i.e., the 
smallest $m$ such that 
$m\e_1 \in \partial \Omega(\e_1)$.  
From Lemma~\ref{prop:inverse_map}, 
this is equivalent to 
$m \ge m \pp_1 - \Omega(\p)$ for any $\p \in \triangle^d$, i.e., 
$\frac{-\Omega(\p)}{1-\pp_1} \le m$. 
Note that by Proposition~\ref{prop:prediction_func} the ``most competitive'' $\p$'s are sorted as $\e_1$, so we may write $\pp_1 = \|\p\|_\infty$ without loss of generality. The margin of $L_{\Omega}$ is the smallest possible such margin, given by \eqref{eq:margin}.

\subsection{Proof of Proposition~\ref{prop:full_simplex}}\label{appendix:proof_full_simplex}

Let us start by showing that conditions 1 and 2 are equivalent. 
To show that 2 $\Rightarrow$ 1, take an arbitrary $\p \in \triangle^d$. From Fenchel-Young duality and the Danskin's theorem, we have that $\nabla(-\HH)^*(\s) = \p \,\, \Rightarrow \,\, \s \in \partial(-\HH)(\p)$, which implies the subdifferential set is non-empty everywhere in the simplex. Let us now prove that 1 $\Rightarrow$ 2. Let $\Omega = -{\HH}$, and assume that $\Omega$ has non-empty subdifferential everywhere in $\triangle^d$.  
We need to show that for any $\p \in \triangle^d$, there is some $\s \in \RR^d$ such that 
$\p \in \argmin_{\p' \in \triangle^d} \Omega(\p') - \DP{\s}{\p'}$. 
The Lagrangian associated with this minimization problem is:
\begin{equation}
\mathcal{L}(\p, \bm{\mu}, \lambda) = \Omega(\p) - \DP{\s + \bm{\mu}}{\p} + \lambda (\mathbf{1}^\top \p - 1).
\end{equation}
The KKT conditions are:
\begin{eqnarray}
\left\{
\begin{array}{l}
0 \in \partial_p \mathcal{L}(\p, \bm{\mu}, \lambda) = 
\partial \Omega(\p) - \s - \bm{\mu} + \lambda \mathbf{1}\\
\DP{\p}{\bm{\mu}} = 0\\
\p \in \triangle^d, \,\, \bm{\mu} \ge 0.
\end{array}
\right.
\end{eqnarray}
For a given $\p \in \triangle^d$, we seek $\s$ such that 
$(\p, \bm{\mu}, \lambda)$ are a solution to the KKT conditions 
for some $\bm{\mu} \ge 0$ and $\lambda \in \RR$. 

We will show that such $\s$ exists by simply choosing 
$\bm{\mu}=\mathbf{0}$ and $\lambda=0$. 
Those choices are dual feasible and guarantee that the slackness complementary condition is satisfied. 
In this case, we have from the first condition that 
$\s \in \partial \Omega(\p)$. 
From the assumption that $\Omega$ has non-empty subdifferential in all the simplex, we have that for any $\p \in \triangle^d$ we can find a 
$\s \in \RR^d$ such that $(\p, \s)$ are a dual pair, i.e., $\p = \nabla \Omega^*(\s)$, which proves that $\nabla \Omega^*(\RR^d) = \triangle^d$.  

Next, we show that condition $1 \Rightarrow 3$. Since $\partial (-\HH)(\p) \ne \varnothing$ everywhere in the simplex, we can take an arbitrary $\s \in \partial (-\HH)(\e_k)$. From  Lemma~\ref{prop:inverse_map}, item 2, we have that $\s' \in \partial (-\HH)(\e_k)$ for $\ss'_k = \ss_k$ and $\ss'_j = \min_\ell \ss_\ell$; since $(-\HH)^*$ is shift invariant, we can without loss of generality have $\ss' = m\e_k$ for some $m>0$, which implies from Proposition~\ref{prop:margin} that $L_\Omega$ has a margin. 

Let us show that, if $-\HH$ is separable, then $3 \Rightarrow 1$, which establishes equivalence between all conditions 1, 2, and 3. 
From Proposition~\ref{prop:margin}, the existing of a separation margin implies that there is some $m$ such that $m\e_k \in \partial (-\HH)(\e_k)$. 
Let $\HH(\p) = \sum_{i=1}^d h(p_i)$, with $h:[0,1]\rightarrow \mathbb{R}_+$ concave. Due to assumption A.1, $h$ must satisfy $h(0)=h(1)=0$. 
Without loss of generality, suppose $\p = [\tilde{\p}; \mathbf{0}_k]$, where  $\tilde{\p} \in \relint(\triangle^{d-k})$ and $\mathbf{0}_k$ is a vector with $k$ zeros. 
We will see that there is a vector $\bm{g} \in \RR^d$ such that 
$\bm{g} \in \partial (-\HH)(\p)$, i.e., satisfying 
\begin{equation}
-\HH(\p') \ge -\HH(\p) + \langle \bm{g}, \p'-\p \rangle, \quad \forall \p' \in \triangle^d. 
\label{eq:subgrad_ineq}
\end{equation}
Since $\tilde{\p} \in \relint(\triangle^{d-k})$, we have $\tilde{p}_i \in ]0,1[$
for $i \in \{1,\ldots,d-k\}$, hence $\partial (-h) (\tilde{p}_i)$ must be
nonempty, since $-h$ is convex and $]0,1[$ is an open set. We show that the following $\bm{g} = (g_1, \ldots, g_d) \in \RR^d$ is a subgradient of $-\HH$ at $\p$:
\begin{equation*}
g_i = \left\{
\begin{array}{ll}
\partial (-h) (\tilde{p}_i), & \text{$i=1,\ldots,d-k$}\\
m, &  \text{$i=d-k+1,\ldots,d$}.
\end{array}
\right.
\end{equation*}
By definition of subgradient, we have 
\begin{equation}
-\psi(p_i') \ge -\psi(\tilde{p}_i) + \partial (-h) (\tilde{p}_i)(p_i'-\tilde{p}_i), \quad \text{for $i=1,\ldots,d-k$}.
\label{eq:subgrad_ineq1}
\end{equation}
Furthermore, since $m$ upper bounds the separation margin of $\HH$, we have from Proposition~\ref{prop:margin} that 
$m \ge \frac{\HH([1-p_i', p_i', 0, \ldots, 0])}{1 - \max\{1-p_i', p_i'\}} =
\frac{h(1-p_i') + h(p_i')}{\min\{p_i', 1-p_i'\}} \ge \frac{h(p_i')}{p_i'}$ for any $p_i' \in ]0,1]$. Hence, we have 
\begin{equation}
-\psi(p_i') \ge -\psi(0) - m(p_i' - 0), \quad \text{for $i=d-k+1,\ldots,d$}.\label{eq:subgrad_ineq2}
\end{equation}
Summing all inequalities in Eqs.~\eqref{eq:subgrad_ineq1}--\eqref{eq:subgrad_ineq2}, we obtain the expression in Eq.~\eqref{eq:subgrad_ineq}, which finishes the proof.

\subsection{Proof of Proposition~\ref{prop:margin_separable}}\label{appendix:proof_margin_separable}

Define $\Omega = -\HH$.
Let us start by writing the margin expression \eqref{eq:margin} as a unidimensional optimization problem. This is done by noticing that the max-generalized entropy problem constrained to $\max(\p)=1-t$ gives $\p=\left[1-t, \frac{t}{d-1}, \ldots, \frac{t}{d-1}\right]$, for $t \in \left[0, 1-\frac{1}{d}\right]$ by a similar argument as the one used in Proposition~\ref{prop:generalized_entropy}. We obtain:
\begin{equation}
\mathrm{margin}(L_{\Omega}) = \sup_{t \in \left[0, 1-\frac{1}{d}\right]} \frac{-\Omega\left(\left[1-t, \frac{t}{d-1}, \ldots, \frac{t}{d-1}\right]\right)}{t}.
\end{equation}

We write the argument above as $A(t) = \frac{-\Omega(\e_1 + t\bm{v})}{t}$, where $\bm{v} := [-1, \frac{1}{d-1}, \ldots, \frac{1}{d-1}]$. 
We will first prove that $A$ is decreasing in $[0, 1-\frac{1}{d}]$, which implies that the supremum (and the margin) equals $A(0)$.  
Note that we have the following expression for the derivative of any function $f(\e_1 + t\bm{v})$:
\begin{equation}
(f(\e_1 + t\bm{v}))' = \bm{v}^\top \nabla f(\e_1 + t\bm{v}).
\end{equation}
Using this fact, we can write the derivative $A'(t)$ as:
\begin{equation}
A'(t) = \frac{-t\bm{v}^\top \nabla \Omega(\e_1 + t\bm{v}) + \Omega(\e_1 + t\bm{v})}{t^2} := \frac{B(t)}{t^2}.
\end{equation}
In turn, the derivative $B'(t)$ is:
\begin{eqnarray}
B'(t) &=& -\bm{v}^\top \nabla \Omega(\e_1 + t\bm{v}) -t(\bm{v}^\top\nabla \Omega(\e_1 + t\bm{v}))' + \bm{v}^\top \nabla \Omega(\e_1 + t\bm{v}) \nonumber\\
&=& -t(\bm{v}^\top\nabla \Omega(\e_1 + t\bm{v}))' \nonumber\\
&=& -t\bm{v}^\top\nabla\nabla \Omega(\e_1 + t\bm{v})\bm{v} \nonumber\\
&\le& 0,
\end{eqnarray}
where we denote by $\nabla\nabla \Omega$ the Hessian of $\Omega$, 
and used the fact that it is positive semi-definite, due to the convexity of $\Omega$. 
This implies that $B$ is decreasing, hence for any $t \in [0,1]$, $B(t) \le B(0) = \Omega(\e_1) = 0$, where we used the fact 
$\|\nabla \Omega(\e_1)\| < \infty$, assumed as a condition of Proposition~\ref{prop:full_simplex}. 
Therefore, we must also have $A'(t) = \frac{B(t)}{t^2} \le 0$ for any $t \in [0,1]$, 
hence $A$ is decreasing, and $\sup_{t \in [0, 1-1/d]} A(t) = \lim_{t\rightarrow 0+} A(t)$. 
By L'H\^opital's rule:
\begin{eqnarray}
\lim_{t\rightarrow 0+} A(t) &=& \lim_{t\rightarrow 0+} (-\Omega(\e_1 + t\bm{v}))' \nonumber\\
&=& -\bm{v}^\top \nabla \Omega(\e_1) \nonumber\\
&=& {\nabla_1 \Omega(\e_1)} - \frac{1}{d-1}\sum_{j\ge 2}{\nabla_j \Omega(\e_1)}\nonumber\\ 
&=& {\nabla_1 \Omega(\e_1)} - {\nabla_2 \Omega(\e_1)},
\end{eqnarray} 
which proves the first part. 

If $\Omega$ is separable, then 
$\nabla_j \Omega(\p) = -h'(\pp_j)$, in particular
${\nabla_1\Omega(\e_1)} = -h'(1)$ and 
${\nabla_2\Omega(\e_1)} = -h'(0)$, yielding 
$\mathrm{margin}(L_{\Omega}) = h'(0) - h'(1)$. 
Since $h$ is twice differentiable, this equals $-\int_{0}^{1} h''(t)dt$, completing the proof.

\subsection{Proof of Proposition~\ref{prop:structured_margin}}\label{appendix:proof_structured_margin}

We start by proving the following lemma, which generalizes Lemma~\ref{prop:inverse_map}. 
\vspace{0.5em}
\begin{lemma}\label{prop:inverse_map_structured}
Let $\Omega$ be convex. 
Then:
\begin{enumerate}
\item We have 
$\partial \Omega(\y) = \{\s \in \RR^d \colon \DP{\s}{\y} - \Omega(\y) \ge \DP{\s}{\bmu} - \Omega(\bmu), \,\,\forall \bmu \in \conv(\cY)\}$.
\item If $\s \in \partial \Omega(\y)$, 
then, we also have $\s' \in \partial \Omega(\y)$ 
for any $\s'$ such that $\DP{\s'-\s}{\y'} \le \DP{\s'-\s}{\y}$, for all $\y' \in \cY$.
\end{enumerate}
\end{lemma}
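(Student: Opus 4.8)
The plan is to treat both parts as direct consequences of the definition of the subdifferential, mirroring the structure of Lemma~\ref{prop:inverse_map} but now with $\dom(\Omega)=\conv(\cY)$ in place of the simplex. Throughout I would use that $\Omega$ is convex with $\dom(\Omega)=\conv(\cY)$, so that $\Omega^*(\s)=\sup_{\bmu\in\conv(\cY)}\DP{\s}{\bmu}-\Omega(\bmu)$.

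For part~1, I would start from the defining inequality $\s\in\partial\Omega(\y)\iff \Omega(\bmu)\ge\Omega(\y)+\DP{\s}{\bmu-\y}$ for all $\bmu\in\conv(\cY)$. Rearranging yields $\DP{\s}{\y}-\Omega(\y)\ge\DP{\s}{\bmu}-\Omega(\bmu)$ for every $\bmu\in\conv(\cY)$, which is exactly the claimed set. Equivalently, this says the Fenchel--Young inequality \eqref{eq:fenchel_young_inequality} holds with equality, $\Omega(\y)+\Omega^*(\s)=\DP{\s}{\y}$, i.e.\ the supremum defining $\Omega^*(\s)$ is attained at $\bmu=\y$. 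This part is essentially bookkeeping.

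The content is in part~2. Given $\s\in\partial\Omega(\y)$ and an $\s'$ with $\DP{\s'-\s}{\y'}\le\DP{\s'-\s}{\y}$ for every \emph{vertex} $\y'\in\cY$, the first step is to extend this inequality from the vertices to the whole polytope: since any $\bmu\in\conv(\cY)$ is a convex combination $\bmu=\sum_{\y'}p(\y')\,\y'$, linearity of the inner product gives $\DP{\s'-\s}{\bmu}\le\DP{\s'-\s}{\y}$, that is $\DP{\s'-\s}{\y-\bmu}\ge 0$ for all $\bmu\in\conv(\cY)$. Then I would verify the part~1 characterization for $\s'$ directly: for arbitrary $\bmu\in\conv(\cY)$,
\[
\big(\DP{\s'}{\y}-\Omega(\y)\big)-\big(\DP{\s'}{\bmu}-\Omega(\bmu)\big)
=\underbrace{\big(\DP{\s}{\y}-\Omega(\y)-\DP{\s}{\bmu}+\Omega(\bmu)\big)}_{\ge 0\ \text{by part 1 applied to }\s}
+\underbrace{\DP{\s'-\s}{\y-\bmu}}_{\ge 0\ \text{by the extended hypothesis}}.
\]
Both summands are nonnegative, so the left-hand side is $\ge 0$ for all $\bmu\in\conv(\cY)$, which by part~1 (now applied to $\s'$) gives $\s'\in\partial\Omega(\y)$.

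The main obstacle, such as it is, will be conceptual rather than technical: being careful that the subdifferential inequality is quantified over $\dom(\Omega)=\conv(\cY)$ rather than all of $\RR^d$, and that the hypothesis in part~2 is stated only over the finite vertex set $\cY$ whereas the conclusion requires control over the entire convex hull. The vertex-to-hull extension via convexity is the one genuinely necessary step; everything else is a rearrangement of the Fenchel--Young equality condition.
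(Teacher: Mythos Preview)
Your proposal is correct and follows essentially the same approach as the paper: part~1 is the Fenchel--Young equality condition, and part~2 combines the subgradient inequality for $\s$ with the vertex hypothesis extended by convex combination to all of $\conv(\cY)$. Your additive decomposition into two nonnegative terms is in fact a cleaner presentation of the same chain of inequalities the paper writes out.
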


\paragraph{Proof of the lemma:} 
The first part comes directly from the definition of Fenchel conjugate. 
To see the second part, note that, if $\s \in \partial \Omega(\y)$, then 
$\DP{\s'}{\y} - \Omega(\y) = \DP{\s'-\s}{\y} + \DP{\s}{\y} - \Omega(\y) \ge \DP{\s'-\s}{\y} + \DP{\s}{\bmu} - \Omega(\bmu)$ for every $\bmu \in \conv(\cY)$. Let $\bmu = \sum_{\y'\in\cY} p(\y') \y'$ for some distribution $\p \in \triangleY$. 
Then, we have $\DP{\s'-\s}{\y} + \DP{\s}{\bmu}  - \Omega(\bmu) = \DP{\s'-\s}{\y} + \sum_{\y'\in\cY} p(\y') \DP{\s}{\y'}  - \Omega(\bmu) \ge \DP{\s'-\s}{\y} + \sum_{\y'\in\cY} p(\y') (\DP{\s'}{\y'} - \DP{\s'-\s}{\y}) - \Omega(\bmu) = \DP{\s'}{\bmu} - \Omega(\bmu)$.

We now proceed to the proof of Proposition~\ref{prop:structured_margin}. 
Suppose that $L_{\Omega}$ has the separation margin property, and let $\s = m\y$. Then, for any $\y' \in \cY$, we have 
$\DP{\s}{\y'} + \frac{m}{2}\|\y-\y'\|^2 = m\DP{\y}{\y'} + \frac{m}{2}\|\y-\y'\|^2 = \frac{m}{2}\|\y\| + \frac{m}{2}\|y'\|^2 = mr^2 = \DP{m\y}{\y}=\DP{\s}{\y}$, which implies that $L_{\Omega}(\s,\y)=0$. From the first part of Proposition~\ref{prop:fy_losses}, this implies $m\y \in \partial \Omega(\y)$. 

Conversely, 
let us assume that $m\y \in \partial \Omega(\y)$.
From Lemma~\ref{prop:inverse_map_structured}, 
this implies that 
$\s \in \partial \Omega(\y)$ for 
any $\s$ such that $\DP{\s-m\y}{\y'} \le \DP{\s-m\y}{\y}$ for all $\y'\in \cY$. 
Therefore, we have $\DP{\s}{\y} - mr^2 \ge \DP{\s}{\y'} - m\DP{\y}{\y'}$, that is, $\DP{\s}{\y} = \DP{\s}{\y'} + \frac{m}{2}\|\y-\y'\|^2$. That is, any such $\s$ satisfies 
$\s \in \partial \Omega(\y)$. From Proposition~\ref{prop:fy_losses}, 
this is equivalent to $L_{\Omega}(\s;\y) = 0$. 

Let us now determine the margin of $L_{\Omega}$, i.e., the 
smallest $m$ such that 
$m\y \in \partial \Omega(\y)$.  
From Lemma~\ref{prop:inverse_map_structured}, 
this is equivalent to 
$mr^2 - \Omega(\y) \ge \DP{m\y}{\bmu} - \Omega(\bmu)$ for any $\bmu \in \conv(\cY)$, i.e., 
$\frac{-\Omega(\bmu)+\Omega(\y)}{r^2-\DP{\y}{\bmu}} \le m$, which leads to the expression \eqref{eq:structured_margin}.

\subsection{Proof of Proposition~\ref{prop:root_finding}}
\label{appendix:proof_root_finding}

Let $\Omega(\p) = \sum_{j=1}^d g(p_j) + I_{\triangle^d}(\p)$, where
$g \colon [0,1]\rightarrow\RR_+$ is a
non-negative, strictly convex, differentiable function.
Therefore, $g'$ is strictly monotonic on $[0,1]$, thus invertible.
We show how computing $\nabla(\Omega)^*$ reduces to finding the root of
a monotonic scalar function, for which efficient algorithms 
are available.

From strict convexity and the definition of the convex conjugate,
\begin{equation}
    \nabla(\Omega)^*(\s) = \argmax_{\p\in\triangle^d}
    \DP{\p}{\s} - \sum_j g(\pp_j).
\end{equation}
The constrained optimization problem above has Lagrangian
\begin{equation}
\mathcal{L}(\p, \bm{\nu}, \thresh) \coloneqq 
\sum_{j=1}^d g(\pp_j) - \DP{\s + \bm{\nu}}{\p} + \thresh (\mathbf{1}^\top \p - 1).
\end{equation}
A solution
$(\p^\star, \bm{\nu}^\star, \thresh^\star)$
must satisfy the KKT conditions
\begin{empheq}[left=\empheqlbrace]{align}
    g'(\pp_j) - \ss_j - \nu_j + \thresh &=0 \qquad \forall j \in [d]
        \label{eqn:stationarity_separable}\\
    \DP{\p}{\bm{\nu}}&= 0
        \label{eqn:complementary_slack_separable}\\
    \p \in \triangle^d, \,\, \bm{\nu}&\ge 0.
\end{empheq}
Let us define
\[
    \thresh_{\min} \coloneqq \max(\s) - g'(1)
    \quad \text{and} \quad
    \thresh_{\max} \coloneqq \max(\s) - g'\left(\frac{1}{d}\right).
\]
Since $g$ is strictly convex, $g'$ is increasing and so
$\thresh_{\min}<\thresh_{\max}$. For any $\thresh \in
[\thresh_{\min}, \thresh_{\max}]$,
we construct $\bm{\nu}$ as
\[
    \nu_j \coloneqq \begin{cases}
        0, & \ss_j - \thresh \geq g'(0) \\
        g'(0) - \ss_j + \thresh, & \ss_j - \thresh < g'(0) \\
    \end{cases}
\]
By construction, $\nu_j \geq 0$, satisfying dual feasibility.
Injecting $\nu$ into \eqref{eqn:stationarity_separable}
and combining the two cases, we obtain
\begin{equation}\label{eq:bisect-station-each}
    g'(\pp_j) = \max\{\ss_j - \thresh,~g'(0)\}.
\end{equation}

We show that i) the stationarity conditions have a unique solution given
$\thresh$, and ii) $[\thresh_{\min}, \thresh_{\max}]$ forms a
sign-changing bracketing interval, and thus contains $\tau^\star$, which can
then be found by one-dimensional search. The solution verifies all KKT conditions,
thus is globally optimal.

\paragraph{Solving the stationarity conditions.}
Since $g$ is strictly convex, its derivative $g'$ is continuous and strictly
increasing, and is thus a one-to-one mapping between $[0, 1]$ and
$[g'(0), g'(1)]$. Denote by $(g')^{-1} \colon [g'(0), g'(1)] \rightarrow [0, 1]$
its inverse. If $\ss_j - \thresh \ge g'(0)$,
we have 
\begin{equation}
\begin{aligned}
    g'(0) \leq 
g'(\pp_j) = \ss_j - \thresh 
                                    &\leq \max(\s) - \thresh_{\min} \\
                                    &= \max(\s) - \max(\s) + g'(1) \\
                                    &= g'(1). \\
\end{aligned}
\end{equation}
Otherwise,
$g'(\pp_j) = g'(0)$. This verifies that the r.h.s.\ of \eqref{eq:bisect-station-each}
is always within the domain of $(g')^{-1}$. We can thus apply the inverse to both
sides to solve for $\pp_j$, obtaining
\begin{equation}\label{eq:bisectprimaldual}
    \pp_j(\thresh) = (g')^{-1}(\max\{\ss_j - \thresh,~g'(0)\}).
\end{equation}
Strict convexity implies the optimal $\p^\star$ is unique; it can be seen
that $\tau^\star$ is also unique. Indeed, assume optimal $\tau^\star_1,
\tau^\star_2$. Then, $\p(\tau^\star_1) = \p(\tau^\star_2)$, so
$\max(\s-\tau^\star_1, g'(0)) =  \max(\s-\tau^\star_2, g'(0))$.
This implies either $\tau^\star_1 = \tau^\star_2$, or $\s -
\tau^\star_{\{1,2\}} \leq g'(0)$, in which case $\p=\zeros \notin \triangle^d$,
which is a contradiction.

\paragraph{Validating the bracketing interval.}

Consider the primal infeasibility function
$\phi(\thresh) \coloneqq \DP{\p(\thresh)}{\ones} - 1$;
$\p(\thresh)$ is primal feasible iff $\phi(\thresh)=0.$
We show that $\phi$ is decreasing on $[\thresh_{\min}, \thresh_{\max}]$, and that
it has opposite signs at the two extremities.
From the intermediate value theorem, the unique
root $\thresh^\star$ must satisfy $\thresh^\star\in[\thresh_{\min}, \thresh_{\max}]$.

Since $g'$ is increasing, so is $(g')^{-1}$.
Therefore, for all $j$, $\pp_j(\thresh)$ is decreasing, and so is the
sum $\phi(\thresh) = \sum_j \pp_j(\thresh) - 1$. It remains to check the signs
at the boundaries.
\begin{equation}
\begin{aligned}
    \sum_i \pp_i(\thresh_{\max})
    &= \sum_i (g')^{-1}(\max\{\ss_i - \max(\s) +
    g'\left(\nicefrac{1}{d}\right),~g'(0)\})\\
    &\leq d ~ (g')^{-1}(\max\{g'\left(\nicefrac{1}{d}\right),~g'(0)\})\\
    &= d ~ (g')^{-1}\left(g'\left(\nicefrac{1}{d}\right)\right) = 1,\\
\end{aligned}
\end{equation}
where we upper-bounded each term of the sum by the largest one. At the other end,
\begin{equation}
\begin{aligned}
    \sum_i \pp_i(\thresh_{\min})
    &= \sum_i (g')^{-1}(\max\{\ss_i - \max(\s) + g'(1),~g'(0)\})\\
    &\geq (g')^{-1}(\max\{g'(1),~g'(0)\})\\
    &= (g')^{-1}(g'(1)) = 1,\\
\end{aligned}
\end{equation}
using that a sum of non-negative terms is no less than its largest term.
Therefore, $\phi(\thresh_{\min}) \ge 0$ and $\phi(\thresh_{\max}) \le 0$.  This
implies that there must exist $\thresh^\star$ in $[\thresh_{\min}, \thresh_{\max}]$
satisfying $\phi(\thresh^\star)=0$.
The corresponding triplet $\left(\p(\thresh^\star), \bm{\nu}(\thresh^\star),
\thresh^\star\right)$ thus satisfies all of the KKT conditions, confirming that
it is the global solution.

Algorithm~\ref{algo:bisect} is an example of a bisection algorithm for finding
an approximate solution; more advanced root finding methods can also be used. We
note that the resulting algorithm resembles the method provided
in \cite{bregmanproj}, with a non-trivial difference being the order of the 
thresholding and $(-g)^{-1}$ in Eq.~\eqref{eq:bisectprimaldual}. 

\begin{center}
\begin{minipage}[t]{0.5\linewidth}
\begin{algorithm}[H]
    \caption{Bisection for $\yHatOmega(\s) = \nabla\Omega^*(\s)$}
\begin{algorithmic}
    \STATE \textbf{Input:} $\s \in \RR^d$,
    $\Omega(\p)= I_{\triangle^d} + \sum_i g(\pp_i)$
\STATE $\p(\thresh) \coloneqq (g')^{-1}(\max\{\s - \thresh,g'(0)\})$
\STATE $\phi(\thresh) \coloneqq \DP{\p(\thresh)}{\ones} - 1$
\STATE $\thresh_{\min} \leftarrow \max(\s) - g'(1)$;
\STATE $\thresh_{\max} \leftarrow \max(\s) - g'\left(\nicefrac{1}{d}\right)$
\STATE  $\thresh \leftarrow (\thresh_{\min} + \thresh_{\max}) / 2$
\STATE \textbf{while} $|\phi(\thresh)| > \epsilon$
\STATE \hspace{0.4cm}\textbf{if} $\phi(\thresh) < 0$\quad$\thresh_{\max}\leftarrow\thresh$ 
\STATE \hspace{0.4cm}\textbf{else}\hspace{45pt}$\thresh_{\min}\leftarrow\thresh$ 
\STATE \hspace{0.35cm} $\thresh \leftarrow (\thresh_{\min} + \thresh_{\max}) / 2$
\STATE \textbf{Output:} $\nabla \yHatOmega(\s) \approx \p(\thresh)$
\end{algorithmic}
\label{algo:bisect}
\end{algorithm}
\end{minipage}
\end{center}

\subsection{Proof of Proposition \ref{prop:pairwise_entropies}}
\label{appendix:proof_pairwise_entropies}

We first need the following lemma.
\vspace{0.5em}
\begin{lemma}
Let $\ell(\s; \e_k)$ be defined as
\begin{equation}
\ell(\s; \e_k) \coloneqq
\begin{cases}
    \sum_j c_{k,j} \phi(\ss_j) & \mbox { if } \s^\top \ones = 0 \\
\infty & \mbox{ o.w. }
\end{cases},
\end{equation}
where $\phi \colon \RR \to \RR_+$ is convex. 
Then, $\HH_\ell$ defined in \eqref{eq:H_L} equals
\begin{equation}
-\HH_\ell(\p) 
= \min_{\tau \in \RR} \sum_{j} 
(\p^\top \cc_j) \phi^*\left(\frac{-\tau}{\p^\top \cc_j}\right).
\end{equation}
\label{lemma:pairwise_entropy}
\end{lemma}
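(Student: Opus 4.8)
The plan is to recognize $\HH_\ell(\p)$ as the optimal value of a convex program with a single affine constraint and to dualize that constraint, the multiplier becoming exactly the scalar $\tau$. First I would substitute the definition of $\ell$ into \eqref{eq:H_L}: since $\ell(\s;\e_k)$ is $+\infty$ off the hyperplane $\{\s : \DP{\s}{\ones}=0\}$, the infimum is effectively restricted to that hyperplane, and swapping the two finite sums gives
\begin{equation}
\HH_\ell(\p) = \inf_{\s \colon \DP{\s}{\ones}=0} \sum_{j} \Big(\sum_k p_k c_{k,j}\Big)\, \phi(\ss_j) = \inf_{\s \colon \DP{\s}{\ones}=0} \sum_j w_j\, \phi(\ss_j),
\end{equation}
where I abbreviate $w_j \coloneqq \DP{\p}{\cc_j} = \sum_k p_k c_{k,j} \ge 0$. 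The objective $F(\s) \coloneqq \sum_j w_j \phi(\ss_j)$ is separable and convex (as $\phi$ is convex and $w_j \ge 0$), and the feasible set is the subspace $\ones^\perp$.

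Next I would dualize the equality constraint. Forming the Lagrangian $\mathcal{L}(\s,\tau) = \sum_j w_j \phi(\ss_j) + \tau \DP{\s}{\ones}$, strong duality for this convex problem with a single affine constraint gives $\HH_\ell(\p) = \sup_\tau \inf_\s \mathcal{L}(\s,\tau)$. The inner infimum separates coordinatewise, and for each $j$ with $w_j > 0$, the definition of the Fenchel conjugate yields
\begin{equation}
\inf_{\ss_j \in \RR}\big(w_j \phi(\ss_j) + \tau \ss_j\big) = -\,w_j \sup_{\ss_j}\Big(\tfrac{-\tau}{w_j}\ss_j - \phi(\ss_j)\Big) = -\,w_j\, \phi^*\!\Big(\tfrac{-\tau}{w_j}\Big).
\end{equation}
Summing over $j$ and taking the supremum over $\tau$ yields $\HH_\ell(\p) = -\inf_\tau \sum_j w_j \phi^*(-\tau/w_j)$, which is the claimed identity after negating, provided the outer infimum is attained (so that it is a $\min$).

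An equivalent, perhaps cleaner route is pure Fenchel duality: writing $\HH_\ell(\p) = -(F + I_{\ones^\perp})^*(\zeros)$ and applying the conjugate-of-a-sum / infimal-convolution formula \eqref{eq:inf_convolution}, one gets $(F+I_{\ones^\perp})^*(\zeros) = \inf_{\bm u} F^*(\bm u) + \sigma_{\ones^\perp}(-\bm u)$. Since the support function of the subspace $\ones^\perp$ is the indicator of $\mathrm{span}(\ones)$, only $\bm u = -\tau\ones$ contribute, and the separable conjugate $F^*(-\tau\ones) = \sum_j w_j \phi^*(-\tau/w_j)$ reproduces the same expression.

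The main obstacle is justifying strong duality together with attainment of the outer optimum, i.e.\ that the displayed infimum over $\tau$ is genuinely a $\min$. Because the only constraint is affine, I would invoke the standard strong-duality theorem for convex programs with linear constraints, which requires merely that $\ones^\perp$ meet $\relint(\dom F)$ and that the primal value be finite; no Slater-type strict feasibility is needed, and for the closed convex losses of interest (hinge, smoothed hinge, squared hinge) $\dom\phi = \RR$ makes the qualification trivial. Attainment of the dual optimum then follows from closedness and coercivity of $\tau \mapsto \sum_j w_j \phi^*(-\tau/w_j)$. A secondary bookkeeping point is the degenerate case $w_j = 0$, where $\ss_j$ enters only through the constraint; consistently one adopts the perspective convention $w_j \phi^*(-\tau/w_j)\big|_{w_j=0} = \lim_{w\to 0^+} w\,\phi^*(-\tau/w)$, equal to the recession term $\sigma_{\dom\phi}(-\tau)$, which can be sidestepped by restricting to $\p$ with $\DP{\p}{\cc_j}>0$ for all $j$ or by a limiting argument.
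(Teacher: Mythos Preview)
Your proposal is correct and follows essentially the same route as the paper: substitute the loss into \eqref{eq:H_L}, swap the two sums to obtain the separable objective $\sum_j (\p^\top\cc_j)\phi(\ss_j)$ under the linear constraint $\DP{\s}{\ones}=0$, introduce a Lagrange multiplier $\tau$, invoke strong duality, and compute the inner minimum coordinatewise via the Fenchel conjugate. The paper's proof is terser (it simply asserts ``strong duality holds'' and ``after routine calculations''), whereas you are more explicit about the qualification conditions, dual attainment, and the degenerate case $\p^\top\cc_j=0$; your alternative infimal-convolution derivation is also not in the paper, but the core argument is the same Lagrangian dualization.
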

\begin{proof}
We want to solve
\begin{equation}
\HH_\ell(\p) 
= \min_{\s^\top \ones = 0} \sum_{j} p_j \ell(\s; \e_j) 
= \min_{\s^\top \ones = 0} \sum_{j} p_j \sum_{i} c_{j,i}
\phi(\ss_i)
= \min_{\s^\top \ones = 0} \sum_{i} (\p^\top \cc_i) \phi(\ss_i),
\end{equation}
where $\cc_i$ is a vector gathering $c_{j,i}$ for all $j$.
Introducing a Lagrange multiplier we get
\begin{equation}
\HH_\ell(\p) 
= \min_{\s \in \RRY} \max_{\tau \in \RR}
\sum_{i} (\p^\top \cc_i) \phi(\ss_i)
+ \tau \sum_i \ss_i.
\end{equation}
Strong duality holds and we can swap the order of the min and max.
After routine calculations, we obtain
\begin{equation}
-\HH_\ell(\p) 
= \min_{\tau \in \RR} \sum_{i} 
(\p^\top \cc_i) \phi^*\left(\frac{-\tau}{\p^\top \cc_i}\right).
\label{eq:pairwise_entropy}
\end{equation}
\end{proof}
We now prove Proposition \ref{prop:pairwise_entropies}.  First, we rewrite
$\ell(\s; \e_k) = \sum_{j \neq k} \phi(\ss_j - \ss_k)$ as $\ell(\s; \e_k) = \sum_{j}
c_{k,j} \phi(\ss_j - \ss_k)$, where we choose $c_{k,j}=0$ if $k=j$, $1$
otherwise, leading to $\p^\top \cc_j = 1 - p_j$.  Because $\phi(\ss_j - \ss_k)$
is shift invariant \wrt $\s$, without loss of generality, we can further rewrite
the loss as $\ell(\s; \e_k) = \sum_{j} c_{k,j} \phi(\ss_j)$ with $\dom(\ell) = \{\s
\in \RRY \colon \s^\top \ones = 0\}$. Hence, Lemma
\ref{lemma:pairwise_entropy} applies.  We now derive closed form for
specific choices of $\phi$.

\paragraph{Hinge loss.} 

When $\phi(t) = [1 + t]_+$, the conjugate is
\begin{equation}
    \phi^*(u) =
    \begin{cases}
        -u & \mbox{ if } u \in [0, 1] \\
        \infty & \mbox{ o.w. }
    \end{cases}.
\end{equation}

The constraint set for $\tau$ is therefore $\cC \coloneqq
\bigcap_{j \in [d]} \left[-\p^\top \cc_j, 0 \right] =
\left[-\displaystyle{\min_{j \in [d]}} ~ \p^\top \cc_j, 0\right]$.
Hence
\begin{equation}
-\HH_\ell(\p) 
= \min_{\tau \in \cC} d \tau
= -d \min_{j \in [d]} \p^\top \cc_j.
\end{equation}
This recovers \citet[Example 5, \S A.6]{duchi_2016} with a simpler proof.  We
next turn to the following new results.

\paragraph{Smoothed hinge loss.} 

We add quadratic regularization to the conjugate \citep{accelerated_sdca}:
\begin{equation}
\phi^*(u) = 
\begin{cases}
    -u + \frac{1}{2} u^2 & \mbox { if } u \in [0, 1] \\
    \infty & \mbox{ o.w. }
\end{cases}.
\end{equation}
Going back to $\phi$, we obtain:
\begin{equation}
\phi(t) = 
\begin{cases}
    0 & \mbox { if } t \le -1 \\
     t + \frac{1}{2} & \mbox { if } t \ge 0 \\
    \frac{1}{2} (1 + t)^2 & \mbox{ o.w. }
\end{cases}.
\label{eq:smoothed_hinge}
\end{equation}
The constraint set for $\tau$ is the same as before,
$\cC = \left[-\displaystyle{\min_{j \in [d]}} ~ \p^\top \cc_j, 0\right]$.

Plugging $\phi^*$ into \eqref{eq:pairwise_entropy}, we obtain
\begin{equation}
-\HH_\ell(\p) 
= \min_{\tau \in \cC} 
\frac{\tau^2}{2} \sum_{j=1}^d \frac{1}{\p^\top \cc_j} + d ~ \tau.
\label{eq:H_L_smoothed_hinge}
\end{equation}

Since the problem is unidimensional, let us solve for $\tau$ unconstrained
first:
\begin{equation}
    \tau = -d \Big / (\sum_{j=1}^d 1 / (\p^\top \cc_j)).
\label{eq:tau_unconstrained}
\end{equation}
We notice that $\tau \le -\min_j \p^\top \cc_j$ for $\cc_j \ge
\zeros$ since
$\sum_j \frac{\min_i \p^\top \cc_i}{\p^\top \cc_j} \in [0,d]$. 
This expression of $\tau$ is not feasible.
Hence the optimal
solution is at the boundary and $\tau^\star = -\min_j \p^\top \cc_j$.
Plugging that expression back into \eqref{eq:H_L_smoothed_hinge} gives the
claimed expression of $\HH_\ell$.

\paragraph{Squared hinge loss.} 

When $\phi(t) = \frac{1}{2} [1 + t]^2_+$, the conjugate is
\begin{equation}
\phi^*(u) = 
\begin{cases}
    -u + \frac{1}{2} u^2 & \mbox { if } u \ge 0 \\
    \infty & \mbox{ o.w. }
\end{cases}.
\end{equation}
The constraint is now $\tau \le 0$. Hence, the optimal solution of the
problem \wrt $\tau$ in \eqref{eq:pairwise_entropy} is now
\eqref{eq:tau_unconstrained} for all $\p \in \triangle^d, \cc_j \ge \zeros$.
Simplifying, we get
\begin{equation}
\HH_\ell(\p) 
= \frac{\frac{1}{2} d^2}{\sum_{j=1}^d 1/(\p^\top \cc_j)}.
\end{equation}

\newpage


\end{document}